\numberwithin{equation}{section}
\newtheorem{theorem}{Theorem}[section]
\newtheorem{definition}{Definition}[section]
\newtheorem{corollary}{Corollary}[section]
\newtheorem{remark}{Remark}[section]
\newtheorem{lemma}{Lemma}[section]
\newcommand{\N}{\mathcal{N}}
\newcommand{\T}{\mathcal{T}}
\newcommand{\I}{\mathcal{I}}
\newcommand{\scpr}[1]{\left\langle #1 \right\rangle}
\newcommand{\bra}[1]{\left( #1 \right)}
\newcommand{\spa}{\text{span}}
\newcommand{\anorm}[1]{\lVert #1 \rVert_{a_{\bfy,h}}}
\DeclareMathOperator\supp{supp}
\newcommand*{\jump}[1]{\ensuremath{ [\![ #1 ]\!] }}
\newcommand{\dx}{\mathrm{d}x}
\newcommand{\ay}{{a_{\bfy,h}}}
\newcommand*{\divg}{\ensuremath{\nabla\cdot}}
\newcommand*{\abs}[2][{}]{\ensuremath{#1\left\lvert#2#1\right\rvert}}
\newcommand*{\norm}[2][{}]{\ensuremath{#1\left\Vert#2#1\right\Vert}}
\newcommand*{\bfu}{\ensuremath{\mathbf{u}}}
\newcommand*{\bfx}{\ensuremath{\mathbf{x}}}
\newcommand*{\bfz}{\ensuremath{\mathbf{z}}}
\newcommand*{\bfy}{\ensuremath{\mathbf{y}}}
\newcommand*{\bfv}{\ensuremath{\mathbf{v}}}
\newcommand*{\bfw}{\ensuremath{\mathbf{w}}}
\newcommand*{\bff}{\ensuremath{\mathbf{f}}}
\newcommand*{\bfk}{{\ensuremath{\boldsymbol{\kappa}}}}
\newcommand*{\Puk}{\ensuremath{P_{U,k}}}
\newcommand*{\uimk}{{\ensuremath{\bfu^k_{\text{img}}}}}
\newcommand*{\uim}{\ensuremath{\bfu_{\text{img}}}}
\newcommand*{\utim}{\ensuremath{\tilde{\bfu}_{\text{img}}}}
\newcommand*{\ubim}{\ensuremath{\bar{\bfu}_{\text{img}}}}
\newcommand*{\utimk}{{\ensuremath{\tilde{\bfu}_{\text{img}}^k}}}
\newcommand*{\ubimk}{{\ensuremath{\bar{\bfu}_{\text{img}}^k}}}
\newcommand*{\wim}{{\ensuremath{\bfw_{\text{img}}}}}
\newcommand*{\kimk}{{\ensuremath{{{\bfk^k_{\bfy}}_{\text{img}}}}}}
\newcommand*{\fimk}{{\ensuremath{{\bff^k_{\text{img}}}}}}
\newcommand*{\stre}{\ensuremath{\mathrm{r}}}
\newcommand*{\jure}{\ensuremath{\mathrm{j}}}
\newcommand*{\no}[1]{\ensuremath{\text{nodes}(#1)}}
\newcommand*{\ed}[1]{\ensuremath{\text{edges}(#1)}}
\newcommand*{\spco}{\ensuremath{\ast^{\text{sp}}}}
\newcommand*{\tbfu}{\ensuremath{\tilde{\mathbf{u}}}}
\newcommand*{\bbfu}{\ensuremath{\bar{\mathbf{u}}}}
\newcommand*{\tbfv}{\ensuremath{\tilde{\mathbf{v}}}}
\newcommand*{\bbfv}{\ensuremath{\bar{\mathbf{v}}}}
\newcommand*{\tbfw}{\ensuremath{\tilde{\mathbf{w}}^k}}
\newcommand*{\bbfw}{\ensuremath{\bar{\mathbf{w}}^k}}
\newcommand*{\img}{\ensuremath{\text{img}}}
\newcommand*{\fem}{\ensuremath{\mathcal{C}}}
\newcommand*{\TVl}[1]{\ensuremath{\mathcal{T}_V^{#1}}}
\newcommand*{\bfeta}{\ensuremath{\boldsymbol{\eta}}}
\newcommand*{\M}{\ensuremath{\mathcal{M}}}
\newcommand*{\Iab}[1]{\ensuremath{\overline{\mathcal{I}_V^{#1}}}}
\newcommand*{\Vab}[1]{\ensuremath{\overline{V^{#1}}}}
\newcommand*{\uab}[2]{\ensuremath{\overline{\bfu_{#1}^{#2}}}}
\newcommand*{\vab}[2]{\ensuremath{\overline{\bfv_{#1}^{#2}}}}
\newcommand*{\Aab}[1]{\ensuremath{\overline{A_\bfy^{#1}}}}
\newcommand*{\tran}{\ensuremath{T}}
\newcommand*{\vimk}{\ensuremath{\bfv_{\text{img}}}^k}
\newcommand*{\FkT}{\ensuremath{{F^k}^\intercal}}
\title{Multilevel CNNs for Parametric PDEs based on Adaptive Finite Elements}
\author{
	\textbf{Janina Enrica Schütte} \\
	Weierstrass Institute for \\ Applied Analysis  and Stochastics \\
	Berlin, Germany \\
	\texttt{schuette@wias-berlin.de}
	\And
	\textbf{Martin Eigel} \\
	Weierstrass Institute for \\ Applied Analysis  and Stochastics \\
	Berlin, Germany \\
	\texttt{eigel@wias-berlin.de}
}
\newtheorem{assump}[theorem]{Assumption}
\begin{document}
\maketitle

\begin{abstract}
A neural network architecture is presented that exploits the multilevel properties of high-dimensional parameter-dependent partial differential equations, enabling an efficient approximation of parameter-to-solution maps, rivaling best-in-class methods such as low-rank tensor regression in terms of accuracy and complexity.
The neural network is trained with data on adaptively refined finite element meshes, thus reducing data complexity significantly.
Error control is achieved by using a reliable finite element a posteriori error estimator, which is also provided as input to the neural network.

The proposed U-Net architecture with CNN layers mimics a classical finite element multigrid algorithm.
It can be shown that the CNN efficiently approximates all operations required by the solver, including the evaluation of the residual-based error estimator.
In the CNN, a culling mask set-up according to the local corrections due to refinement on each mesh level reduces the overall complexity, allowing the network optimization with localized fine-scale finite element data.

A complete convergence and complexity analysis is carried out for the adaptive multilevel scheme, which differs in several aspects from previous non-adaptive multilevel CNN.
Moreover, numerical experiments with common benchmark problems from Uncertainty Quantification illustrate the practical performance of the architecture.

\end{abstract}

\section{Introduction}
In recent years, the intersection of partial differential equations (PDEs) and neural networks has emerged as a powerful and promising field of research.
In a wider sense, this increasingly popular research area is called scientific machine learning (SciML), which strives to make use of modern deep learning methods for the solution of differential equations that are common to model physical processes in engineering and the natural sciences.
Opposite to many tasks in classification or generation of images, videos, sounds or text, the data in SciML typically has specific properties that can be exploited, e.g. regularity or sparsity of functions.
Moreover, data often can be generated synthetically by running (possibly computationally very costly) simulations with classical solvers such as finite elements (FE).

We consider parametric PDEs as a flexible mathematical model to describe real-world phenomena, allowing for the incorporation of variable, stochastic parameters that capture uncertainties and changing properties. Problems of this type have been examined extensively in Uncertainty Quantification (UQ) in recent years.
They can be approached with sampling methods or by computing functional surrogates in different model classes such as low-rank tensors~\cite{eigel2018adaptive,eigel2023adaptive}, by which a larger part of or the entire statistics of the quantity of interest is approximated. Neural network surrogate models in an infinite-dimensional setting have been analyzed, e.g. the DeepONet (deep operator network) architecture in~\cite{chenchen, deeponetError, deeponet, deeponetPINN, exp_deeponet, schwab2019deep}, neural operators based on model reduction in~\cite{stuart2020modelreductionnn}, and the FNO (Fourier neural operator) in~\cite{li2021fourier,JMLR:v22:21-0806} and references therein. In a discretized setting the problem is combined with reduced basis methods in~\cite{gitta, moritz, DALSANTO2020109550}. 
In~\cite{balancedafem, CABOUSSAT2024116784}, adaptively created meshes are used to train a fully connected neural network mapping, the parameter and the point in the physical domain to the evaluation of the corresponding solution.
A multilevel collocation approach to the pPDE problem can be found in~\cite{teckentrupp} and a neural network multilevel method for recovering a quantity of interest is presented in~\cite{lye2020multilevel}.

Many results on NN parameter complexity estimates for function approximation are based on the pivotal work \cite{yarotsky2017error}, where it is shown that NNs with a ReLU activation function are able to efficiently represent polynomials.
In this work (as in~\cite{cosi}) the analysis is based on an approximation of the multiplication operator with a fixed number of trainable parameters independent of the desired accuracy as shown in \cite[Corollary C.3]{mones2021}. Here, it is assumed that the activation function is three times continuously differentiable in a neighborhood of some point with nonzero second derivative, see~\cref{ass: activation function}. Then parameter bounds for an architecture as described in~\cite{schuette2024adaptive} can be derived.

\subsection{Adaptive neural network approach}
In this paper we present an approach to solve parametric PDEs based on training data generated by an adaptive FE discretization.
This is combined with a multilevel neural network (ML-Net) architecture, which mimics a classical multilevel solver and supports local corrections, corresponding to local mesh refinements.

In~\cite{cosi} the ML-Net architecture is derived to approximate the finite element coefficients of the solutions on uniformly refined grids.
We generalize this approach by introducing local corrections with respect to the global discretization mesh with data being generated by an efficient adaptive solver.
We show that CNNs are able to efficiently approximate a posteriori finite element error estimators and construct a culling mask based on the estimations, which only adds parts of the domain where fine scale corrections are needed.
By this, only small parts on each level are considered in the multigrid scheme. As a consequence of this data reduction, in principle much finer meshes (and hence a higher approximation accuracy) can be used for the training.

We consider the \emph{parametric stationary diffusion PDE} (also known as "parametric Darcy problem") with a possibly countably infinite dimensional parameter space $\Gamma\subset \mathbb{R}^{\mathbb{N}}$ and a physical domain $D\subset \mathbb{R}^d, d\in\{1,2\}$.
The objective is to find $u:D \times \Gamma\to \mathbb{R}$ such that 
\begin{align*}
    - \nabla \cdot (\kappa(\cdot,\bfy) \nabla u(\cdot, \bfy)) &= f \qquad \text{on } D,\\
    u(\cdot, \bfy) &=0 \qquad \text{on } \partial D
\end{align*}
for every $\bfy\in\Gamma$ with a parameter field $\kappa: D \times \Gamma \to \mathbb{R}$ and a right-hand side $f\in H^{-1}(D)$.

We propose an adaptive finite element solver for this task. Starting with a coarse uniform triangulation as an initial discretization, the following well-known steps are executed iteratively:
\begin{align}\label{eq: AFEM overview steps}
    \text{ Solve } \to \text{ Estimate } \to \text{ Mark } \to \text{ Refine}.
\end{align}
To solve the PDE in each iteration, we derive a successive subspace correction algorithm (SSC), for which we refer to~\cite{doi:10.1137/1034116, chen}.
The algorithm is based on a multilevel discretization of the domain.
On coarser grids, the amplitude of the functions is larger. Given sufficient regularity, it decreases quickly on finer grids, where higher frequencies of the solution have to be represented. The principle is illustrated in~\Cref{fig:param-to-sol}.
\begin{figure}
    \centering
    {
\begin{tikzpicture}
    \node (kappa) at (0,3.8) {\includegraphics[width=4.3cm]{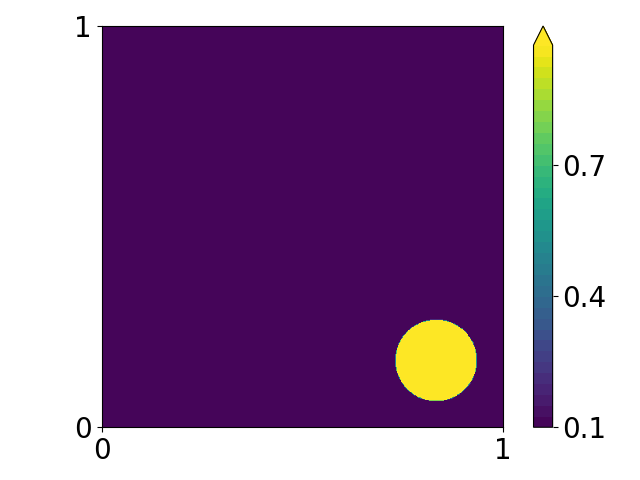}};
    \draw[|->] (2.8,3.8) -- (4.8,3.8) {};
  \node (img4) at (7.55,3.8) {\includegraphics[width=4cm]{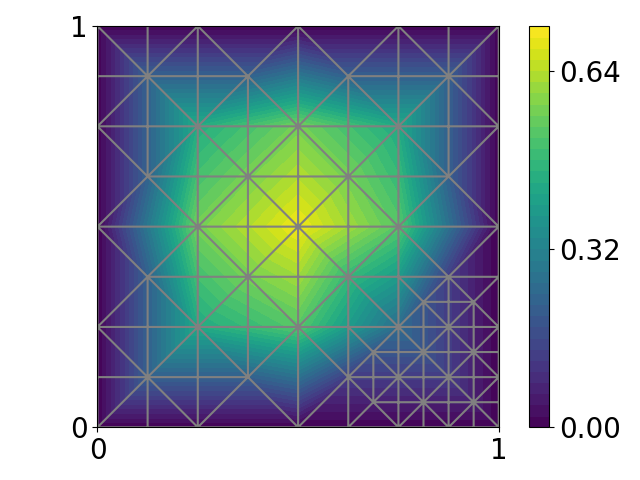}};
  
  \node (img3) at (7.7,0) {\includegraphics[width=4cm]{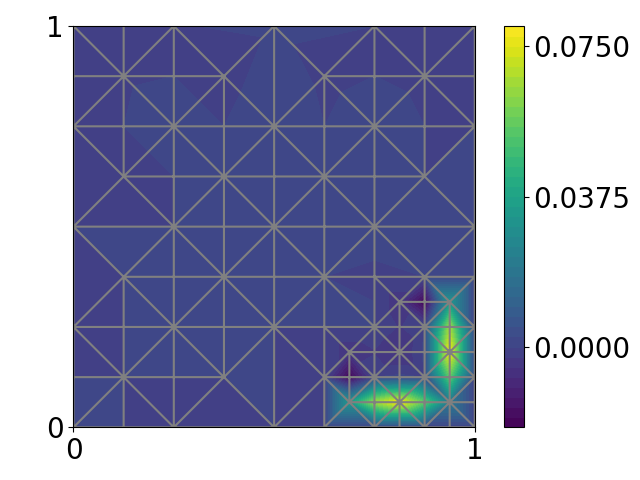}};
  
  \node (img2) at (3.8,0) {\includegraphics[width=4cm]{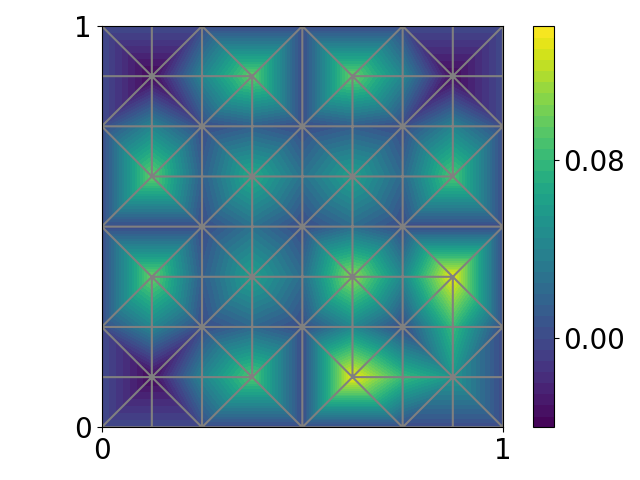}};
  
  \node (img1) at (0,0) {\includegraphics[width=4.2cm]{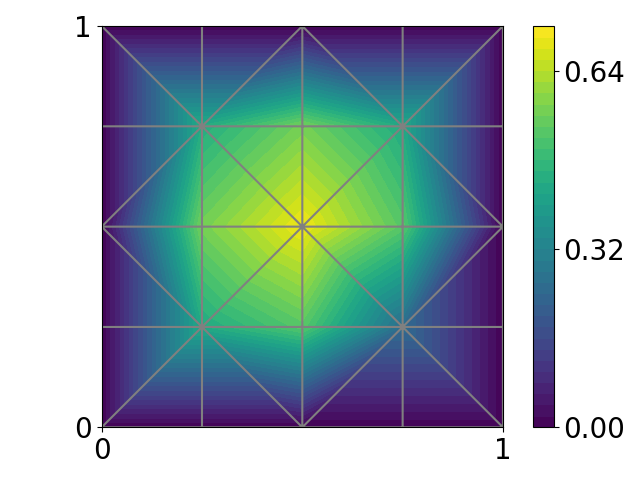}};

  \node (plus1) at (1.8,1.7) {+};
  
  \node (plus2) at (5.7,1.7) {+};
  
    \node (u) at (-2.5,0) {$u(\cdot,\bfy) = $};

  \node (v1) at (0,1.7) {$v_1(\cdot,\bfy)$};
  \node (v1) at (3.6,1.7) {$v_2(\cdot,\bfy)$};
  \node (v1) at (7.4,1.7) {$v_3(\cdot,\bfy)$};
  \node (v1) at (7.7,5.5) {$u(\cdot,\bfy)$};
    \node at (0,5.5) {$\kappa(\cdot,\bfy)$};
\end{tikzpicture}
}
    \caption{The first row depicts the parameter $\kappa$ to solution $u$ map for a realization of the parameter vector $\bfy\in\Gamma$ for~\eqref{eq: darcy linear equation system}. In the second row, the multigrid decomposition of the solution into a coarse grid function $v_1$ and finer grid corrections $v_2,v_3$ is visualized.}
    \label{fig:param-to-sol}
\end{figure}
To estimate the approximation error in the energy norm, a classical residual based finite element error estimator is implemented.
The local error on each triangle $T$ with side length $h_T$ is bounded by
\begin{align*}
    \eta_T^2 \coloneqq h_T^2\norm{f+\nabla\cdot(\kappa(\cdot, \bfy)\nabla u)}_{L_2(T)}^2 + h_T \norm{\jump{\kappa(\cdot, \bfy)\nabla u}}_{L_2(\partial T)}^2.
\end{align*}
Given an estimation of the local error contributions, triangles are selected for refinement e.g. with a D\"orfler or a threshold marking strategy.
The approximation space is enriched by adding nodal basis functions from a uniformly refined grid to the current basis.
The structure of such a non-standard approximation space is illustrated in~\Cref{fig: uniform vs local meshes}.
It can be seen that the constructed space consists of a selection of FE basis functions on different levels, which does not resemble a typical FE space.
\begin{figure}
    \centering\includegraphics[width=0.5\linewidth]{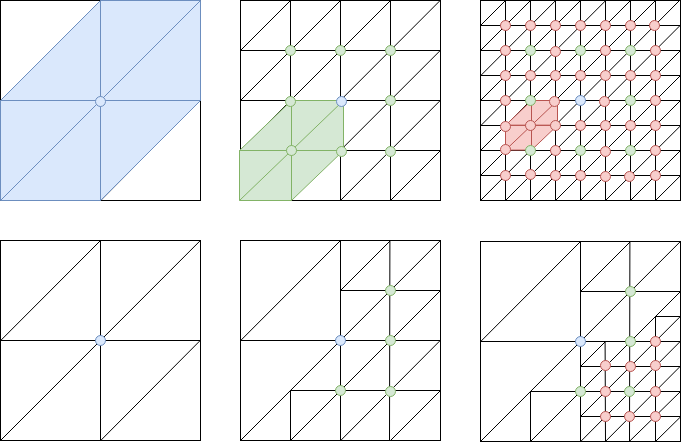}
    \caption{In the top row the support of the considered nodal basis functions on different levels is visualized. Uniformly refined meshes as used in \cite{cosi} are shown in the top row, locally refined meshes as used in this work in the bottom row. The local refinement is realized by using a subset of the nodes in the uniformly refined meshes.}
    \label{fig: uniform vs local meshes}
\end{figure}

We derive a suitable CNN architecture based on U-Nets for the problem and show that the architecture is expressive enough to accurately approximate each step in the adaptive solver in \cref{section: AFEM}. The local refinements are incorporated in a the network by $0/1$-masks. Our main result is summarized as follows.

\begin{theorem}[CNNs can approximate adaptive finite element solvers]
    Assume that $\kappa$ is uniformly bounded from below and above. Let $\varepsilon>0$ and $K, L\in\mathbb{N}$ be the number of iterations of the derived AFEM and the maximal refinements of each triangle, respectively. Consider a threshold marking strategy. Then there exists a CNN $\Psi$ such that the number of parameters of the network is in $\mathcal{O}(LK \log(\varepsilon^{-1})/\log(c_L^{-1}))$ with $c_L \coloneqq \frac{cL}{1+cL}, c>0$.
    Moreover, for any $\bfy\in\Gamma$ the network maps finite element coefficients of the parameter field $\kappa$ to coefficients of a finite element approximation of the solution $u$ of the pPDE~\eqref{eq: darcy linear equation system} such that
    \begin{center}
        $\norm{u(\cdot,\bfy) - \fem(\Psi(\bfk_\bfy, \bff))}_{H^1(D)} \leq \norm{u(\cdot,\bfy) - \fem(\mathrm{AFEM}(V_1, K))}_{H^1(D)} + \varepsilon,$ 
    \end{center}
    where $\mathcal{C}$ maps the finite element coefficients to the corresponding function.
\end{theorem}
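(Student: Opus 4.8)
The plan is to bound the distance between the network output and the AFEM output and then invoke the triangle inequality, so that it suffices to construct $\Psi$ with
\[
\norm{\fem(\mathrm{AFEM}(V_1,K)) - \fem(\Psi(\bfk_\bfy,\bff))}_{H^1(D)} \leq \varepsilon
\]
uniformly in $\bfy\in\Gamma$. The construction emulates each of the four steps in~\eqref{eq: AFEM overview steps} by CNN layers, iterated $K$ times, while tracking how the approximation errors propagate across iterations.

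First I would realize the Solve step. Because the SSC/multigrid solver contracts the energy-norm error geometrically with factor $c_L$ per V-cycle, fixing $m \sim \log(\varepsilon^{-1})/\log(c_L^{-1})$ cycles drives the solver error below the target tolerance. Each cycle is assembled from prolongation and restriction, which are exactly the (fixed) interpolation and transpose stencils between nested FE spaces and are hence realized as fixed convolution kernels on the $L$ levels, together with smoothing and residual updates. The only nonlinearity is the application of the stiffness operator depending on $\kappa$ through products of the form $\kappa\nabla u$; these are reproduced by the multiplication subnetwork of~\cref{ass: activation function}, whose parameter count is fixed independently of the target accuracy. Multiplying the $m$ cycles by the $L$ levels and the $K$ outer iterations, with a constant number of parameters per elementary operation, already yields the claimed bound $\mathcal{O}(LK\log(\varepsilon^{-1})/\log(c_L^{-1}))$.

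Next I would approximate the Estimate step. On the FE coefficient level the local indicator $\eta_T^2 = h_T^2\norm{f+\divg(\kappa\nabla u)}_{L_2(T)}^2 + h_T\norm{\jump{\kappa\nabla u}}_{L_2(\partial T)}^2$ is a piecewise-polynomial expression in the nodal values, so each of the residual and jump contributions is assembled from fixed difference stencils composed with a constant number of the multiplication subnetworks. The Mark and Refine steps are then encoded by $0/1$ culling masks: threshold marking retains the entries with $\eta_T$ above the given value, a componentwise comparison that I would emulate by composing the estimator output with a CNN approximation of the Heaviside function, and the resulting mask gates which fine-scale nodal corrections enter the enriched space on the next level.

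The main obstacle is the discontinuity of the marking decision. A network built from the smooth activation of~\cref{ass: activation function} can only approximate the Heaviside step, so for parameters $\bfy$ whose estimator values fall exactly on the threshold the emulated mask may differ from the one produced by exact AFEM, giving a combinatorially different enriched space and, a priori, an $O(1)$ discrepancy in $H^1$. I would resolve this by sharpening the step approximation to a band of width controlled by $\varepsilon$ (which costs at most an additional $\log(\varepsilon^{-1})$ factor, absorbed into the stated bound) and observing that any triangle lying within this band has indicator value within $\varepsilon$ of the threshold, so refining or not refining it changes the computed approximation by at most $O(\varepsilon)$ in energy. The remaining, more routine task is to distribute the error budget across the $K$ iterations: since the solver is contractive and prolongation and restriction are bounded uniformly in the level, the error fed from one AFEM iteration into the next is not amplified, so choosing per-step tolerances of order $\varepsilon/K$ keeps the accumulated error below $\varepsilon$ while changing the parameter count only through the logarithm.
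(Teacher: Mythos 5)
Your construction of the Solve and Estimate steps matches the paper's proof (multigrid contraction with $m \sim \log(\varepsilon^{-1})/\log(c_L^{-1})$ cycles plus the fixed-size multiplication subnetwork, and stencil-based assembly of the indicator), but your handling of the Mark step contains a genuine gap. You reduce the theorem to the two-sided bound $\norm{\fem(\mathrm{AFEM}(V_1,K)) - \fem(\Psi(\bfk_\bfy,\bff))}_{H^1(D)} \leq \varepsilon$, and to obtain it you argue that a triangle whose indicator lies in an $\varepsilon$-band around the threshold ``changes the computed approximation by at most $O(\varepsilon)$ in energy'' whether or not it is refined. That is false: for a triangle $T$ with $\eta_T^2 \approx \delta_h$, refining versus not refining changes the Galerkin solution by an amount governed by the size of the indicator itself, not by the distance of $\eta_T^2$ from the threshold. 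Indeed, by Galerkin orthogonality for nested spaces $V \subset \tilde V$ one has $\anorm{u_V - u_{\tilde V}}^2 = \anorm{u-u_V}^2 - \anorm{u-u_{\tilde V}}^2$, and by efficiency of the estimator this gap can be of order $\delta_h$, which is a fixed parameter of the marking strategy independent of $\varepsilon$. So once a single near-threshold decision flips, the CNN trajectory and the exact $\mathrm{AFEM}$ trajectory diverge by $O(\delta_h^{1/2})$ in energy, and your per-step budget of $\varepsilon/K$ cannot absorb this. No sharpening of the Heaviside approximation repairs this, because the obstruction is the sensitivity of the target $\mathrm{AFEM}$ map itself, not the smoothness of the network.

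The paper circumvents exactly this point by exploiting the one-sidedness of the claimed inequality: it never compares the network's space with the AFEM space in both directions. Its marking subnetwork subtracts the \emph{lowered} threshold $\delta_k - \varepsilon_{\text{eta}}$ (where $\varepsilon_{\text{eta}}$ bounds the estimator approximation error) and applies a Heaviside activation, which guarantees $M^k \leq \Psi_{\text{mark}}(\tilde\eta_k^2)$ entrywise, i.e. the CNN marks a \emph{superset} of the exactly marked triangles. Consequently the CNN's spaces satisfy $\tilde V^k \supset V^k$, and since the Galerkin projection onto a larger nested space is at least as accurate in the energy norm (hence in $H^1$ up to the equivalence constants from $\mathfrak{c},\mathfrak{C}$), it holds that $\norm{u - \fem(\bfu_{\tilde V})}_{H^1(D)} \leq \norm{u - \fem(\bfu_{V})}_{H^1(D)}$. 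Over-refinement is therefore harmless for the stated bound, and the only additive contribution is the solver/concatenation error $\varepsilon_{\text{cor}}$. To repair your argument, replace the two-sided closeness to $\mathrm{AFEM}$ by this monotonicity argument: lower the threshold by the estimator error, prove superset marking, and conclude via best approximation in nested spaces.
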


The proof is based on the observation that U-Nets are able to approximate a successive subspace algorithm based on a multigrid decomposition. The required interpolation between grids can be achieved efficiently with strided and transpose strided convolutions.
Furthermore, we show that the estimator can be approximated and refinement in each step of the adaptive algorithm.
This is realized by $0/1$-masks multiplied by large images making it possible to work with sparse images for fine grids. 

Since the solution representation is based on local contributions (corresponding to small regions of images with higher resolution), the proposed architecture can significantly improve computational efficiency by exploiting representation sparsity.

\subsection{Main contributions}
A multigrid solver, error estimator and refinement strategy are chosen, such that the corresponding $\mathrm{AFEM}$ can provably be approximated by an introduced CNN architecture.
Complexity bounds for the approximation of the $\mathrm{AFEM}$ are shown for the architecture. In the course of the proof it is shown that CNNs can approximate a multigrid solver on locally refined grids.
Moreover, CNNs can approximate the error estimator and the refinement can be incorporated by a novel error estimator based masking.
This leads to an implicitly adaptive CNN tracking the error of individual outputs by error estimator prediction.
The sparsity introduced by the masks on high resolution grids leads to a smaller number of operations on each grid and a higher accuracy for the same number of nonzero entries.

In contrast to the work carried out in~\cite{cosi}, here the proof relies on actions on only parts of each image in the CNNs computations. Therefore, on each level, the boundary of the subsets has to be considered carefully and different index sets need to be considered in the analysis to always be able to represent all necessary information on each grid.
In practice, the locality translates to manifold sparse convolutions, where kernels are only applied to nonzero entries of each image.

\subsection{Structure of the paper}
After the problem statement and a short finite element introduction, the individual steps of the $\mathrm{AFEM}$ and the algorithm itself are introduced in~\cref{section: FEM}.
The multigrid solver in the algorithm is explained in detail and its convergence is shown in~\cref{section: solver}. \cref{section: CNN} is concerned with the the used data decomposition of continuous and discontinuous finite element functions to represent the solutions and estimators as images in the CNN. Furthermore, types of convolutions are discussed shortly.
The main results, i.e. the expressivity theorems for the solver, the estimator and the whole $\mathrm{AFEM}$ algorithm can be found in~\cref{section: expressivity}.
In~\cref{section: numerics} a numerical test is presented.
Summary and outlook are given in~\cref{section: outlook}.

\section{Finite element discretization and notation}
\label{section: FEM}
To generate data-efficient data, the finite element method (FEM) with an adaptive algorithm (coined AFEM) is used.
This section is concerned with the introduction of this AFEM, which is steered by an a posteriori error estimator.
This forms the basis of the subsequently derived neural network architecture.
A more detailed introduction to finite elements can e.g. be found in \cite{braess2007finite}.

\subsection{Problem setting}
We assume a regular conforming triangulation $\mathcal{T}$ of the (smoothly bounded) domain $D$, e.g. as depicted in \Cref{fig:param-to-sol}. Let $V_h = \spa \{ \varphi_j\}_{j=1}^{\dim V_h} \subset H_0^1(D)$ be a finite-dimensional subspace spanned by conforming first-order (Lagrange) basis functions (a FE function space). 
Any function $v_h\in V_h$ has a representation
\begin{align*}
    v_h = \sum_{i=1}^{\dim V_h} \bfv_i \varphi_i,
\end{align*}
where coefficient vectors with respect to the basis of $V_h$ are written in bold face. Throughout this paper it is assumed that the parameter field fulfills a uniform boundedness assumption $\mathfrak{c}\leq\kappa(x,\bfy)\leq\mathfrak{C}$ for all $x\in D,\bfy \in\Gamma$ and some constants $\mathfrak{c},\mathfrak{C}>0$ independent of $\bfy$.
We are concerned with finding a discrete solution $u_h\in V_h$ of the variational formulation for any $\bfy\in\Gamma$ such that for all test functions $w_h\in V_h$ it holds that
\begin{align}\label{eq: variational darcy}
    a_{\bfy,h}(u_h,w_h) \coloneqq \int_D \kappa_h(\cdot,\bfy)\langle \nabla u_h, \nabla w_h \rangle \dx= \int_D fw_h \dx \eqqcolon \mathfrak{f}(w_h).
\end{align}
This is equivalent to determining $\bfu\in \mathbb{R}^{\dim V_h}$ by solving the algebraic system
\begin{align}\label{eq: darcy linear equation system}
    A_{\bfy}\bfu = \bff
\end{align}
with the right-hand side $\bff \coloneqq (\mathfrak{f}(\varphi_j))_{j=1}^{\dim V_h}$ and discretized operator 
\begin{align}\label{eq: A operator}
    A_\bfy\coloneqq (a_{\bfy, h}(\varphi_i, \varphi_j))_{i,j=1}^{\dim V_h}.
\end{align}

We consider the following norms for $T\subset \mathbb{R}^d$ and $u:\mathbb{R}^d \to \mathbb{R}$
\begin{align*}
    \norm{u}_{L^2(T)}^2 \coloneqq \int_T u^2 \dx,\qquad
    \norm{u}_{H^1(T)}^2 \coloneqq \int_T u^2 \dx + \int_T \scpr{\nabla u, \nabla u} \dx,\qquad
    \anorm{u}^2 \coloneqq a_{\bfy,h}(u,u).
\end{align*}
For $\bfu\in \mathbb{R}^d$, we define the discrete norm
\begin{align*}
    \norm{\bfu}_{A_\bfy}^2 \coloneqq \bfu^\intercal A_\bfy \bfu.
\end{align*}
Note that 
\begin{align*}
    \norm{\bfu}_{A_\bfy}^2 = \sum_{i,j=1}^{\dim V_h} \bfu_i \bfu_j \ay(\varphi_i, \varphi_j) = \ay\left(\sum_{i}^{\dim V_h} \bfu_i \varphi_i,\sum_{j=1}^{\dim V_h} \bfu_j \varphi_j\right) = \ay(u_h, u_h) = \anorm{u_h}.
\end{align*}
Furthermore, we make use of the essential supremum norm $L^\infty$ and the discrete supremum norm $\ell^\infty$.

\subsection{Error estimation}\label{section: estimator}
We recollect the common residual based a posteriori error estimator for the Galerkin solution $u_h$ of the (parametric) Darcy problem in $V_h$ solving \eqref{eq: variational darcy}, cf.~\cite{verfurth,carstensen2012review} and~\cite{EigelGittelson2014asgfem} for the parametric setting. 
\begin{definition}[Jump \& error estimator]\label{def: jump and estimator}
    The \emph{jump} along the edge $\gamma$ between the triangles $T^1,T^2\in \mathcal{T}$ with $\nabla u^{(1)}_h$ and $\nabla u^{(2)}_h$ the gradients on the triangles, respectively, is defined by
    \begin{align}\label{def: jump}
        \jump{ \kappa_h(\cdot, \bfy)\nabla u_h} \coloneqq \kappa_h(\cdot,\bfy) \bra{ \scpr{ \nabla u_h^{(1)}, n_\gamma^{(1)} } + \scpr{ \nabla u_h^{(2)}, n_\gamma^{(2)} }},
    \end{align}
    where $n_\gamma^{(1)}, n_\gamma^{(2)}$ are the normal vectors of $\gamma$ pointing out of the triangles $T^1,T^2$, respectively.
    We define the local error contribution on each triangle $T$ by
    \begin{align}\label{eq: estimator}
        \eta_T^2 \coloneqq h_T^2\norm{f+\nabla\cdot(\kappa_h(\cdot, \bfy)\nabla u_h)}_{L_2(T)}^2 + h_T \norm{\jump{\kappa_h(\cdot, \bfy)\nabla u_h}}_{L_2(\partial T)}^2.
    \end{align}
\end{definition}
We henceforth assume that the data error $\norm{\kappa-\kappa_h}$ is negligible in the used norms.
Then, the estimator is reliable and efficient, i.e. there exist constants $c_\bfy,C$ such that
\begin{align*}
    \anorm{u - u_h}^2 &\leq C\sum_{T\in\T} \eta_T^2 \qquad\quad \text{ and}\\
    \eta_T &\leq c_\bfy \anorm{u - u_h} \text{ \hspace{1ex} for any } T\in\T.
\end{align*}
For the sake of a self-contained presentation, the derivation for the upper bound is recalled in \cref{section:error estimator derivation} while a full analysis of this and other error estimators is carried out in standard references such as~\cite{verfurth,braess2007finite}.

\subsection{Marking}\label{section: marking}
For a complete adaptive finite element scheme as in \eqref{eq: AFEM overview steps} and as discussed in the next subsection, different marking strategies can be considered. A popular marking for which a fixed error convergence of the $\mathrm{AFEM}$ over the degrees of freedom can be shown is the D\"orfler marking strategy \cite{dorfler, adaptivenochetto}.

\begin{definition}[D\"orfler marking]
    Let $\theta\in(0,1)$. Define $\mathcal{M}$ such that 
    \begin{align*}
        \sum_{T\in \mathcal{M}} \eta_T^2 \geq \theta \sum_{T\in\mathcal{T}} \eta_T^2.
    \end{align*}
\end{definition}

Alternatively, a maximum strategy can be considered \cite{Diening_2015}.
When performing a marking decision for each element, access to the estimator for all other elements has to be available.
Since the examined CNN architecture acts only locally on neighbouring elements, these marking strategies hence cannot be implemented and we resort to a threshold marking.

\begin{definition}[Threshold marking]\label{definition: threshold marking}
For $k\in[L]$ let $\delta_h>0$ be thresholds depending on the size of the triangles $h$, e.g. the maximal side length. Mark all elements $T\in\T$ with size $h$ for refinement if $\eta_{T}^2>\delta_h$.
\end{definition}

\subsection{Mesh refinement}\label{section: refinement}
The next step of the $\mathrm{AFEM}$ consists of refining the current mesh in marked areas. 
In this work, in the $L$th step of the AFEM the current space consists of the sum of subspaces of FE spaces on uniformly refined meshes
with nodes $\N_k$ for $k\in[L]$ and corresponding basis functions $\{\varphi^{k}_i\}_{i\in\N_k}$.
To refine a mesh element, all basis functions on a uniformly refined mesh (one level finer than the marked element) with overlapping support to the marked elements are included in $V_h$.
Let $\M = \bigcup_{k=1}^L \M_k$ be the decomposition of the marked elements into sets of elements in the same uniformly refined mesh.
Then the local mesh refinement is given by 
\begin{align*}
    V_h = V_h + \sum_{k=1}^L\spa \{ \varphi^{k+1}_i: \exists T\in \mathcal{M}_k \text{ in level } k \text{ with } \supp \varphi_i^{k+1} \cap T \neq \emptyset \}.
\end{align*}

\subsection[Adaptive finite element method]{Adaptive finite element method ($\mathrm{AFEM}$)}\label{section: AFEM}
Adaptive finite element methods are applied to find quasi optimal representations of PDE solutions by resolving local properties. Classical introductions can e.g. be found in \cite{braess2007finite,verfurth}.
A version of $\mathrm{AFEM}$ used in the present work is depicted in~\cref{alg: AFEM} and visualized in \Cref{fig: AFEM}.
A multigrid solver introduced in~\Cref{section: solver} and the estimator in~\eqref{eq: estimator} are employed to approximate the solution of the Darcy problem~\eqref{eq: darcy linear equation system} adaptively. 

Starting with an initial FE function space $V$ and an initial approximation $\bfu=0$, the following steps are executed iteratively in the solver.
The current solution corresponding to $\bfu$ is interpolated onto the current space $V$ and $\bfu$ is set to the coefficients of the interpolated solutions.
Then, the correction $\bfv$ to the best approximation in the current space is calculated by solving the system of linear equations
\begin{align*}
    A_\bfy \bfv = \bff - A_\bfy \bfu.
\end{align*}
The solver is described in~\cref{section: solver}.
The current solution is updated through $\bfu = \bfu + \bfv$. Local errors are estimated based on the a posteriori error estimator discussed in~\eqref{eq: estimator}. Large errors are marked and the the space $V$ is refined by including all basis elements of the next uniformly refined mesh in the current basis, which have an overlapping support with the marked regions.

To illustrate the practical performance, an example $\mathrm{AFEM}$ error convergence for the benchmark problem described in~\cref{section: numerics} can be compared to solutions on uniformly refined meshes in \Cref{fig: afem decay}.
In addition to the relative errors in the $H^1$ and the $L^2$ norms, the error estimator is plotted.
It can be observed, that the error estimator has the same decay as the true error in the $H^1$ norm as expected.

\begin{algorithm} 
\caption{Adaptive finite element method $\mathrm{AFEM}(\kappa, f, V, K)$}
\label{alg: AFEM}
Set $\bfu = 0$.\\
\For{$K$ iterations}{
Interpolate the current solution in $V$ and set $\bfu$ to its coefficients.\\
Find $\bfv$ such that $A_\bfy \bfv = \bff - A_\bfy \bfu$. (\cref{section: solver})\\
Update $\bfu = \bfu+\bfv$.\\
Estimate the error $\eta^2$. (\cref{section: estimator})\\
Set $L$ to the number of levels of $V$.\\
Mark elements  $\mathcal{M}_k$  on each level $k\in [L]$. (\cref{section: marking})\\
Refine the space $V = V + \sum_{k=1}^L\spa \{ \varphi^{k+1}_i: \exists T\in \mathcal{M}_k \text{ in level } k \text{ with } \supp \varphi_i^{k+1} \cap T \neq \emptyset \}$. (\cref{section: refinement})\\
}
\end{algorithm}

\begin{figure}
    \centering
    \includegraphics[width=\linewidth]{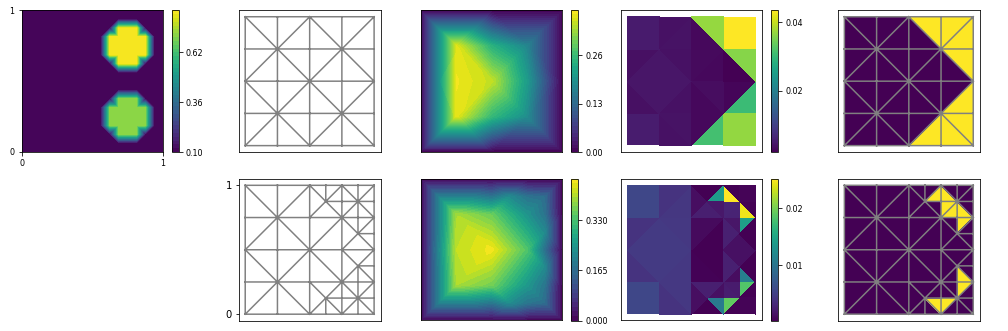}
    \caption{Two iterations of the adaptive finite element method on a unit square are depicted, where the first image on the left is a visualization of a possible parameter field $\kappa(\cdot,\bfy)$. In the rest of the first row, the first mesh, solution, local error estimator and marker are depicted. The second row shows these steps for a locally refined mesh.}
    \label{fig: AFEM}
\end{figure}

\begin{figure}
    \centering
    \includegraphics[width=0.49\linewidth]{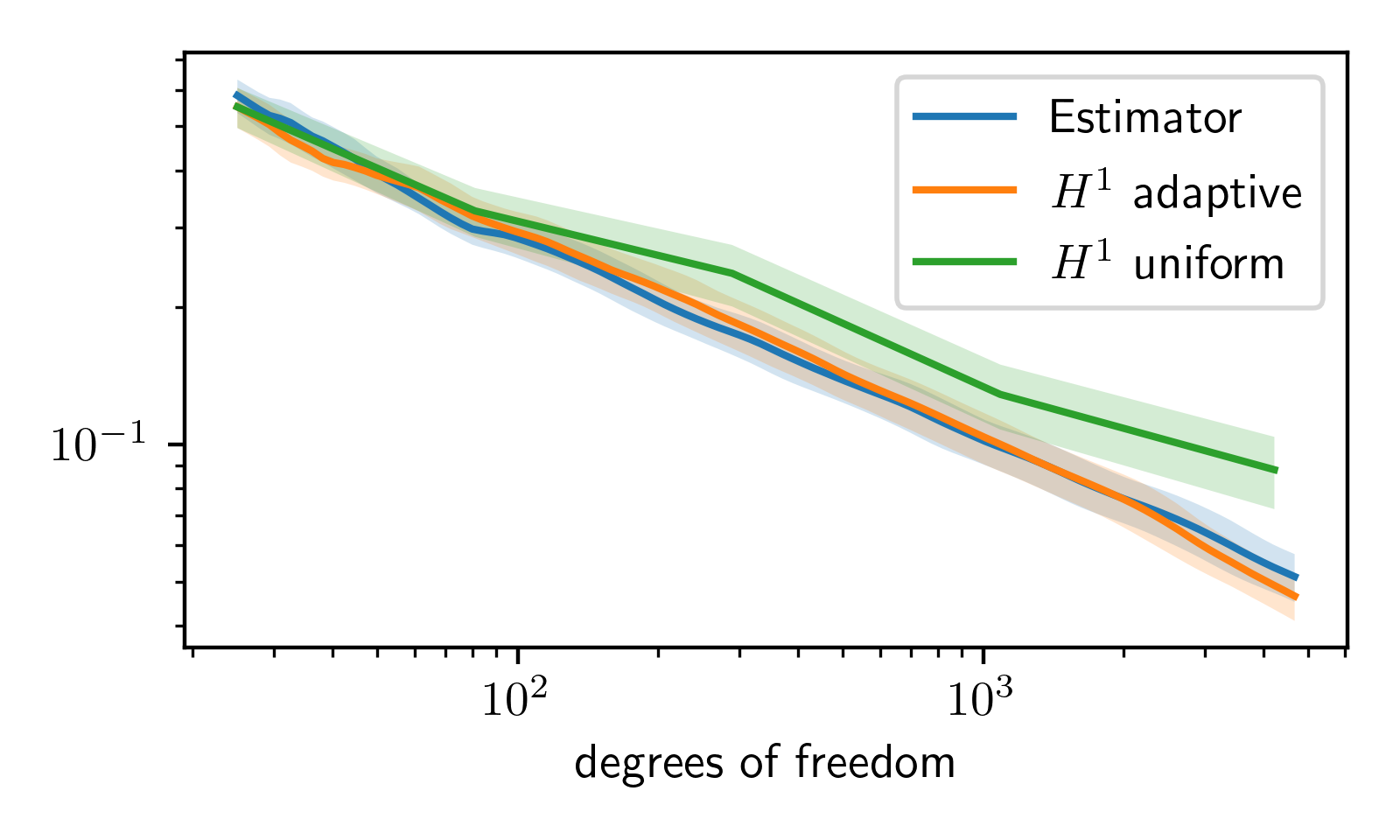} \includegraphics[width=0.5\linewidth]{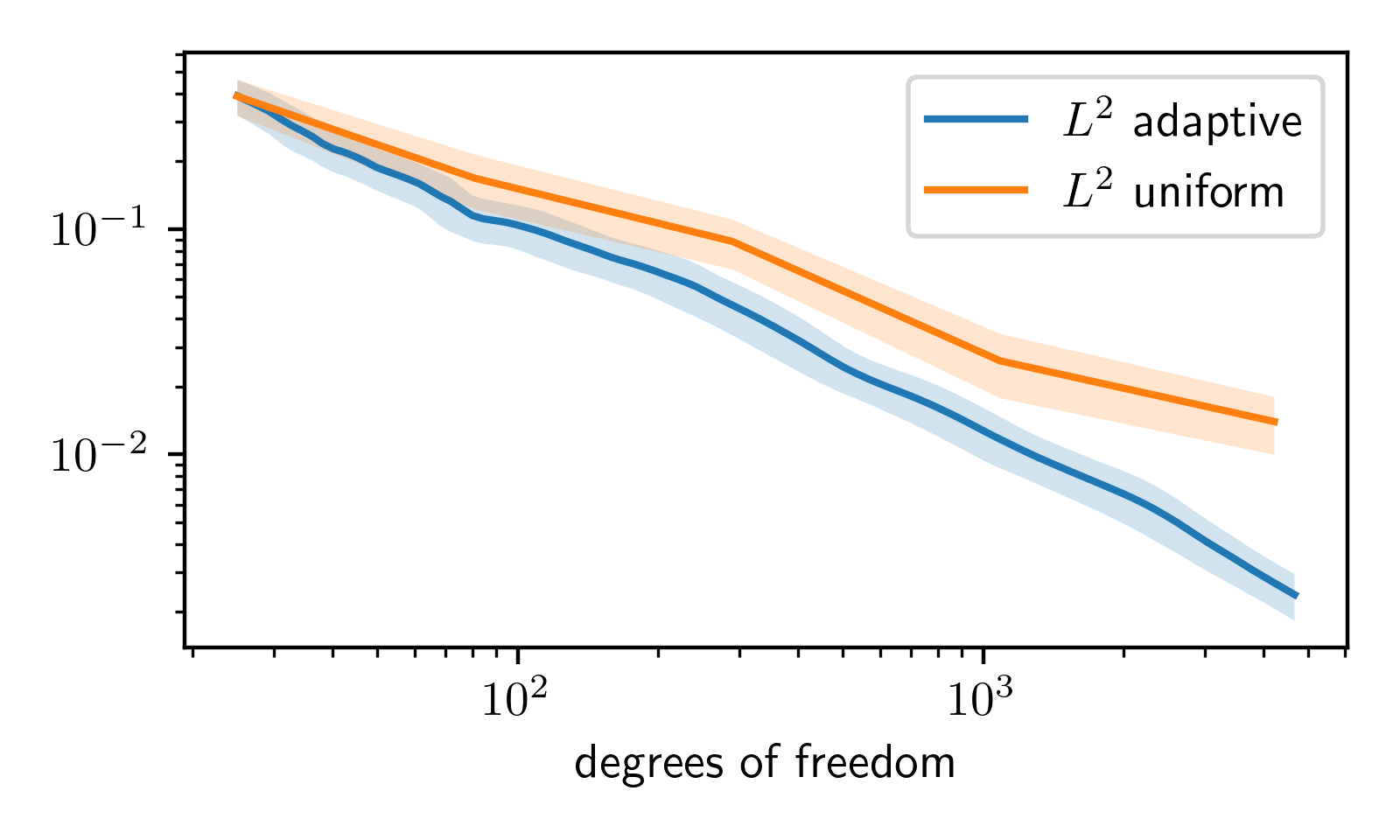}
    \caption{The two plots show the advantage of the $\mathrm{AFEM}$ in terms of degrees of freedom (FE coefficients) compared to solutions on uniformly refined meshes. Here, the mean and variance of the relative $H^1$ (left) and $L^2$ (right) errors of $100$ samples of the problem described in \cref{section: numerics} are plotted for the D\"orfler marking with $\theta=0.1$.}
    \label{fig: afem decay}
\end{figure}

\section{Solving on multiple grids}
\label{section: solver}

In this section, we derive a multigrid algorithm  
on the sum FE subspaces for uniformly refined grids to compute the corrections $\bfv$ in each step of \Cref{alg: AFEM}. It is closely related to classical FEM multigrid solvers, see e.g.~\cite{Hackbusch,hackbusch2013multi,braess2007finite}. Similarly, the convergence analysis is based on the more general framework of \emph{successive subspace correction} (SSC) algorithms, see~\cite{doi:10.1137/1034116, chen}.
It is exactly this algorithm that our CNN multilevel architecture is able to mimic. 
As shown in~\cite{cosi}, an accurate and efficient NN representation of a multigrid solver exists for regular and uniform grids.
In this work, the previous result is extended to locally refined grids and consequently to local multigrid corrections.
While this should lead to a significant complexity improvement of the architecture and the training process, several technical difficulties are inevitably introduced by the locality of the subspace corrections.

We start our considerations with a number of levels $L\in\mathbb{N}$, which corresponds to the current maximal refinement in the step of the $\mathrm{AFEM}$, where the solver is employed. Furthermore, a sequence of uniformly refined unit square grids with the set of nodes $(\mathcal N_\ell)_{\ell=1}^L$ indexed by\footnote{we use the convention $[n]:=\{1,\ldots,n\}$} $i\in[n_\ell] \times [n_\ell] \eqqcolon \I_U^\ell$ and set of triangles $(\T_\ell)_{\ell=1}^L$ is considered.
The corresponding spaces spanned by the piecewise linear nodal hat functions $\varphi^\ell_i: \mathbb{R}^2\to \mathbb{R}$ at nodes $i\in \I_U^\ell$ are denoted by $U^\ell~\coloneqq~\spa~\{\varphi^\ell_i:~i\in \I_U^\ell \}$.

Since we intend to work on locally refined grids, on each level $\ell=1,\dots,L$ only a subset of the index set $\I_V^\ell \subset \I_U^\ell$ and the corresponding triangles $\TVl{\ell}$ are considered.
These indices correspond to the nodal basis functions used in the local mesh refinement in \cref{section: refinement} on each level.
The discrete problem is then formulated with respect to $V_h \coloneqq \sum_{\ell=1}^L V^\ell$ for $V^\ell \coloneqq \spa \{ \varphi_i^\ell: i\in \I_V^\ell \}$ with level $\ell\in [L]$ and $v_h\in V_h$.
It can be represented by
\begin{align}\label{eq: v decomposition}
    v_h = \sum_{\ell=1}^L v^\ell = \sum_{\ell=1}^L \sum_{i\in \I_V^\ell} \bfv^\ell_i \varphi_i^\ell
\end{align}
with coefficients $\bfv^\ell
\in\mathbb{R}^{\I_V^\ell}$ for $\ell\in[L]$ as visualized in \Cref{fig:continuous decomposition}.
We set the closure of $\I_V^\ell$ to $\Iab{\ell} \coloneqq \I_V^k \cup \{i\in\I_U^\ell: \supp \varphi^\ell\cap \supp V^\ell \neq \emptyset \}$ to include all indices in $\I_U^\ell$, for which the corresponding functions have overlapping support with $V^\ell$.
Additionally, set the closure of $V^\ell$ to $\Vab{\ell} \coloneqq \spa \{\varphi_i^\ell : \supp \varphi_i^\ell \cap \supp V^\ell \neq \emptyset \}$.
The closure is visualized in \Cref{fig: closure}. This set is of importance in the SSC algorithm with CNN, when projecting information of all subsets to one refinement level. It is a formal technicality that is due to the introduction of local contributions to the solution that were not present in~\cite{cosi}.

\begin{figure}
    \centering
    \begin{tikzpicture}
    \node at (0,0) {\includegraphics[width=\textwidth]{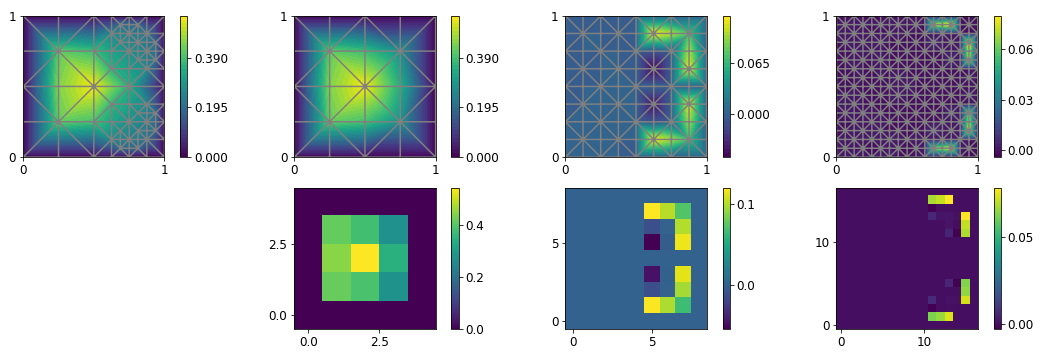}};
    \node at (-6.6,2.8) {$v_h$};
    \node at (-2.4,2.8) {$v^1$};
    \node at (1.9,2.8) {$v^2$};
    \node at (6.3,2.8) {$v^3$};
    \node at (-4.2,1.3) {$=$};
    \node at (0,1.3) {$+$};
    \node at (4.5,1.3) {$+$};
    \node at (-4.3,-.5) {\small $Q_1^\intercal\bfv^1$:};
    \node at (0,-.5) {\small $Q_2^\intercal\bfv^2$:};
    \node at (4.35,-.5) {\small $Q_3^\intercal\bfv^3$:};
\end{tikzpicture}
    \caption{Depicted is the decomposition of a continuous function $v\in V_h$ into coarse grid parts and fine grid corrections on uniformly refined grids. Each function on a uniformly refined grid can be represented by an image, where one pixel corresponds to the value of one node. For local corrections the images are sparse.}
    \label{fig:continuous decomposition}
\end{figure}

\begin{figure}
    \centering
    \includegraphics[width=0.6\linewidth]{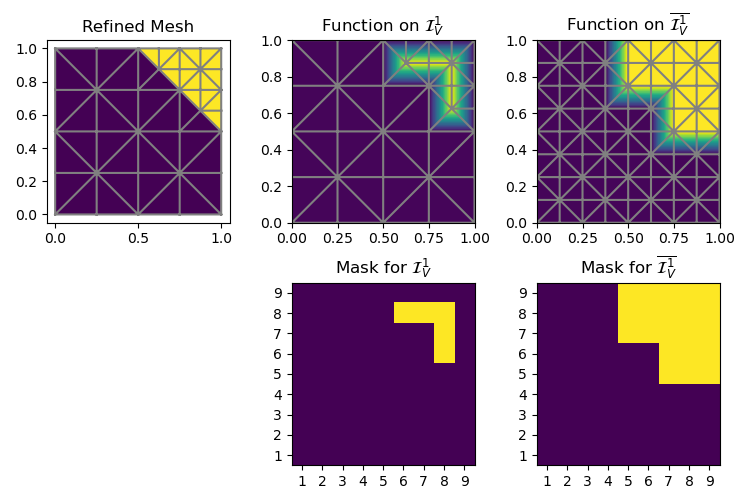}
    \caption{Refining a coarse mesh (compare first mesh in \Cref{fig: AFEM}) in the marked corner leads to the mesh depicted in the first row on the left-hand side. New degrees of freedom with indices in $\I_V^1$ stemming from this refinement (new nodes without the boundary nodes to incorporate the Dirichlet boundary condition) are depicted in the second image in the first row by a function, which is $1$ on $\I_V^1$ and $0$ otherwise. To visualize $\Iab{1}$, a function, which is $1$ on indices in $\Iab{1}$ and $0$ otherwise, is plotted in the last image in the first row. The corresponding masks on $\I_U^1$ are plotted in the second row.}
    \label{fig: closure}
\end{figure}

\subsection{Levelwise discretization}
Since CNNs can only act on one discretization level (corresponding to one image size) at a time, we derive a SSC acting on the different levels separately.
For this, define the $\ell^2$--projection, restricting an element in the whole space to one subspace by
\begin{align*}
    Q_k:\mathbb{R}^{\cup_{\ell\in[L]}        \I_V^\ell}
    \to \mathbb{R}^{\I_V^k} \quad\text{ with }\quad \bfv = (\bfv_{\ell,j})_{\ell\in[L],j\in \I_V^\ell} \mapsto (\bfv_{k,i})_{i\in\I_V^k} \eqqcolon \bfv^k.
\end{align*}
The transpose $Q_k^\intercal$ then trivially embeds an element of the subspace in the larger space. The above decomposition is visualized in \Cref{fig:continuous decomposition}, depicting $Q_k^\intercal \bfv^k$ for each level $k=1,\dots,L$.
Furthermore, we set $\bfv^{<k} \in \mathbb{R}^{\sum_{\ell=1}^L \I_U^\ell}$ to be the contribution of $\bfv$ corresponding to levels smaller than $k$ and $\bfv^{>k}\in\mathbb{R}^{\sum_{\ell=1}^L \I_U^\ell}$ to correspond to levels larger than $k$.
Formally, this is defined by
\begin{align*}
    \bfv^{<k} \coloneqq \sum_{\ell=1}^{k-1} Q_k^\intercal \bfv^k \quad \text{ and } \quad \bfv^{>k} \coloneqq \sum_{\ell=k+1}^{L} Q_k^\intercal \bfv^k
\end{align*}
with $\bfv^{<1} = \bfv^{>L} = 0 \in\mathbb{R}^{\cup_{\ell\in[L]}\I_V^\ell}$ such that
\begin{equation}\label{eq: decompose v}
    \bfv = \bfv^{<k} + Q_k^\intercal\bfv^k + \bfv^{>k}.    
\end{equation}
An SSC solving $A_\bfy \bfu = \bff$ for some $\bff\in\mathbb{R}^{\cup_{\ell\in[L]}\I_V^\ell}$ consists of \emph{smoothing updates}  
on each level, carried out in a successive manner.
One such update on level $k\in[L]$ has the form
\begin{equation*}
    \bfu^k \gets ~\bfu^k +~ \omega^k_\bfy(\bff^k - Q_k A_\bfy \bfu),
\end{equation*}
where for each index $i \in \I_V^k$ on level $k$ and each index on any level $(k_2,j) \in \cup_{\ell\in[L]} \{\ell\}\times \I_V^\ell$ the operator is set to $(Q_k A_\bfy)_{i, (k_2,j)} \coloneqq a_{\bfy,h}(\varphi^{k}_i, \varphi^{k_2}_j)$. 
Since the operator needs information of $\bfu$ on all levels, the operation has to be decomposed into contributions for each level individually.
Therefore, in order to calculate $Q_k A_\bfy \bfu$, 
we consider the different contributions of the decomposition separately using the following prolongation and weighted restriction operations, which transfer discrete functions from one level to a consecutive level. 
\begin{definition}[Prolongation \& weighted restriction]\label{def: prolongation}
    For $L\in\mathbb{N}$ and $k=1,\dots, L-1$ define 
    the \emph{prolongation} $P_k:\mathbb{R}^{\Iab{k}} \to \mathbb{R}^{\Iab{k+1}}$ as the nodal interpolation of $\Vab{k}$ onto $\Vab{k+1}$.
    We call $P_k^\intercal$ the \emph{weighted restriction} with $\varphi_i^k(x) = \sum_{j\in \Iab{k+1}} (P_k^\intercal)_{i,j}\varphi_j^{k+1}(x)$ for $x\in \supp V_{k+1}$. 
\end{definition}

These operators can be used to connect different levels and calculate the application of the operator $A_\bfy$ to the whole vector $\bfu$ (with contributions from all levels) for a smoothing step on each level.

\begin{theorem}[Levelwise calculation of $Q_k A_\bfy \bfu$]\label{thm: levelwise Au}
    Let $\Aab{k}$ be defined as in \eqref{eq: A operator} for indices in $\I_V^k \times \Iab{k}$ and functions $\varphi_i^k\in\I_U^k$. Furthermore, set $\uab{i}{k}\in\mathbb{R}^{\Iab{k}}$ equal to $\bfu_i^k$ for $i\in\I_V^k$ and zero otherwise.
    To calculate $Q_k A_\bfy \bfu^{<k}$ and $Q_k A_\bfy \bfu^{>k}$, for $k\in[L]$ we define the auxiliary vectors
    \begin{alignat}{2}\label{eq: u tilde k}
        \tilde \bfu ^1 &\coloneqq 0, \quad &\tilde{\bfu}^k &\coloneqq P_{k-1} \left(\tilde \bfu ^{k-1} + \uab{\empty}{k-1}\right)
        \qquad \text{ and }\\
        \bar \bfu ^L &\coloneqq 0, \quad &\bar{\bfu}^k &\coloneqq P_k^\intercal \left(\bar \bfu^{k+1} + \Aab{k+1}^\intercal \bfu^{k+1}\right),
        \label{eq: u bar k}
    \end{alignat}
    where $\tilde{\bfu}^k$ denotes the interpolation of $\bfu^{<k}$ into the current space and $\bar \bfu ^k$ denotes the projection of $\bfu^{>k}$ onto the current space.
    This makes it possible to represent the multiplication with $A_{\bfy}$ on each level only using levelwise calculations, 
    prolongations and weighted restrictions by
    \begin{align*}
        Q_kA_\bfy \bfu = \Aab{k}\left(\uab{\empty}k + \tilde \bfu^k\right) + \bar \bfu^k|_{\I_V^k}.
    \end{align*} 
\end{theorem}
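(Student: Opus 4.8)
The plan is to substitute the splitting \eqref{eq: decompose v} into $Q_k A_\bfy \bfu$, use linearity of $A_\bfy$, and show that the three resulting contributions — from the levels below $k$, from level $k$ itself, and from the levels above $k$ — reassemble into the two claimed terms. Expanding componentwise, for each $i\in\I_V^k$ the definition $(Q_k A_\bfy)_{i,(k_2,j)} = a_{\bfy,h}(\varphi_i^k,\varphi_j^{k_2})$ gives $(Q_k A_\bfy \bfu)_i = a_{\bfy,h}(\varphi_i^k, u_h)$, where $u_h = \sum_{\ell=1}^L \sum_{j\in\I_V^\ell} \bfu_j^\ell \varphi_j^\ell$ is the continuous function represented by $\bfu$. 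Everything then reduces to evaluating the bilinear form of the single test function $\varphi_i^k$ against the coarse part of $u_h$ (levels below $k$), its level-$k$ part, and its fine part (levels above $k$) separately.

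The level-$k$ and coarse contributions are handled by exploiting the nestedness $U^{\ell}\subset U^{k}$ for $\ell\le k$. Since every coarse nodal basis function already lies in the level-$k$ space, the combined coarse-plus-same-level function admits an \emph{exact} representation $\sum_{j\in\Iab{k}} \bigl(\tilde{\bfu}^k + \uab{\empty}{k}\bigr)_j\,\varphi_j^k$ on level $k$; here the closure $\Iab{k}$ is needed precisely because a coarse function supported near the local patch $V^k$ may be nonzero at level-$k$ nodes lying just outside $\I_V^k$. First I would verify by induction on $k$ that the recursion \eqref{eq: u tilde k} produces exactly the level-$k$ nodal coefficients of the coarse part: the base case $\tilde{\bfu}^1=0$ is immediate, and the step follows because $P_{k-1}$ is by definition the nodal interpolation carrying $\Vab{k-1}$ into $\Vab{k}$, so prolonging $\tilde{\bfu}^{k-1}+\uab{\empty}{k-1}$ (the coefficients of the coarse part on level $k-1$) yields those of the same function on level $k$. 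Evaluating $a_{\bfy,h}(\varphi_i^k,\cdot)$ against this representation and recognising the entries $(\Aab{k})_{i,j}=a_{\bfy,h}(\varphi_i^k,\varphi_j^k)$ gives the term $\Aab{k}\bigl(\uab{\empty}{k}+\tilde{\bfu}^k\bigr)$.

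For the fine part the spaces are not nested in the convenient direction, so instead I would push the test function upward: by \cref{def: prolongation} one has $\varphi_i^k = \sum_{j\in\Iab{k+1}} (P_k^\intercal)_{i,j}\,\varphi_j^{k+1}$ on $\supp V_{k+1}$. Substituting this into $a_{\bfy,h}(\varphi_i^k,\cdot)$ and using symmetry of the bilinear form converts the coupling of level $k$ with level $k+1$ into $P_k^\intercal \Aab{k+1}^\intercal \bfu^{k+1}$, while the couplings with still finer levels appear already restricted onto level $k+1$ and are carried down recursively. A second induction, this time descending from $\bar{\bfu}^L=0$, confirms that \eqref{eq: u bar k} accumulates exactly the restriction of the fine part of $u_h$ onto level $k$; taking the $\I_V^k$-components yields $\bar{\bfu}^k|_{\I_V^k}$. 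Summing the three pieces produces the claimed identity.

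The routine part is the componentwise expansion of the bilinear form together with the two straightforward inductions. The genuine difficulty, and the point where this locally refined setting departs from the uniform-grid argument of \cite{cosi}, is the index-set bookkeeping: one must check at every prolongation and weighted restriction that the closures $\Iab{\ell}$ are large enough to hold every node at which the transferred function — or the test function $\varphi_i^k$ pushed to a finer level — can be nonzero, yet that the final restriction to $\I_V^k$ discards exactly the auxiliary boundary nodes without dropping any genuine contribution. Getting these inclusions to match for both the ascending interpolation and the descending restriction is the crux of the proof.
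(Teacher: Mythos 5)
Your proposal is correct and follows essentially the same route as the paper's proof: decompose $\bfu$ via \eqref{eq: decompose v} and use linearity, handle the coarse (and level-$k$) contribution by an ascending induction showing that the recursion \eqref{eq: u tilde k} reproduces the level-$k$ nodal coefficients of the coarse part on $\supp V^k$, and handle the fine contribution by a descending induction that pushes the test function $\varphi_i^k$ to level $k+1$ via $P_k^\intercal$, exactly as in the paper's two auxiliary lemmas. Your remark on the role of the closure $\Iab{k}$ also matches the paper's treatment of the index-set bookkeeping.
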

\begin{proof} Since $A_\bfy$ is a linear operator, we considering the multiplication with the different parts of $\bfu$ separately. For $j\in\I_V^k$ 
it holds that
    \begin{align*}
        (Q_k A_\bfy Q_k^\intercal \bfu^k)_{j} = \sum_{i \in \mathcal{I}_V^k} \bfu^k_i \int \kappa_h \scpr{ \nabla \varphi_i^k, \nabla \varphi_j^k} \dx = \sum_{i \in \Iab{k}} \uab{i}{k} \int \kappa_h \scpr{ \nabla \varphi_i^k, \nabla \varphi_j^k} \dx = \left(\Aab{k} \uab{i}{k}\right)_j.
    \end{align*}
    \cref{thm: decomp_smoothing_coarse} and \cref{thm: decomp_smooting_fine} yield
    \begin{align*}
        Q_k A_\bfy \bfu^{<k} &= \Aab{k} \tilde \bfu^k \quad \text{ and }\\
        Q_k A_\bfy \bfu^{>k} &= \bar \bfu^k|_{\I_V^k},
    \end{align*}
    respectively. The claim follows with~\eqref{eq: decompose v}.
\end{proof}

With this, we can define an SSC using only levelwise actions and with restrictions and prolongations between two consecutive levels as depicted in the \emph{Levelwise Local Multigrid Algorithm} (LLMG) in \cref{alg: LLMG}.
It consists of residual corrections and smoothing steps in each subspace $\mathbb{R}^{\I_V^k}$ separately, starting with the finest level $L$, successively including coarser levels, and subsequently updating finer levels until each level has been updated twice.
After each update, the auxiliary variables $\bar\bfu, \tilde\bfu$ are updated. Since the definition in \eqref{eq: u bar k} of $\bar{\bfu}^{k}$ contains information only of $\bfu^\ell$ for $\ell > k$, i.e. information of finer levels, and $\tilde{\bfu}^k$ in \eqref{eq: u tilde k} only depends on $\bfu^\ell$ for $\ell <k$, i.e. information of coarser levels, the update of one $\bfu^k$ leads to a change of $\bar\bfu^\ell$ for $\ell < k$ and a change of $\tilde\bfu^\ell$ for $\ell > k$.
Therefore, when smoothing on a coarser level in the subsequent step, $\bar{\bfu}^{k-1}$ needs to be updated. When smoothing on a finer level in the subsequent step, the update has to be done for $\tilde{\bfu}^{k+1}$.

\begin{algorithm}
    \caption{Levelwise Local Multigrid Algorithm $\mathrm{LLMG}(\bfu, \bff, \bfy)$}
    \label{alg: LLMG}
    Calculate $\bar\bfu, \tilde\bfu$ as in \eqref{eq: u bar k}, \eqref{eq: u tilde k} \Comment{calculate auxiliary vectors}\\
    \For{$k=L,\dots 1$}{
    $\bfu^k \gets \bfu^k + \omega^k_\bfy (\bff^k - [\Aab{k} (\bfu^k + \tilde{\bfu}^k) + \bar{\bfu}^k|_{\I_V^k}])$ \Comment{smoothing on one level} \label{alg: LLMG: first update}\\
    \If{$k>1$}{
    $\bar{\bfu}^{k-1} \gets P_{k-1}^\intercal (\bar \bfu^{k} + \Aab{k}^\intercal \bfu^{k})$ \Comment{update auxiliary vector $\bar{\bfu}$ fine to coarse} \label{alg: LLMG: update u bar} 
    }
    }
    \For{$k=1,\dots,L$}{
    $\bfu^k \gets \bfu^k + \omega^k_\bfy (\bff^k - [\Aab{k} (\bfu^k + \tilde{\bfu}^k) + \bar{\bfu}^k|_{\I_V^k}])$ \Comment{smoothing on one level} \label{alg: LLMG: second update}\\
    \If{$k<L$}{
    $\tilde{\bfu}^{k+1} \gets P_{k} \left(\tilde \bfu ^{k} + \uab{\empty}{k}\right)$ \Comment{update auxiliary vector $\tilde{\bfu}$ coarse to fine} \label{alg: LLMG: update u tilde} 
    }
    }
\end{algorithm}

With \cref{thm: levelwise Au}, the LLMG is a standard SSC algorithm (see \cref{alg: ssc}) and convergence can be derived from known results.

\begin{theorem}[Convergence of the LLMG]\label{theorem: LLMGm}
    Assume that there exist constants $\mathfrak{c},\mathfrak{C}>0$ such that $\mathfrak{c}\leq \min_{\bfy\in\Gamma}\lambda_{\min}(A_\bfy)$ and $\mathfrak{C} \geq \max_{\bfy\in\Gamma} \lambda_{\max}(A_\bfy)$.
    Let $\bfu$ be the solution of $A_\bfy \bfu = \bff$ and $0<\omega^k_\bfy = \omega \leq C^{-1}$
    . There exists a constant $c>0$ such that for $c_L \coloneqq \frac{cL}{1+cL}$, $\varepsilon>0$ and $m\in\mathbb{N}$ with $m \geq \log(\varepsilon^{-1})/\log(c_L^{-1})$ it holds true that
    \begin{align*}
        \norm{\bfu - \mathrm{LLMG}^m(0,\bff,\bfy)}_{A_\bfy} \leq \varepsilon \norm{\bfu}_{A_\bfy},
    \end{align*}
    where $\mathrm{LLMG}^m$ denotes the application of the algorithm $m$ times.
\end{theorem}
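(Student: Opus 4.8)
The plan is to recognize the LLMG iteration as a \emph{symmetric} successive subspace correction (SSC) method for the space decomposition $V_h = \sum_{\ell=1}^L V^\ell$ and then to invoke the abstract SSC convergence theory of \cite{doi:10.1137/1034116, chen}. By \cref{thm: levelwise Au}, each smoothing line of \cref{alg: LLMG} realizes exactly the subspace correction $\bfu^k \gets \bfu^k + \omega(\bff^k - Q_k A_\bfy \bfu)$ in $V^k$ with a scalar (damped Richardson) smoother. Since the algorithm performs a full downward sweep $k=L,\dots,1$ followed by an upward sweep $k=1,\dots,L$, the associated error propagation operator $E$ is the composition of the two sweeps; with the reversed orderings these are mutually adjoint in the $A_\bfy$ inner product, so that $E$ is self-adjoint and positive semidefinite. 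It then suffices to bound $\norm{E}_{A_\bfy}<1$ and iterate, since for a self-adjoint operator $\norm{E^m}_{A_\bfy}=\norm{E}_{A_\bfy}^m$ and the initial error of $\mathrm{LLMG}^m(0,\bff,\bfy)$ is the exact solution $\bfu$.

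First I would fix the two standing hypotheses of the SSC framework. The damping condition $0<\omega\le C^{-1}$ ensures that each local smoother is a contraction, and the spectral assumptions $\mathfrak{c}\le\lambda_{\min}(A_\bfy)$ and $\lambda_{\max}(A_\bfy)\le\mathfrak{C}$ make the energy norm $\anorm{\cdot}$ uniformly equivalent to the discrete Euclidean norm, so every constant below can be taken independent of $\bfy\in\Gamma$. The strengthened Cauchy--Schwarz constant for the nested hierarchy is bounded by a fixed number, because on uniformly refined grids a nodal basis function on level $\ell$ interacts through $a_{\bfy,h}$ with only a bounded number of functions on each neighbouring level.

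The heart of the argument, and the step I expect to be the main obstacle, is the \emph{stable decomposition} estimate: every $v_h\in V_h$ must admit a splitting $v_h=\sum_{\ell=1}^L v^\ell$ with $v^\ell\in V^\ell$ and $\sum_{\ell=1}^L\anorm{v^\ell}^2\le K_0\,\anorm{v_h}^2$, where $K_0\le 1+cL$ grows at most linearly in the number of levels. On fully refined grids this is the classical BPX/Oswald multilevel decomposition, but here the spaces $V^\ell=\spa\{\varphi_i^\ell:i\in\I_V^\ell\}$ use only the subsets $\I_V^\ell$ of the uniform index sets coming from the local refinement. The decomposition argument must therefore be carried out on the closures $\Iab{\ell}$ and $\Vab{\ell}$ introduced above, in order to control the boundary layer of each refined patch and to keep the $L^2$-projection based splitting stable across the artificial interfaces between refined and unrefined regions; the uniform bounds on $\kappa$ are what render the resulting $K_0$ independent of $\bfy$.

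Granting the stable decomposition, the Xu--Zikatanov identity for the symmetric method yields a contraction of the form $\norm{E}_{A_\bfy}=1-c_0^{-1}$, where $c_0\le 1+cL$ is controlled by $K_0$ together with the (bounded) smoother and Cauchy--Schwarz constants. This gives
\[
    \norm{E}_{A_\bfy}\le 1-\frac{1}{1+cL}=\frac{cL}{1+cL}=c_L .
\]
Hence $\norm{\bfu-\mathrm{LLMG}^m(0,\bff,\bfy)}_{A_\bfy}=\norm{E^m\bfu}_{A_\bfy}\le c_L^{\,m}\norm{\bfu}_{A_\bfy}$, and $c_L^{\,m}\le\varepsilon$ is equivalent to $m\ge\log(\varepsilon^{-1})/\log(c_L^{-1})$, the asserted bound. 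The only nonroutine ingredient is the linear-in-$L$ stability constant on the locally refined, sum-of-subspaces hierarchy; the remainder is a direct transcription of classical multigrid convergence theory.
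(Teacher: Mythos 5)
Your overall strategy (recast LLMG as an SSC sweep via \cref{thm: levelwise Au}, verify the Richardson smoother contraction, apply the Xu--Zikatanov identity, bound the XZ constant by $cL$, and iterate) matches the paper's skeleton. However, there is a genuine gap at exactly the step you flag as the ``main obstacle'': you \emph{grant} the stable decomposition $\sum_{\ell=1}^L\anorm{v^\ell}^2\le K_0\anorm{v_h}^2$ with $K_0\le 1+cL$ on the locally refined sum-of-subspaces hierarchy, appealing to ``classical BPX/Oswald'' theory. That theory does not apply off the shelf here: the spaces $V^\ell=\spa\{\varphi_i^\ell:i\in\I_V^\ell\}$ are spanned by arbitrary \emph{subsets} of the uniform nodal bases, and a BPX-type stability bound for such a hierarchy (uniform in the refinement pattern) is precisely the nonroutine result you would have to prove; gesturing at the closures $\Iab{\ell}$ does not supply it. As written, the crux of the theorem is assumed rather than established.

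What the paper does instead makes this machinery unnecessary, and this is worth internalizing: the theorem's hypotheses already \emph{assume} $\mathfrak{c}\le\lambda_{\min}(A_\bfy)$ and $\lambda_{\max}(A_\bfy)\le\mathfrak{C}$ uniformly in $\bfy$. In the coefficient space $\mathbb{R}^{\cup_\ell\I_V^\ell}$ the levelwise splitting $\bfv=\sum_k Q_k^\intercal Q_k\bfv$ is the \emph{unique} decomposition into disjoint coordinate blocks, hence $\ell^2$-orthogonal, so the XZ constant of \cref{lm: XZ-identity} can be bounded by brute force: each summand is controlled using $\norm{\bar{R}_k^{-1}}\le\lambda_{\max}(A_\bfy^k)$ and the norm equivalence $\lambda_{\min}(A_\bfy)\norm{\cdot}_2^2\le\norm{\cdot}_{A_\bfy}^2$, giving $c_0\le\frac{\lambda_{\max}(A_\bfy)}{\lambda_{\min}(A_\bfy)}L\le\frac{\mathfrak{C}}{\mathfrak{c}}L$ (\cref{thm: X-Z identity}), with no strengthened Cauchy--Schwarz estimate and no energy-stable decomposition. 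In other words, the assumed spectral bounds do the heavy lifting that your plan assigns to multilevel approximation theory; your route would yield a stronger, mesh-robust statement, but only after proving the stability estimate you left open, whereas the paper's elementary argument proves exactly the stated theorem. (Two minor slips in your write-up: the symmetric XZ identity gives $\norm{E}_{A_\bfy}=1-(1+c_0)^{-1}$, not $1-c_0^{-1}$; and you do not need self-adjointness of $E$ to iterate, since $\norm{E^m}_{A_\bfy}\le\norm{E}_{A_\bfy}^m$ by submultiplicativity.)
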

\begin{proof}
    For a fixed $\bfy\in\Gamma$  the LLMG is equivalent to the local multigrid algorithm (LMG \cref{alg: LMG}) with~\cref{thm: levelwise Au} and therefore has the same convergence rate. We use the XZ-identity in \cref{lm: XZ-identity} shown in~\cite[Theorem 4]{chen} and a result similar to the Richardson smoothing contraction shown in~\cite[Lemma 4.3]{Hackbusch} in \cref{lm: subspace contraction} to deduce the convergence of the LMG in the appendix (see \cref{thm: X-Z identity} and \cref{section: LMG}), showing that there exists a constant $C>0$ such that for $c_L = \frac{CL}{1+CL}$ it holds
    \begin{align*}
        \norm{\bfu - \mathrm{LLMG}^m(\bfu^0, \bff,\bfy)}_{A_\bfy} \leq c_L^m \norm{\bfu - \bfu^0}_{A_\bfy}.
    \end{align*}
    Therefore, initializing with $\bfu^0=0$ and choosing $m\geq \log(\varepsilon^{-1})/\log(c_L^{-1})$ confirms the claim.
\end{proof}

\begin{remark}
    Note that $\kappa(\cdot,\bfy)>c$ for some $c>0$ and all $\bfy\in\Gamma$ implies with the Poincaré inequality that $\lambda_{\min}(A_\bfy)>C$ for some $C>0$ and all $\bfy\in\Gamma$. Furthermore, $\kappa(\cdot,\bfy)<c$ for some $c>0$ and all $\bfy\in\Gamma$ implies that $\lambda_{\max}(A_\bfy)<C$ for some $C>0$ and all $\bfy\in\Gamma$.
\end{remark}

\section{Convolutional neural networks (CNN) for finite element discretizations}
\label{section: CNN}

CNNs are a specific neural network architecture tailored to tasks involving image data such as image classification and segmentation. Inspired by~\cite{Fukushima1980NeocognitronAS}, they were first implemented with a backpropagation algorithm in~\cite{NIPS1989_53c3bce6} for image recognition.
Applying the action of a CNN to an image involves the application of local kernels to a hierarchy of scaled representations of the input.
This locality makes the architecture particularly suitable with partial differential equations, where local properties and interactions have to be resolved to obtain highly accurate representations.
To incorporate interactions on a larger scale with respect to the image domain, compression and decompression of the input images can be implemented with CNNs through strided and transpose strided convolutions, leading the popular CNN architecture U-Nets~\cite{Unet1}.
This architecture is heavily exploited in this work.

\subsection{Data decomposition}
In the analysis of the implemented CNN architecture, images with different resolutions are used as the representation of the solutions of the parametric PDE.
This is possible since FEM discretizations of functions determined by coefficient vectors are used on different grid levels, similar to the decomposition in~\cite{cosi}.
Additionally, the discontinuous error estimator\footnote{Note that the solution is assumed as a conforming P1 function and the estimator is a DG0 function, i.e. defined by a scalar value per mesh element.} defined on the triangles of the considered meshes are represented with images of different scales.

\emph{Continuous functions} Any $v_h\in V_h$ can be decomposed into its components on $v^\ell\in V^\ell$ by $v_h = \sum_{\ell=1}^L v^\ell$ as described in \Cref{section: solver}. The functions $v^\ell$ on each level can be represented by \emph{coefficient images} on the whole uniformly refined grids.
If $\I_V^k$ is a small subset of $\I_U^k$, i.e. in case of very local refinements in the $\mathrm{AFEM}$, these images are sparse.
The complete decomposition of $v_h\in V_h$ as depicted in \Cref{fig:continuous decomposition} reads
\begin{align*}
    v_h = \sum_{\ell=1}^L v^\ell = \sum_{\ell=1}^L \sum_{i\in \I_V^\ell} \bfv^\ell_i \varphi_i^\ell.
\end{align*}
\begin{definition}[Coefficient images]
    For $\bfw\in\mathbb{R}^{\I_V^k}$ the coefficient image $\wim \in\mathbb{R}^{\I_U^k}$ is defined for $i\in\I_U^k$ by 
    \begin{align*}
        (\wim)_i \coloneqq \begin{cases}
            \bfw_i, &\text{ if } i\in\I_V^k\\
            0, &\text{otherwise}
        \end{cases},
    \end{align*}
    where the indices are two dimensional image indices $i = (i_1,i_2)$.
\end{definition}

In the same manner as the continuous functions, the \emph{piecewise constant functions} can be decomposed into corrections on uniformly refined meshes as depicted in \Cref{fig:discontinuous decomposition}, namely
\begin{align}\label{eq: DG0}
    \eta = \sum_{\ell=1}^L \eta^\ell = \sum_{\ell=1}^L \sum_{i=1}^n\sum_{j=1}^{m^\ell} \bfeta^\ell_{(i,j)}\tilde{\varphi}_{i,j}^\ell,
\end{align}
where $n$ is the number of images needed for the representation.
It depends on the structure of the mesh ($n=1$ in the continuous case due to nodal representation) and $m^\ell, \ell=1,\dots,L$ the number of pixels of each image.
Here, $\tilde{\varphi}^\ell_{(i,j)}$ denotes the characteristic function on the triangle $(i,j)\in [n]\times [m^\ell]$ on discretization level $\ell\in[L]$ for the indexation of the triangles according to~\eqref{eq: DG0}.
In \Cref{fig:discontinuous decomposition images} each (continuous) correction is represented with $n=8$ \emph{sparse images}.
Piecewise constant (discontinuous) functions on meshes as depicted in \Cref{fig: uniform vs local meshes} can be represented with $n=2$ images, e.g. by one image containing the values of the upper triangle in the upper right square of each node and the other image containing the lower triangle in the same square as illustrated in \Cref{fig: estimator triangles}.

\begin{figure}
    \centering
    \begin{tikzpicture}
        \node at (0,0) {\includegraphics[width=\linewidth]{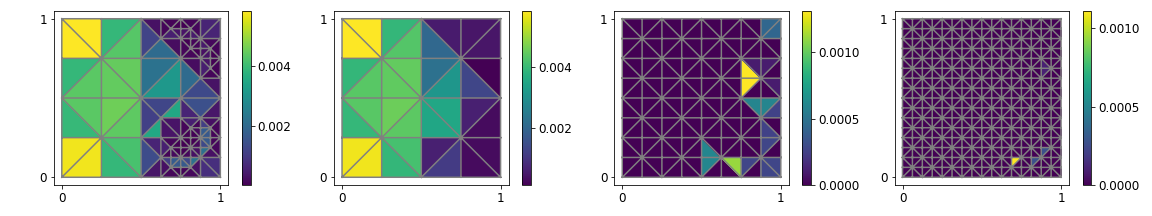}};
        \node at (-3.8,0.2) {$=$};
        \node at (0.,0.2) {$+$};
        \node at (4.3,0.2) {$+$};
        \node at (-6.2,1.7) {$\eta$};
        \node at (-2.2,1.7) {$c_1$};
        \node at (1.8,1.7) {$c_2$};
        \node at (5.8,1.7) {$c_3$};
    \end{tikzpicture}
    \caption{A piecewise constant discontinuous functions $\eta = \sum_{T\in\mathcal{T}} \eta_T\chi_T$ can be decomposed into a coarse grid piecewise constant function and a fine grid piecewise constant corrections.}
    \label{fig:discontinuous decomposition}
\end{figure}
\begin{figure}
    \centering
    \begin{tikzpicture}
        \node at (-8,.2) {\includegraphics[width=0.19\linewidth]{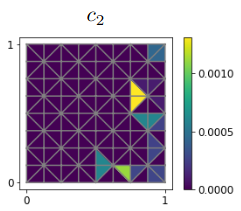}};
        \node at (0,0) {\includegraphics[width=0.75\textwidth]{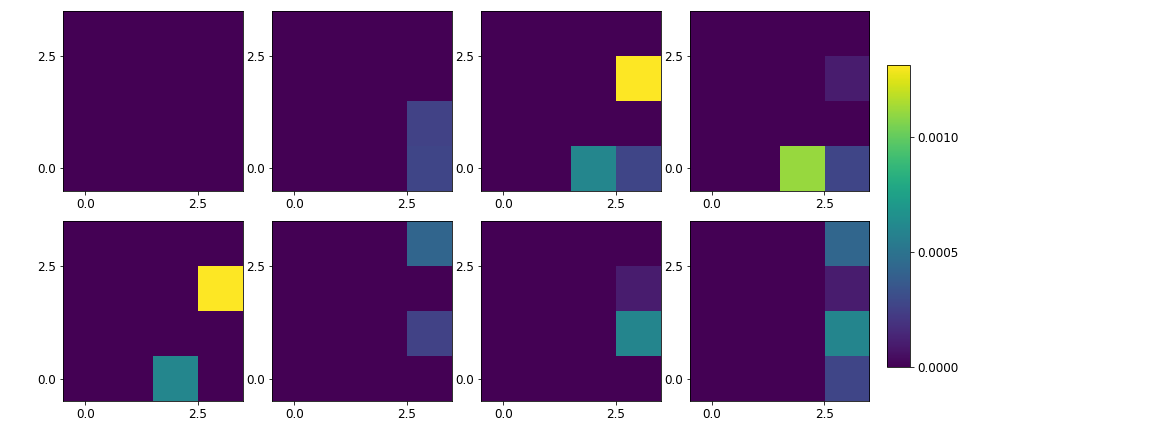}};
        \node at (-6,0) {:};
    \end{tikzpicture}
    \caption{Each piecewise constant function for the meshes depicted in \Cref{fig:discontinuous decomposition} can be represented with $8$ images. Every other node in each direction is surrounded by $8$ triangles. Each image corresponds to one of the triangles for every such node.}
    \label{fig:discontinuous decomposition images}
\end{figure}

\subsection{Submanifold sparse CNN}
Different types of convolutions are considered in this work. The \emph{vanilla} convolution $\ast$ sweeps a kernel over the input, the $2$-\emph{strided} convolution $\ast^{2\text{s}}$ applies the kernel on every other pixel of the input images, approximately halving the size of the input images.
Moreover, the $2$-\emph{transpose strided} convolution $\ast^{2\text{st}}$ sweeps the kernel over a dilated image with added zeros between every two pixels, doubling the input image size and the \emph{submanifold sparse} convolution $\spco$ as used in \cite{graham2017submanifold, 3DSemanticSegmentationWithSubmanifoldSparseConvNet} applies the kernel only to nonzero pixels of the input image and sets the remaining entries to zero.
The different convolutions are visualized in \Cref{fig: convolutions}, see also \cite{dumoulin2016guide, dumoulingit}.

Every step of the successive subspace correction algorithm~\Cref{alg: LLMG} and the error estimator~\eqref{eq: estimator} can be represented with a CNN by incorporating these different convolutions.
When additionally including a marking function as culling mask, the whole adaptive scheme~\Cref{alg: AFEM} can be approximated by the derived CNNs on sparse images.

\section{Expressivity results}\label{section: expressivity}

For the analysis in this work, the used activation function has to satisfy the following assumption.
\begin{assump}[Activation function]\label{ass: activation function}
    Let $\sigma\in L^\infty_{\text{loc}}$ such that there exists $x_0\in\mathbb{R}$, where $\sigma$ is three times continuously differentiable in a neighborhood and $\sigma''(x_0) \neq 0$. 
\end{assump}
These properties are fulfilled for a number of classical activation functions such as softplus, sigmoids and the exponential linear unit. 
In the following subsections, the individual parts of the $\mathrm{AFEM}$ (\cref{alg: AFEM}) are approximated individually.
The estimations are then collected for the overall convergence result.
To illustrate our constructions, the meshes depicted in \Cref{fig: uniform vs local meshes} are considered.

\subsection{NN approximation of the multigrid solver}

To approximate the solution on a fixed grid in each step of the $\mathrm{AFEM}$, the "Levelwise Local Multigrid Algorithm" $\mathrm{LLMG}$ (\Cref{alg: LLMG}) is employed.
Its main ingredient is the smoothing on each (locally refined) subspace.
For one smoothing step, the crucial part is the approximation of the action of the parametric operator $A_\bfy\bfu$ with a CNN.
The analysis is similar to~\cite[Theorem 6]{cosi}.
However, the local corrections impose several technical additions, for which some auxiliary vectors defined in the algorithm are introduced.
We are then able to show the following complexity bound.
\begin{theorem}\label{thm: LLMG_Approx}
    Assume that there exist constants $\mathfrak{c},\mathfrak{C}>0$ such that $\mathfrak{c}\leq \min_{\bfy\in\Gamma}\lambda_{\min}(A_\bfy)$ and $\mathfrak{C} \geq \max_{\bfy\in\Gamma} \lambda_{\max}(A_\bfy)$.
    There exists a positive constant $C>0$ such that for every $\varepsilon, M >0$ there exists a CNN $\Psi:\mathbb{R}^{2\times \I_U^L} \to \mathbb{R}^{\bigtimes_{k=1}^L \I_V^k}$ 
    such that
    \begin{enumerate}
        \item $\norm{\Psi({\bfk_\bfy}_{\text{img}},\bff_{\text{img}}) - \mathrm{LLMG}^m(0,\bff, \bfy))}_{A_\bfy} \leq \varepsilon \quad$ for all 
        $\kappa(\cdot,\bfy),f \in U^L$
        \item number of weights bounded by $M(\Psi) \leq CLm$.
    \end{enumerate}
\end{theorem}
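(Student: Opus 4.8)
The plan is to realize $\mathrm{LLMG}^m$ as the composition of $\mathcal{O}(Lm)$ elementary building blocks, each reproducible by a CNN sub-network whose weight count does not depend on $\varepsilon$, and then to concatenate them. A single pass of \cref{alg: LLMG} visits every level a bounded number of times, performing on each visit a smoothing update, one application of the levelwise operator, and one update of an auxiliary vector through a prolongation or a weighted restriction; hence one pass amounts to $\mathcal{O}(L)$ elementary operations and $m$ passes to $\mathcal{O}(Lm)$. Since the weights of concatenated CNNs add, it suffices to bound each block by $\mathcal{O}(1)$ weights, which yields $M(\Psi)\le CLm$.

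First I would realize the purely linear ingredients exactly. By \cref{def: prolongation}, the prolongations $P_k$ and weighted restrictions $P_k^\intercal$ between two consecutive uniform grids are fixed, level-independent stencils, reproduced exactly by a transpose strided and a strided convolution, respectively; likewise the restriction of the input data $\bfk_\bfy$ and $\bff$ from the finest grid $\I_U^L$ to each level is a fixed averaging stencil. The smoothing step and the auxiliary-vector recursions \eqref{eq: u tilde k}, \eqref{eq: u bar k} are then affine combinations of such convolutions together with the levelwise operator $\Aab{k}$, whose reduction to purely levelwise actions is exactly the content of \cref{thm: levelwise Au}. To respect the local refinement I would carry each level-$k$ quantity as a coefficient image supported on $\I_V^k$ and use submanifold sparse convolutions, relying on the closures $\Iab{k}$ and $\Vab{k}$ to guarantee that every boundary contribution entering $\Aab{k}$ and the inter-level transfers is retained.

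The only nonlinearity is the dependence of $\Aab{k}$ on the parameter field: each stencil entry of $\Aab{k}\bfu$ is a bounded sum of products of local $\kappa$-values with local coefficient values. I would approximate every such product with the fixed-weight multiplication network of \cite[Corollary~C.3]{mones2021}, valid under \cref{ass: activation function}; since the number of factors per output entry is bounded by the stencil size, one application of $\Aab{k}$, and hence one smoothing block, uses $\mathcal{O}(1)$ weights for any prescribed per-product accuracy, independently of how small that accuracy is. The uniform bound $\mathfrak{c}\le\kappa\le\mathfrak{C}$ together with the boundedness of the LLMG iterates (established below) confines all arguments of the multiplication network to a fixed compact set, on which the approximation is uniform.

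It remains to propagate the per-step error through the $m$ compositions. Writing the exact step $\mathrm{LLMG}(\cdot)$ as an affine map whose linear part contracts with rate $c_L<1$ (\cref{theorem: LLMGm}) and denoting by $\Psi_{\mathrm{step}}$ the approximate step with $\norm{\Psi_{\mathrm{step}}(\bfv)-\mathrm{LLMG}(\bfv)}_{A_\bfy}\le\delta$ on the relevant ball, the errors $e_j$ of the $j$-fold compositions satisfy $\norm{e_{j+1}}_{A_\bfy}\le \delta + c_L\norm{e_j}_{A_\bfy}$, so that $\norm{e_m}_{A_\bfy}\le \delta/(1-c_L)$ uniformly in $m$; this bound also keeps the approximate iterates in a fixed ball, closing the boundedness assumption used for the multiplication network. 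Choosing $\delta=(1-c_L)\varepsilon$, which in turn fixes a sufficiently small per-product accuracy (a bounded accumulation over the $\mathcal{O}(L)$ operations of one pass), gives claim (1) while leaving the weight count untouched. I expect the principal obstacle to be not this perturbation argument, which is essentially a geometric series, but the careful bookkeeping of the local index sets: one must verify that the closures $\Iab{k}$ are large enough that the sparse, levelwise CNN operations reproduce $Q_kA_\bfy\bfu$ exactly on $\I_V^k$ even though the supports change from level to level, which is precisely the complication absent from the uniform-grid analysis of \cite{cosi}.
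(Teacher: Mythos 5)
Your overall construction coincides with the paper's own proof in \cref{section: Solver approx}: there, too, $\mathrm{LLMG}^m$ is written as a concatenation of $\mathcal{O}(Lm)$ constant-size blocks (input integration of $\kappa$, levelwise smoothing, auxiliary-vector updates, restriction, prolongation, output), the prolongation and weighted restriction are realized exactly by (transpose) strided convolutions, and the only nonlinearity --- the products between the entries of $\Upsilon(\kappa_h(\cdot,\bfy),\T_k)$ and the coefficient images --- is handled by the fixed-weight multiplication network, which is what makes the bound $M(\Psi)\leq CLm$ independent of $\varepsilon$. Where you genuinely differ is the error propagation across the $m$ passes: the paper never invokes the contraction of the iteration at this point, but simply applies the concatenation lemma (\cref{lem: CNN concatenation approximation}) to the entire chain of continuous blocks, which suffices because $\mathrm{LLMG}^m(0,\bff,\bfy)$ is a fixed continuous function of the inputs on a compact set. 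Your geometric-series bound $\norm{e_m}_{A_\bfy}\leq \delta/(1-c_L)$ via \cref{theorem: LLMGm} is also valid (one exact pass is affine in $\bfu$ with linear part of $A_\bfy$-norm at most $c_L$, and your induction keeps the perturbed iterates in a fixed ball), and it buys uniformity in $m$; note only that $1-c_L=(1+cL)^{-1}$, so your per-pass tolerance shrinks with $L$ --- harmless precisely because the multiplication networks have accuracy-independent size --- and that within a single pass you still need the concatenation argument anyway, since a pass is itself a composition of $\mathcal{O}(L)$ approximate blocks.

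The one place where your proposal stops short of a proof is exactly the point you flag as the principal obstacle: you assert that submanifold sparse convolutions together with the closures $\Iab{k}$ retain all boundary contributions, but you give no mechanism, and the naive version fails. A submanifold sparse convolution writes output exactly at the nonzero pixels of its input; for the application of $\Aab{k}$ the input coefficients live on $\Iab{k}$ while the output must be supported on $\I_V^k$, so the write-set and the required support do not match: pixels of $\I_V^k$ at which the current image happens to vanish would be zeroed incorrectly, while pixels in $\Iab{k}\setminus\I_V^k$ would receive spurious nonzero values (an analogous mismatch breaks the stride-$2$ weighted restriction on sparse images). The paper's resolution is the translation operator of \cref{def: translation}: each level-$k$ image is replicated into $m$ channels, shifted along the $m$ stencil directions and masked by $M^k_{\text{img}}$, so that every neighbor value needed by the stencil is stored \emph{at} the pixels of $\I_V^k$ in the channel dimension; the masked sparse convolutions with $1\times 1$ spatial kernels of \cref{thm: F}, \cref{thm: F approx} and \cref{thm: FT} then reproduce $\Aab{k}$ and $\Aab{k}^\intercal$ exactly on $\I_V^k$, and \cref{rem: CNN prolongation and weighted restriction} performs the analogous repair for the inter-level transfers. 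Without this (or an equivalent) device, the bookkeeping you defer is not a routine verification but the actual technical content that separates this theorem from the uniform-grid case of \cite{cosi}.
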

The proof can be found in \cref{section: Solver approx}. Combining this result with \cref{theorem: LLMGm} leads to the following corollary, stating that the solution of the Darcy problem~\eqref{eq: darcy linear equation system} on a an adaptively refined mesh can be approximated arbitrarily well by a CNN with a prescribed bound for the number of parameters.

\begin{corollary}\label{cor: sol approx}
    Let $\bfu$ be the solution of $A_\bfy\bfu=\bff$. Choose $m \geq \log(\varepsilon^{-1})/\log(c_L^{-1})$ with chosen as in~\cref{theorem: LLMGm}. Then there exists a constant $C>0$ such that for any $\varepsilon>0$ there exists a CNN $\Psi:\mathbb{R}^{2\times\I_U^L} \to \mathbb{R}^{\bigtimes_{k=1}^L \I_V^k}$
    with the number of parameters bounded by $M(\Psi) \leq CL\log(\varepsilon^{-1})/\log(c_L^{-1})$ such that
    \begin{align*}
        \norm{\Psi({\bfk_\bfy}_\img, \bff_\img) - \bfu}_{A_\bfy} &\leq \norm{\Psi({\bfk_\bfy}_\img, \bff_\img) - \mathrm{LLMG}^{m}(0,\bff,\bfy)}_{A_\bfy} \\
        &\quad + \norm{\mathrm{LLMG}^{m}(0,\bff,\bfy) - \bfu}_{A_\bfy}\\
        &\leq \varepsilon (1 + \norm{\bfu}_{A_\bfy}).
    \end{align*}
\end{corollary}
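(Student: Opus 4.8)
The plan is to obtain the estimate as a direct consequence of the two preceding theorems through a single triangle inequality, the only real bookkeeping being the consistent choice of the iteration count $m$ that simultaneously balances the solver accuracy and the network approximation accuracy against $\varepsilon$.

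First I would fix $\varepsilon>0$ and set $m=\lceil \log(\varepsilon^{-1})/\log(c_L^{-1})\rceil$, the smallest integer satisfying the hypothesis of \cref{theorem: LLMGm}. Since the LLMG is initialized at $0$, that theorem immediately controls the \emph{solver error}:
\begin{align*}
    \norm{\mathrm{LLMG}^m(0,\bff,\bfy) - \bfu}_{A_\bfy} \leq c_L^m \norm{\bfu}_{A_\bfy} \leq \varepsilon \norm{\bfu}_{A_\bfy}.
\end{align*}

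Next I would invoke \cref{thm: LLMG_Approx} with the \emph{same} $m$ and target accuracy $\varepsilon$ to produce a CNN $\Psi$ realizing $m$ iterations of the solver up to the prescribed tolerance, i.e.
\begin{align*}
    \norm{\Psi({\bfk_\bfy}_\img,\bff_\img) - \mathrm{LLMG}^m(0,\bff,\bfy)}_{A_\bfy} \leq \varepsilon,
\end{align*}
with weight count $M(\Psi)\leq CLm$. Adding the two displayed bounds via the triangle inequality reproduces exactly the two-step chain asserted in the statement and yields $\norm{\Psi({\bfk_\bfy}_\img,\bff_\img) - \bfu}_{A_\bfy} \leq \varepsilon(1+\norm{\bfu}_{A_\bfy})$. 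Finally, substituting $m\leq \log(\varepsilon^{-1})/\log(c_L^{-1})+1$ into $CLm$ and absorbing the additive ceiling correction into the constant (valid once $\varepsilon$ is small enough that $\log(\varepsilon^{-1})/\log(c_L^{-1})\geq 1$) gives the claimed parameter bound $M(\Psi)\leq CL\log(\varepsilon^{-1})/\log(c_L^{-1})$.

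I expect no genuine obstacle in this corollary: its entire difficulty has already been discharged by the two results it combines, namely the CNN realization of the levelwise smoothing, prolongation and weighted restriction operations on the locally refined subspaces (\cref{thm: LLMG_Approx}, proved in \cref{section: Solver approx}) and the geometric contraction of the LLMG (\cref{theorem: LLMGm}). The single point demanding care is that the approximation theorem must be applied with precisely the iteration count dictated by the convergence theorem, so that both error contributions are of order $\varepsilon$ at once; it is this matching that converts the linear-in-$m$ weight bound into the logarithmic-in-$\varepsilon^{-1}$ parameter complexity.
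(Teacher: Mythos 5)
Your proposal is correct and follows exactly the paper's route: the paper states the corollary as an immediate combination of \cref{thm: LLMG_Approx} and \cref{theorem: LLMGm} via the triangle inequality, with the solver error controlled by the contraction $c_L^m \leq \varepsilon$ and the network error by the expressivity theorem applied with the same $m$. Your additional care about the ceiling in $m$ and absorbing the $+1$ into the constant for small $\varepsilon$ is a minor tidy-up the paper leaves implicit.
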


\subsection{Estimator approximation}\label{sec: estimator approximation}
A central novelty of this work is the CNN representation of the a posteriori error estimator $\eta$ subject to $u_h$ as used in the $\mathrm{AFEM}$, see~\Cref{section: estimator}.
For the analysis of the approximation, the two parts of the estimator (the jump term and strong residual) are considered separately.
The analysis is carried out for a reference triangle in $\T_{V,k}$ from a triangulation $V^k$.

\begin{definition}[Strong residual \& jump images]\label{def: estimator images}
    Let $T^1_{k,i}$ and $T^2_{k,i}$ be the triangles in the top right quadrant of node $i\in \I_U^k$ as depicted in \Cref{fig: estimator triangles} and let $k\in [L]$.
    Define 
    $\stre^2_{k,T^q}, \jure^2_{k,T^q} \in \mathbb{R}^{\I_U^k}$ 
    for $q=1,2$ by
    \begin{align*}
        (\stre ^2_{k,T^q})_{i} &\coloneqq h_{T^q_{k,i}}^2 \norm{f+\nabla\cdot(\kappa_h(\cdot, \bfy)\nabla u_h)}_{L_2(T^q_{k,i})}^2 
        \quad \text{ as the strong residual image and }\\
        (\jure^2_{k,T^q})_{i} &\coloneqq h_{T^q_{k,i}} \norm{\jump{\kappa_h(\cdot, \bfy)\nabla u_h}}_{L_2(\partial T^q_{k,i})}^2 
        \quad \qquad\quad\text{ as the jump image}.
    \end{align*}
\end{definition}

\begin{figure}
    \centering
    \includegraphics[width=0.2\linewidth]{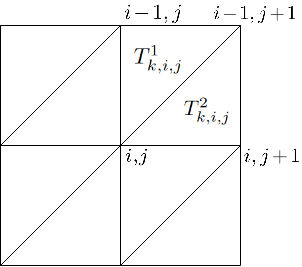}
    \caption{On each level $k\in [L]$ at each node $i\in \I_U^k$ we define the mesh elements $T^1_{k,i}$ and $T^2_{k,i}$ as the triangles in the upper right quadrant.}
    \label{fig: estimator triangles}
\end{figure}

Then the a posteriori error estimator in a triangle $T^q_{k,i}\in\T_{V,k}$ can be written as 
\begin{align*}
    \eta_{T^q_{k,i}}^2 = \left(\stre_{k,T^q}^2\right)_{i} + \left(\jure_{k,T^q}^2\right)_{i}.
\end{align*}

\begin{definition}[Uniform prolongation and weighted restriction]
    Extending the definition of the prolongation and weighted restriction from the subspaces $V_k$ to $U_k$, the \emph{uniform prolongation} is defined as $\Puk: \mathbb{R}^{\I_U^k}\to\mathbb{R}^{\I_U^{k+1}}$ such that $\varphi_i^k = \sum_{j\in\I_U^{k+1}} (\Puk)_{j,i} \varphi_j^{k+1}$ for all $i\in\I_U^k$. The transpose $\Puk^\intercal$ is called \emph{uniform weighted restriction}.
\end{definition}
Note that this is the prolongation as defined in~\cite[Defnition 2]{cosi}.

\begin{theorem}[Estimator approximation]\label{thm: estimator approx}
    Let $\eta_{k}^2, M^k\in\mathbb{R}^{2\times \I_U^k}$ 
    be defined for $k=1,\dots,L$, $q\in\{1,2\}$ and $i \in \mathcal{I}_U^k$ by $(\eta_{k}^2)[q]_{i} \coloneqq \eta^2_{T_{k,i}^q}$.
    Moreover, let $M^k[q]_i \coloneqq 1$, if $T_{k,i}^q \in \TVl{k}$ 
    and zero otherwise. 
    There exists a constant $C>0$ such that for every $\varepsilon, M>0$ there exists a CNN $\Psi: \mathbb{R}^{ \bigtimes_{\ell=1}^L 3\times \I_U^\ell}
    \to 
    \mathbb{R}^{\bigtimes_{\ell = 1}^L \I_U^\ell}$ 
    such that 
    \begin{enumerate}
        \item $\norm{M^\ell \odot \Psi(\bigtimes_{k=1}^L \uimk \times \kimk \times\fimk)[\ell] - \eta_{\ell}^2}_{\ell^\infty} \leq \varepsilon$ holds for all $\ell=1,\dots, L$ 
        and
        \item the number of parameters is bounded by $M(\Psi) \leq CL$.
    \end{enumerate}
\end{theorem}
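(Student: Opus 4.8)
The strategy is to show that the error estimator $\eta_\ell^2$ is, on each level and each triangle, a fixed multiplicative and additive combination of squared local quantities derived from the finite element coefficients. Since the solution $u_h$ is a conforming piecewise linear function, its gradient on each triangle is a constant vector that can be written as a fixed linear combination of the nodal coefficients of the three vertices. Likewise $\kappa_h$ on each triangle is (up to the assumed negligible data error) determined by coefficients in $U^\ell$. The plan is therefore to reduce both terms of $\eta_{T^q_{k,i}}^2 = (\stre^2_{k,T^q})_i + (\jure^2_{k,T^q})_i$ to a composition of (i) \emph{linear} maps realized by vanilla convolutions, whose kernels encode the discrete gradient stencil, the jump normal-derivative stencil across shared edges, and the mesh-dependent scalings $h_T^2$ and $h_T$, and (ii) a fixed number of \emph{squaring/multiplication} operations, each realized by the approximate multiplication network from \cite[Corollary C.3]{mones2021} under \cref{ass: activation function}.

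First I would treat the strong residual term. On the reference triangles $T^1_{k,i}, T^2_{k,i}$ of \Cref{fig: estimator triangles}, the quantity $f + \nabla\cdot(\kappa_h \nabla u_h)$ involves products of $\kappa_h$ with second-order behavior of $u_h$; since $u_h$ is piecewise linear its elementwise divergence term reduces to products of the local $\kappa_h$-gradient with the constant $\nabla u_h$, so each summand is a product of two affine functions of the input coefficient images. I would assemble these affine functions by vanilla convolutions with fixed stencils, approximate each required product by the multiplication network, sum, and then square once more — again via multiplication — and finally scale by $h_T^2$. Because $L^2$ norms of affine functions over a fixed reference triangle are exact quadrature with fixed weights, the $L_2(T)$ integral is a fixed quadratic form, hence expressible exactly as a linear combination of these products. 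Second I would treat the jump term $(\jure^2_{k,T^q})_i$ analogously: the jump $\jump{\kappa_h \nabla u_h}$ across each edge is, per \Cref{def: jump and estimator}, a fixed linear combination of the one-sided normal derivatives of $u_h$ weighted by $\kappa_h$; each one-sided normal derivative is a fixed linear functional of neighboring nodal coefficients, so a vanilla convolution with an appropriate edge stencil produces the jump value, which I then multiply by $\kappa_h$ (one multiplication), square (one multiplication), integrate over $\partial T$ (fixed quadrature weights), and scale by $h_T$.

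The two contributions are then added to form $\eta_{T^q_{k,i}}^2$, and the culling mask $M^\ell$ is applied by the Hadamard product $\odot$ to zero out triangles not in $\TVl{\ell}$; since masking is pointwise multiplication by a fixed $0/1$ image, it requires no trainable parameters and costs nothing in the bound. I would then assemble the full network by running these levelwise blocks in parallel across $\ell = 1,\dots,L$, reusing the uniform prolongation $\Pu$ structure only where coefficients from coarser levels must be interpolated to evaluate the estimator at resolution $\ell$. The parameter count follows from the fact that each of the finitely many multiplication subnetworks has a number of weights \emph{independent of $\varepsilon$} by \cite[Corollary C.3]{mones2021}, the convolutional stencils are of fixed size, and the construction is repeated once per level; hence $M(\Psi) \leq CL$ with $C$ independent of $\varepsilon$ and $M$. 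The accuracy claim in the $\ell^\infty$ norm then follows by propagating the individual multiplication errors through the fixed, finite composition and choosing the multiplication-network accuracies small enough — here one must track that $\kappa_h$ and $\nabla u_h$ are uniformly bounded (using $\mathfrak{c}\le\kappa\le\mathfrak{C}$) so that the products land in a compact set on which the approximate multiplication is uniformly accurate.

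The main obstacle I expect is \textbf{the jump term across level-interface edges combined with the local masking}. On uniformly refined interior edges the stencil is clean, but where a locally refined region $V^\ell$ meets the boundary of its closure $\Vab{\ell}$, the neighboring triangle may belong to a coarser level, so the one-sided gradient on the coarse side is not available at resolution $\ell$ without first prolongating the coarse coefficients; ensuring that the convolution stencil sees the correct neighboring values exactly at these boundary indices — while the mask $M^\ell$ simultaneously discards contributions outside $\TVl{\ell}$ — is the delicate bookkeeping step. This is precisely the boundary-of-subset issue flagged in the introduction as the key difference from the non-local construction of \cite{cosi}, and handling it correctly is what forces the careful use of the closure index sets $\Iab{\ell}$ and of the uniform prolongation $\Puk$ in the estimator evaluation.
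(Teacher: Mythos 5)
You have a genuine gap: your construction computes, at each output level $\ell$, the estimator of the \emph{truncated} solution $\sum_{k\le\ell}v^k$ rather than of the full Galerkin solution $u_h=\sum_{k=1}^{L}v^k$, which is what $\eta_\ell^2$ in the statement refers to. You only prolongate \emph{coarser} coefficients up to resolution $\ell$ and never feed finer-level information into the level-$\ell$ output. This matters exactly on the triangles that survive the masking: a leaf triangle $T\in\TVl{\ell}$ adjacent to a refined region carries contributions from fine hat functions $\varphi_j^{k}$ with $k>\ell$ centered at interface nodes, since by the refinement rule of \cref{section: refinement} these functions are included in the space and their supports extend across the interface into $T$. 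On such triangles both the strong residual and the jump of $\kappa_h\nabla u_h$ depend on those fine coefficients, so your output differs from $\eta_\ell^2$ by a solution-dependent amount that no choice of the multiplication-network accuracies can drive below $\varepsilon$. The ``main obstacle'' you flag at the end is precisely this point, but your proposal acknowledges it rather than resolves it.

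The paper's proof is organized in the opposite direction and thereby avoids the issue. It first prolongates the \emph{entire} multilevel decomposition to the finest level, so that $\bfup{L}$ represents $u_h$ exactly in $U^L$; it then computes the strong-residual and jump images on level $L$ only, by the stencil-plus-approximate-multiplication argument of \cref{thm: CNN for error estimator one level} (this is the part your levelwise block correctly mirrors); finally it propagates the estimator \emph{downwards} to the coarser levels via the exact aggregation identities $(\stre^2_{k,T^q})_i=4\sum_{\tilde q,j}(\stre^2_{k+1,T^{\tilde q}})_j$ and $(\jure^2_{k,T^q})_i=2\sum_{\tilde q,j}(\jure^2_{k+1,T^{\tilde q}})_j$, where the sums run over the four children of $T^q_{k,i}$, and where the jump identity holds because jumps vanish across fine edges that are not used in the adaptive discretization. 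Each aggregation step is a single stride-$2$ convolution with a fixed sparse kernel, one per level, which both settles the level-interface bookkeeping you worry about and yields the bound $M(\Psi)\le CL$. Repairing your bottom-up variant would require restricting the finer-level corrections down to level $\ell$ before evaluating the estimator there, which essentially reintroduces the paper's top-down aggregation.
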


\begin{proof}
    For the finest level $L$ and $q=1,2$ let the estimator images 
    $\stre^2_{L,T^q}, \jure^2_{L,T^q} \in \mathbb{R}^{\I_U^L}$
    be defined as in \cref{def: estimator images}.
    To represent the solution on the finest level let $\uim^{P_1}\coloneqq \uim^1$ and $\uim^{P_k} \coloneqq \Puk \uim^{P_{k-1}} + \uim^k$ 
    according to \cref{def: prolongation} for $k=1,\dots,L$.
    Then, with \cite[Remark 19]{cosi} $\uim^{P_k}$ can be calculated with a CNN and $\bfu^{P_L}$ contains the coefficients of the nodal interpolation of the function $u_h\in V$ defined by $\bigtimes_{k=1}^L \bfu^k$ in $U^L$.

    Now the estimator can be approximated on every level by approximating the residual and jump images in the fines level and combining them correctly.
    In \cref{thm: CNN for error estimator one level} we show that there exists a CNN architecture such that for every $\varepsilon, M >0$ and $q=1,2$ there exists a CNN $\Psi$ with
    \begin{align*}
        \norm{\Psi(\bfu^{P_L}, \bff^L, \bfk_\bfy^L)[q] - (\stre_{L,T^{q}}^2, \jure_{L,T^{q}}^2)}_\infty \leq \varepsilon.
    \end{align*}
    Observe that for triangles $T_k\in\T_k$ it holds that $h_{T_k} = h_0/2^k = 2 h_0/2^{k+1} = 2 h_{T_{k+1}}$. Furthermore, each triangle on level $k$ is equal to the union of four triangles on level $k+1$
    \begin{align*}
        T^q_{k,i} = \bigcup_{\tilde{q}\in\{1,2\}}\bigcup_{\substack{j\in\I_U^{k+1}\text{s.t.}\\ T^{\tilde{q}}_{k+1,j}\subset T^q_{k,i}}} T^{\tilde{q}}_{k+1,j}.
    \end{align*}
    This yields
    for $i\in\I_U^k$ and for triangles as in \Cref{fig: estimator triangles}
    \begin{align*}
        (\stre^2_{k,T^q})_i &= h_{T^q}^2\norm{f+\nabla\cdot(\kappa_h(\cdot, \bfy)\nabla u_h)}_{L_2(T^q_{k,i})}^2 = 2^2 
        \sum_{{\tilde{q}}\in\{1,2\}} \sum_{\substack{j\in\I_U^{k+1}\text{s.t.}\\ T^{\tilde{q}}_{k+1,j}\subset T^q_{k,i}}} h_{T_{k+1}}^2\norm{f+\nabla\cdot(\kappa_h(\cdot, \bfy)\nabla u_h)}_{L_2(T^{\tilde{q}}_{k+1,j})}^2\\
        &= 4 \sum_{{\tilde{q}}\in\{1,2\}}\sum_{\substack{j\in\I_U^{k+1}\text{s.t.}\\ T^{\tilde{q}}_{k+1,j}\subset T^q_{k,i}}} (\stre_{k+1,T^{\tilde{q}}}^2)_j.
    \end{align*}
    This can be implemented with one CNN layer with a sparse kernel and stride $2$ for each level.
    Since the jump term is zero on edges in the fine discretization, which have not been used to solve for $u_h$, jumps over edges of some triangle $T^{\tilde{q}}_{k+1,j}$ on level $k+1$ inside a triangle $T^q_{k,i}$ on level $k$ can be added up to yield the jumps only over edges on the coarser level. This yields
    \begin{align*}
        (\jure_{k,T^q}^2)_i &= h_{T_k} \norm{\jump{\kappa_h(\cdot, \bfy) \nabla u_h}}_{L^2(\partial T^q_{k,i})}^2 = 2 
        \sum_{{\tilde{q}}\in\{1,2\}}\sum_{\substack{j\in\I_U^{k+1}\text{s.t.}\\ T^{\tilde{q}}_{k+1,j}\subset T^q_{k,i}}} h_{T_{k+1}} \norm{\jump{\kappa_h(\cdot,\bfy) \nabla u_h}}^2_{L^2(\partial T^{\tilde{q}}_{k+1,j})}\\
        &= 2\sum_{{\tilde{q}}\in\{1,2\}}\sum_{\substack{j\in\I_U^{k+1}\text{s.t.}\\ T^{\tilde{q}}_{k+1,j}\subset T^q_{k,i}}} (\jure^2_{k+1,T^{\tilde{q}}})_j.
    \end{align*}
    This can also be realized by one CNN layer with a sparse kernel and a stride of $2$ for each level. Since adding the strong residual image and jump image yields the error estimator for triangles $T_{k,i}^q\in\T_{V,k}$, multiplying with a mask setting all other output entries to zero yields the claim.
\end{proof}

\subsection{AFEM approximation}

Combining the approximation of the multigrid solver and the error estimator with a marker based on the estimator leads to an approximation of the whole $\mathrm{AFEM}$ algorithm.
For the formulation of our main theorem, the following coefficient-to-function map is needed.

\begin{definition}
    Let $\fem: \mathbb{R}^{\I_U^1\times \dots \times \I_U^L} \to H_0^1$ be the function that maps the finite element coefficients to the corresponding function in $H_0^1$ by
    \begin{align*}
        \fem(\bfu) \coloneqq \sum_{k=1}^L\sum_{i\in\I_U^k} \bfu^k_i \varphi^k_i.
    \end{align*}
\end{definition}

Our main result then gives an upper bound on the number of parameters needed by the constructed networks architecture to approximate the solution of the Darcy problem as well as the result of the $\mathrm{AFEM}$ algorithm.
In summary, the derived bound depends linearly on the number of refinement levels as well as linearly on the number of steps of the $\mathrm{AFEM}$ and logarithmically on the inverse of the desired accuracy.
\begin{theorem}[Approximate $\mathrm{AFEM}$]\label{thm: AFEM approximation}
    Assume there exist $\mathfrak{c},\mathfrak{C}>0$ such that $\mathfrak{c}\leq \kappa(x,\bfy) \leq \mathfrak{C}$ for all $x\in D$ and $\bfy\in\Gamma$.
    Let $\varepsilon>0$ and $K, L\in\mathbb{N}$ be the number of $\mathrm{AFEM}$ iterations and maximal refinements of each triangle, respectively. Consider a threshold marking strategy. Then there exists a CNN $\Psi$ such that $M(\Psi) \lesssim LK \log(\varepsilon^{-1})/\log(c_L^{-1})$ with $c_L \coloneqq \frac{cL}{1+cL}, c>0$ and for any $\bfy\in\Gamma$
    \begin{center}
        $\norm{u(\cdot,\bfy) - \fem(\Psi(\bfk_\bfy, \bff))}_{H^1(D)} \leq \norm{u(\cdot,\bfy) - \fem(\mathrm{AFEM}(V_1, K))}_{H^1(D)} + \varepsilon.$ 
    \end{center}
\end{theorem}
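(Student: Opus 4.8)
The plan is to realize $\Psi$ as the composition of $K$ nearly identical blocks, one for each iteration of \cref{alg: AFEM}, where each block is itself a concatenation of four sub-networks mirroring the Solve--Estimate--Mark--Refine loop, all sized for the maximal number $L$ of levels. The solve sub-network is supplied by \cref{cor: sol approx}: on the current active index sets it approximates $\mathrm{LLMG}^m(0,\bff,\bfy)$ (equivalently the Galerkin iterate used inside the reference AFEM) in the $\ay$-norm to any prescribed tolerance using $\lesssim L\log(\varepsilon^{-1})/\log(c_L^{-1})$ weights. First I would prepend to it the interpolation of the running iterate onto the current space and the formation of the residual $\bff-A_\bfy\bfu$; by \cref{thm: levelwise Au} both are levelwise prolongation/restriction operations, hence realizable with $\mathcal{O}(1)$-parameter strided and transpose-strided convolutions. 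The estimate sub-network is supplied by \cref{thm: estimator approx}, which produces the squared local indicators $\eta^2_{k,i}$ on every level from the coefficient images of $u_h$, $\kappa$ and $f$ using $\lesssim L$ weights.

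Next I would build the Mark and Refine sub-networks. Threshold marking (\cref{definition: threshold marking}) compares each $\eta_T^2$ against the level-dependent constant $\delta_h$, which I would implement as a CNN-approximation of the indicator $x\mapsto\mathbf{1}[x>\delta_h]$, yielding a $0/1$ culling mask on each level with $\mathcal{O}(1)$ weights per level. Refinement then amounts to activating, on the next finer level, exactly those nodal basis functions whose support meets a marked element; since this is a fixed local stencil applied to the mask, it is again a sparse convolution with $\mathcal{O}(1)$ weights, and its effect is to update the active sets $\I_V^k$ and their closures $\Iab{k}$ on which the subsequent block operates. Concatenating the four sub-networks gives a block with $\lesssim L\log(\varepsilon^{-1})/\log(c_L^{-1})$ weights, and stacking $K$ of them yields $M(\Psi)\lesssim LK\log(\varepsilon^{-1})/\log(c_L^{-1})$, since the parameter count of a composition is the sum of the parameter counts.

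For the error bound I would argue by induction over the $K$ iterations that the active index sets produced by the block at step $j$ coincide with those produced by \cref{alg: AFEM} at step $j$, so that the two algorithms share the same final space $V^{(K)}$. Granting this, the final solve sub-network approximates the AFEM output in $\ay$ to a tolerance $\varepsilon'$ by \cref{cor: sol approx}; since $\fem(\mathrm{AFEM}(V_1,K))$ and $\fem(\Psi(\bfk_\bfy,\bff))$ then both lie in $V^{(K)}$, the uniform bounds $\mathfrak{c}\le\kappa\le\mathfrak{C}$ convert this into an $H^1$-bound on their difference. The triangle inequality, splitting the $H^1$-error into $\norm{u(\cdot,\bfy)-\fem(\mathrm{AFEM}(V_1,K))}_{H^1(D)}$ and $\norm{\fem(\mathrm{AFEM}(V_1,K))-\fem(\Psi(\bfk_\bfy,\bff))}_{H^1(D)}$, then closes the estimate with coefficient one on the AFEM term once the second summand is driven below $\varepsilon$ by choosing $\varepsilon'$ accordingly.

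The hard part is precisely this inductive coincidence of the trajectories, because Mark is a discontinuous operation: a small estimator error can flip a marking decision for an element whose true indicator $\eta_T^2$ lies within the approximation tolerance of the threshold $\delta_h$, after which the CNN mesh and the AFEM mesh diverge and the clean coefficient-one comparison is lost (approximating on a merely finer space via best approximation would only yield a constant $\ge 1$ from the energy/$H^1$ equivalence). I would control this by imposing a mild non-degeneracy condition, namely that the exact indicators stay bounded away from $\delta_h$ uniformly in $\bfy$, and then choosing the per-step solve and estimator tolerances below that gap so that all $K$ marking decisions agree and the trajectories coincide exactly. A secondary technicality is the accumulation of the intermediate solve errors across iterations; since the interpolation, residual and solve steps are bounded operations, the accumulated error grows at most like $K$ times the per-step tolerance, so taking the same tolerance $\varepsilon'\sim\varepsilon/K$ for all $K$ solve sub-networks suffices, and the resulting $\log K$ is lower order in $\varepsilon$ and absorbed into the stated bound $LK\log(\varepsilon^{-1})/\log(c_L^{-1})$.
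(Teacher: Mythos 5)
Your architecture, parameter accounting, and the use of \cref{cor: sol approx}, \cref{thm: estimator approx} and \cref{lem: CNN concatenation approximation} all match the paper's construction, but your error argument has a genuine gap, and it is exactly the one you flag yourself: you need the CNN's marking decisions to coincide with the exact ones in every one of the $K$ iterations, and to get this you impose a non-degeneracy condition (the true indicators stay uniformly away from the thresholds $\delta_h$, uniformly in $\bfy$). That hypothesis is not part of \cref{thm: AFEM approximation}, which is claimed for every $\bfy\in\Gamma$ under threshold marking alone; for a generic parameter family some indicator will cross its threshold as $\bfy$ varies, so the gap condition can fail and your induction breaks. The paper never attempts trajectory coincidence. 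In its marking step (\cref{section: afem approx}, step (iii)) it builds $\Psi_{\text{mark}}$ as a convolutional layer that subtracts the \emph{shifted} threshold $\delta_k-\varepsilon_{\text{eta}}$ followed by a heaviside activation, so that the entrywise \emph{one-sided} domination $M^k\leq \Psi_{\text{mark}}(\tilde\eta_k^2)$ holds: every element marked by exact threshold marking (with the true estimator of the true Galerkin solution) is also marked by the network, though the network may mark more. Consequently the spaces produced by the network are nested over the AFEM spaces, $\tilde V\supset V$, in every iteration, and the final estimate is closed by the triangle inequality together with the Galerkin best-approximation property on the larger space, $\norm{\fem(\bfu_{\tilde V})-u}\leq\norm{\fem(\bfu_{V})-u}$, plus the solver tolerance $\varepsilon_{\text{cor}}$. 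No agreement of meshes and no gap assumption is needed.

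Two further remarks. First, your stated reason for rejecting the superset route --- that best approximation on a merely finer space costs a constant $\geq 1$ from the energy/$H^1$ equivalence --- is a fair criticism of the paper's own final chain of inequalities (Galerkin optimality holds in $\anorm{\cdot}$, and the paper passes to $H^1_0$ with coefficient one without comment), but it does not rescue your route: accepting that equivalence constant still yields the theorem for all $\bfy$, whereas your route proves a different statement under an added hypothesis. Second, your marking layer as described is not realizable to the required accuracy: a CNN with an activation satisfying \cref{ass: activation function} is continuous, so it cannot uniformly approximate the indicator $x\mapsto\mathbf{1}[x>\delta_h]$ near the threshold --- which is again the same degeneracy issue. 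The paper's combination of a threshold shift by the known estimator tolerance $\varepsilon_{\text{eta}}$ with an exact heaviside nonlinearity is what turns an approximate, possibly flipped marking into an exact one-sided inclusion, and that inclusion is what the whole proof rests on.
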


The proof of this theorem can be found in \cref{section: afem approx}.
The complete architecture is depicted in~\Cref{fig: CNN arch}. Here, the solver in each step is approximated by U-Nets encoded by green arrows outputting approximations of the solution in a multigrid discretization (green boxes) as described in \cref{thm: LLMG_Approx}. The estimator (orange) is approximated based on the approximate solutions with networks as constructed in \cref{thm: estimator approx} and the refinement masks (purple boxes) are derived from the estimator and used in the next solver. Here, the space was refined uniformly in the first step leading to masks, which are $1$ everywhere. In the second step the space was refined locally leading to a $0/1$-mask on the finest level. Adding all continuous functions corresponding to the images in the green boxes as in \eqref{eq: v decomposition} leads to the full approximate solution.

\begin{remark}\label{remark: other marking strategies in AFEM approx}
    Global marking strategies such as D\"orfler marking cannot be implemented directly in a CNN due to its local action in a neighbourhood. However, such a marking can in principle be implemented outside the CNN based on the estimator prediction of the CNN. As an alternative, the marking strategy could be learned by a separate NN based on the locally adapted training data. This marking NN could then be combined with the proposed CNN. Recent research in this direction can e.g. be found in~\cite{gillette2024learning,sluzalec2023quasi,yang2023reinforcement,foucart2023deep,feischlrecurrent}.
\end{remark}

\begin{figure}
    \centering
    \begin{tikzpicture}
        \node (CNN) at (0,-.5) {\includesvg[width=\linewidth]{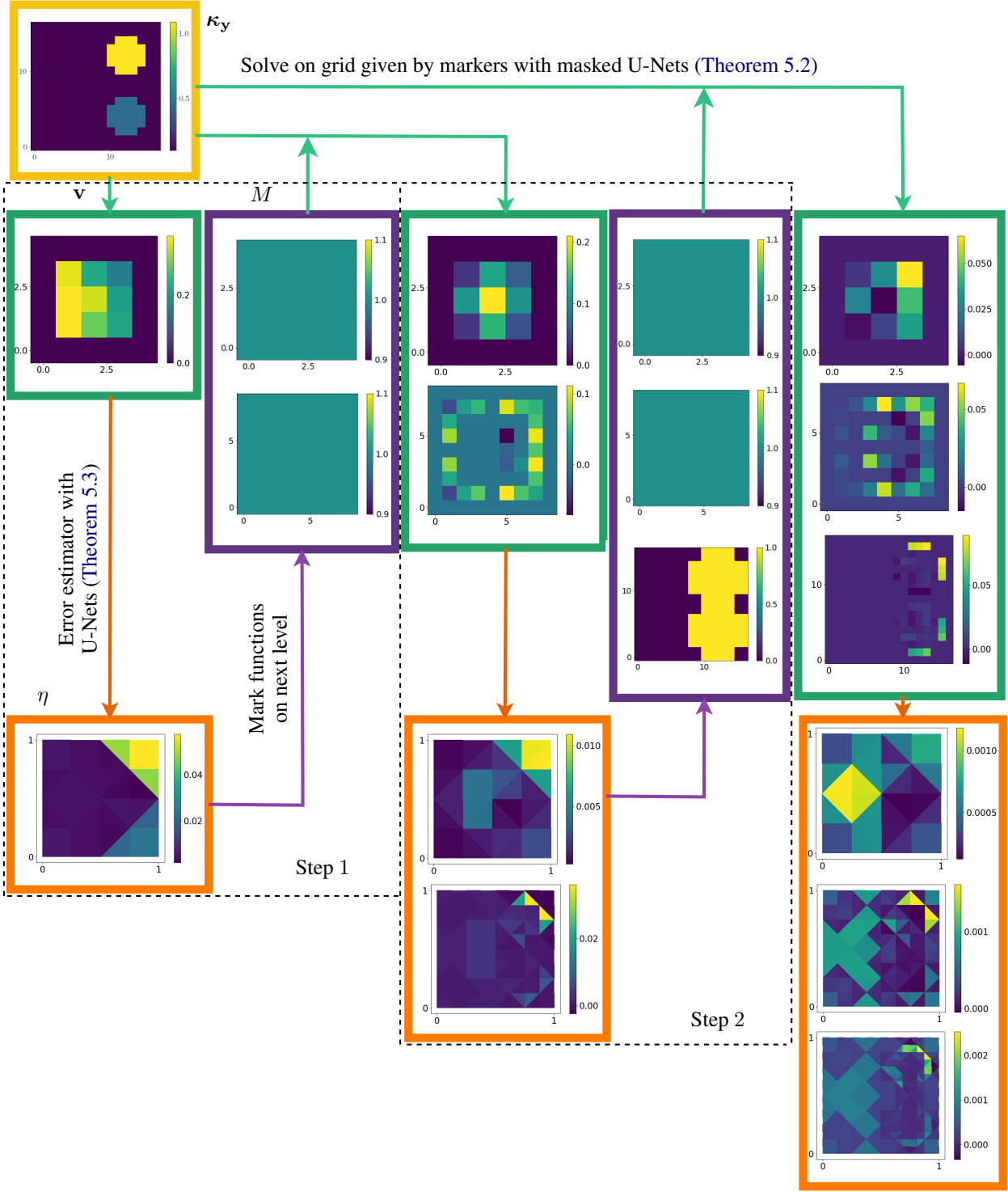}};
        \node (kappa) at (-4.7,8.9) {$\bfk_\bfy$};
        
        \node (thm) at (0.4,8.2) {Solve on grid given by markers with masked U-Nets (\cref{thm: LLMG_Approx})};
        \node[rotate=90, align=center] (thm) at (-7,0.1) {Error estimator with\\ U-Nets (\cref{thm: estimator approx})};
        \node[rotate=90, align=center] (thm) at (-3.9,-1.7) {Mark functions\\ on next level};
        
        \node at (-3, -5) {Step $1$};
        \node at (3.5, -7.5) {Step $2$};
        
        \node (u1) at (-7,6.1) {$\bfv$};
        \node (e1) at (-7.6,-2.2) {$\eta$};
        \node (m1) at (-4,6.1) {$M$};

    \end{tikzpicture}
    \caption{
    The derived CNN architecture is depicted for an approximation of three steps of the $\mathrm{AFEM}$. 
    The CNN mapping starts with the nodal interpolation of the parameter field $\bfk_\bfy$ on the finest level given as an input image.
    As in~\Cref{alg: AFEM}, in every step the solution $\bfv$ of the system of linear equations (\cref{alg: LLMG: first update}, \cref{alg: LLMG: second update}) is calculated (green arrows) and the solution is given in a multigrid decomposition~\eqref{eq: v decomposition} (green boxes), compare~\cref{thm: LLMG_Approx}.
    The approximation of the error estimator $\eta$ represented as in~\cref{thm: estimator approx} is depicted in orange errors and its multigrid decomposition as in \eqref{eq: DG0} in orange boxes. The derived markers are encoded by $0/1$-masks $M$ visualized in the purple boxes.
    The masks are then used in the network of the next iteration to enforce an action only on local parts of the larger images to imitate a local mesh refinement. Note that one $\mathrm{AFEM}$ iteration corresponds to one black dashed box.
    }
    \label{fig: CNN arch}
\end{figure}

\section{Numerical experiments}\label{section: numerics}

This section is concerned with the practical performance of the proposed architecture. Here, we present preliminary proof of concept results. The architecture should be tested for more steps of the adaptive solver and different expansions of the parameter field.
The numerical tests are implemented for a parametric stationary diffusion problem with parametric coefficient defined by
\begin{align*}
    \kappa(\cdot,\bfy) \coloneqq 0.1 + \bfy_1 \chi_{D_1} + \bfy_2 \chi_{D_2}.
\end{align*}
For this ``cookie problem'', we assume that $\bfy \sim U([0,1]^2)$, $D_1,D_2$ are disks of radius $r=0.15$ and centers at $(0.75, 0.25)$ and $(0.75, 0.75)$, respectively.
The architecture is implemented for $K=3$ steps of the $\mathrm{AFEM}$. For each step a solver was approximated with $3,2,1$ U-Nets. The overall number of trainable parameters is $2\, 441\, 516$. 
In \Cref{fig: network images}, the final network outputs are compared to a reference solution (obtained by solving on twice uniformly refined meshes) as well as error estimators and markers from the training data.
It can be observed that solution and estimator are approximated well with local errors magnitudes smaller than the actual values.
Note that the marker based on the error estimator as derived in the network differs from the marker used to generate the data. This inaccuracy in the prediction of the marked elements leads to nonzero elements in the network output in areas, which ideally should not be refined.
In~\Cref{fig: network images} this leads to the local error in the upper right corner in the solution approximation and to the difference in $H^1$-error decays in the first row in~\Cref{fig: error decay over levels,fig: error decay over parameters}.
In these figures, the graphs depict the $H^1$ and $L^2$ relative errors of the neural network approximation and the solutions of the $\mathrm{AFEM}$. In~\Cref{fig: error decay over levels}, the errors are plotted over the steps of the $\mathrm{AFEM}$ and in~\Cref{fig: error decay over parameters} they are plotted against the degrees of freedom used in the approximation and the $\mathrm{AFEM}$.
The graphs in the first rows show results for the fully adaptive CNNs, choosing the markers based on the approximated estimators without using the mesh refinement used in the \texttt{FEniCS}\cite{fenics} FEM package, which was used for the data generation. This element is still inexact and needs to be adjusted.
The second rows show the results based on a CNN using masks known from the data generation. Since the decays with known masks match the true error decay of the test data, the main step to optimize is the mask generation.

In summary, for a local refinement with known masks, the relative $H^1$ and $L^2$ errors of the network show the same decay as the true $\mathrm{AFEM}$ for three steps.
For a fully adaptive CNN, the $L^2$ errors match the true errors while the $H^1$ errors are larger, probably due to inexact masks, which will be a topic of future investigations.
\begin{figure}
    \centering
    \begin{tikzpicture}

        \node at (0,4.5) {$M$};
        \node at (0,3) {\includegraphics[width=0.21\linewidth]{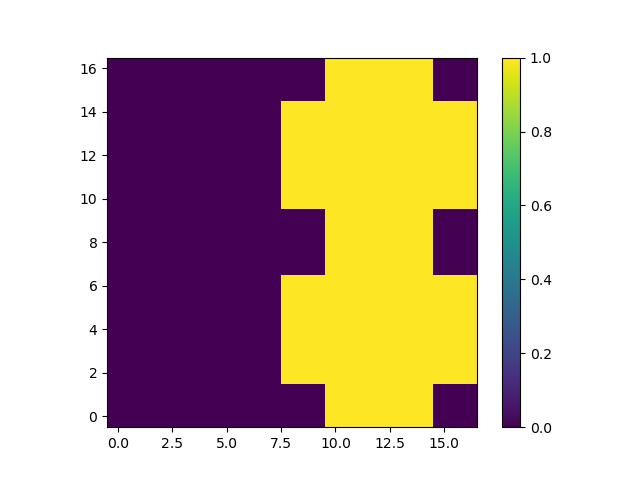}};
        
        \node at (4, 4.5) {$M^{\text{NN}}$};
        \node at (4,3) {\includegraphics[width=0.21\linewidth]{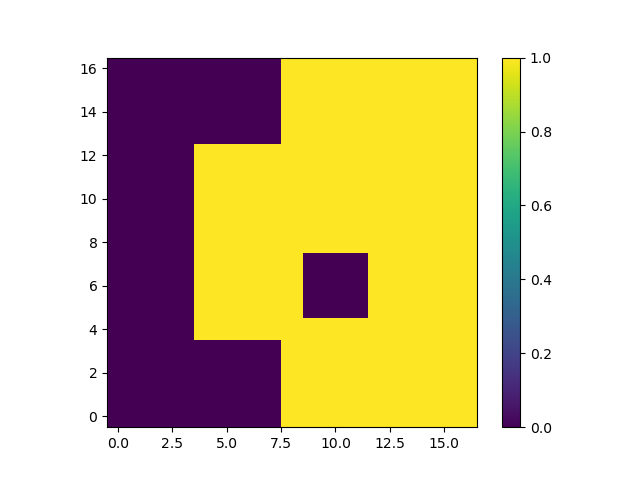}};
        
        \node at (7.7,4.5) {$\kappa$};
        \node at (7.8,3) {\includegraphics[width=0.21\linewidth]{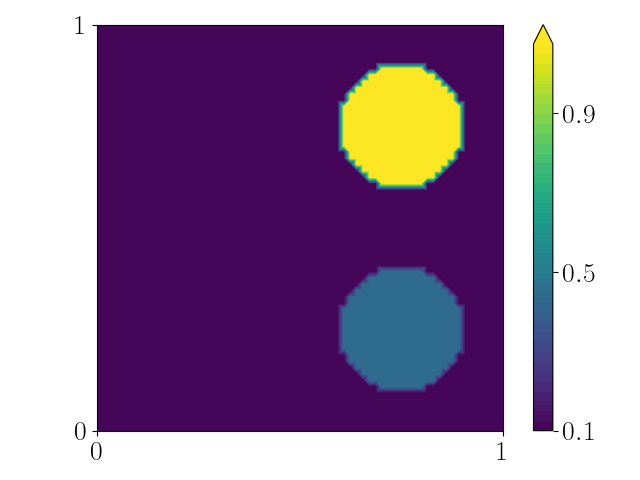}};
    
        \node at (0,1.5) {$u_h$};
        \node at (0,0) {\includegraphics[width=0.21\linewidth]{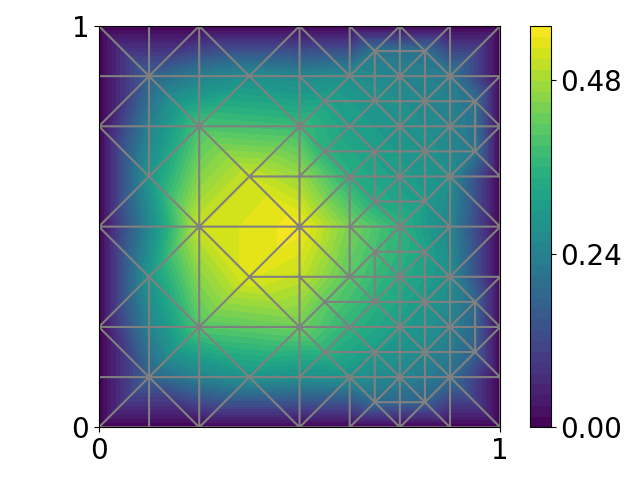}};
    
        \node at (4,1.5) {$u_h^{\text{NN}}$};
        \node at (4,0) {\includegraphics[width=0.21\linewidth]{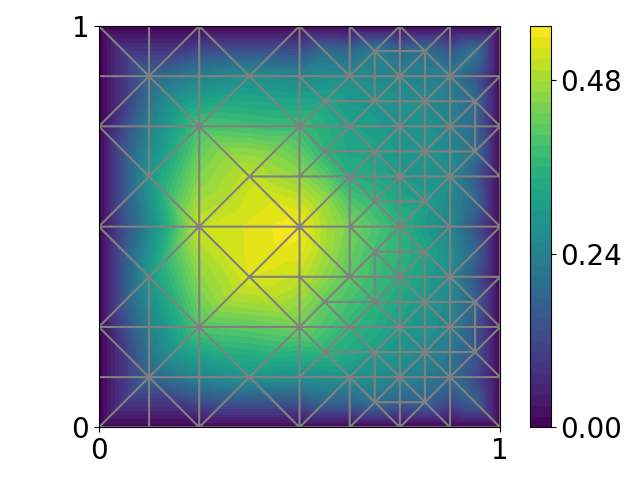}};
    
        \node at (7.7,1.5) {$\lvert u_h^{\text{NN}} - u^h \rvert$};
        \node at (8,0) {\includegraphics[width=0.21\linewidth]{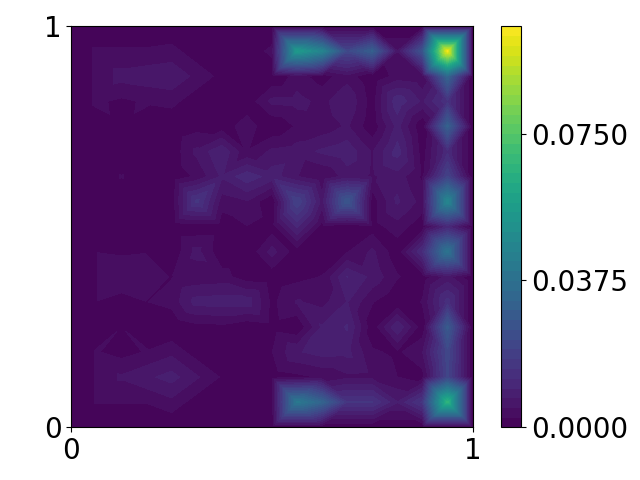}};
    
        \node at (0,-1.5) {$\eta$};
        \node at (0,-3) {\includegraphics[width=0.21\linewidth]{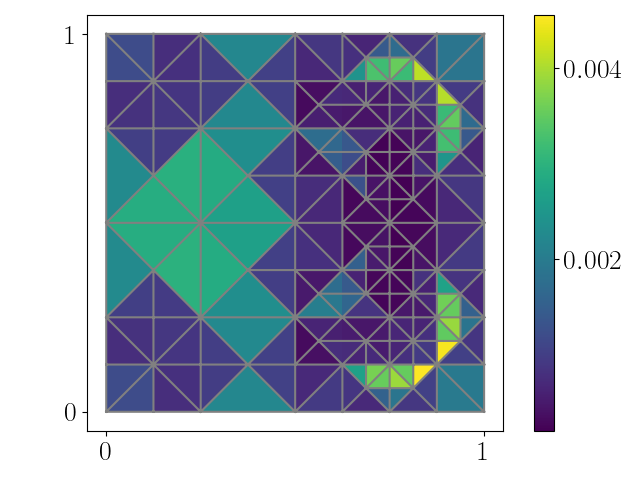}};
    
        \node at (4, -1.5) {$\eta^{\text{NN}}$};
        \node at (4,-3) {\includegraphics[width=0.21\linewidth]{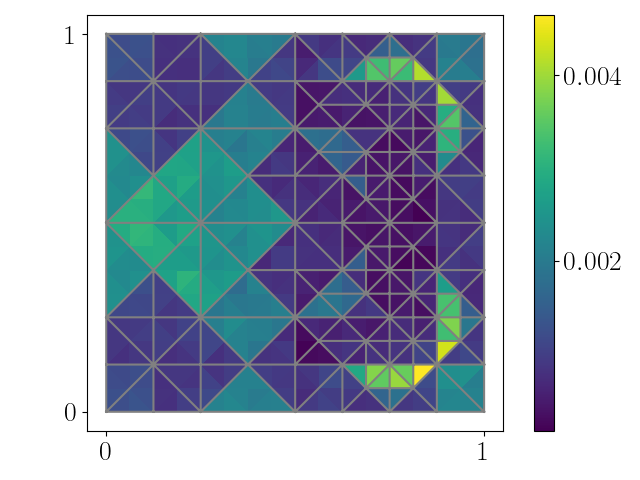}};
                
        \node at (7.7,-1.5) {$\lvert \eta^{\text{NN}} - \eta \rvert$};
        \node at (8,-3) {\includegraphics[width=0.21\linewidth]{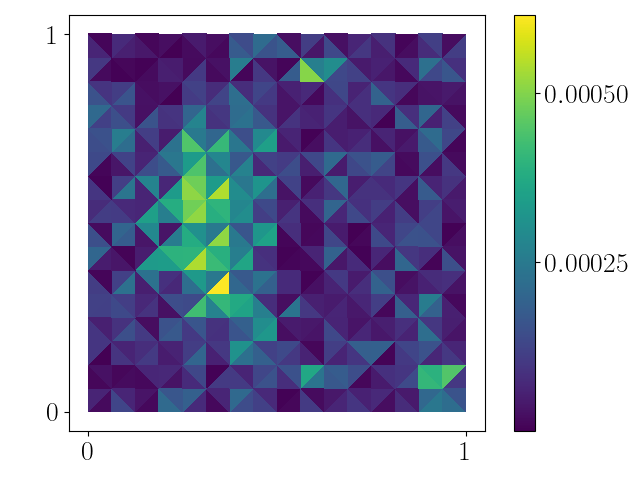}};
    \end{tikzpicture}
    \caption{True and network prediction solutions, estimators and markers are plotted for the third step of the $\mathrm{AFEM}$.
     From left to right, the first row shows the marking image on the third level, which was used for training, the marking image, which the network deduced from the estimator of the the second solution and the parameter field sample.
     The second row shows the Galerkin solution on the mesh used for training, the second plot shows the network output, and the last images shows the difference between the two.
    The last row shows the first the estimator of the third solution in the $\mathrm{AFEM}$ iteration, the network approximation of the estimator and the difference between the two. It can be seen that the pointwise distances are a magnitude smaller than the true values.}
    \label{fig: network images}
\end{figure}

\begin{figure}
    \centering
    \begin{tikzpicture}
        \node at (0,2.2) {$H^1$ error decay over levels};
        \node at (7,2.2) {$L^2$ error decay over levels};
        \node[rotate=90] at (-2.7,0.1) {Adaptive neural network};
        \node[rotate=90] at (-2.7,-3.7) {Fixed local refinement};
        
        \node at (0,0) {\includegraphics[width=0.3\linewidth]{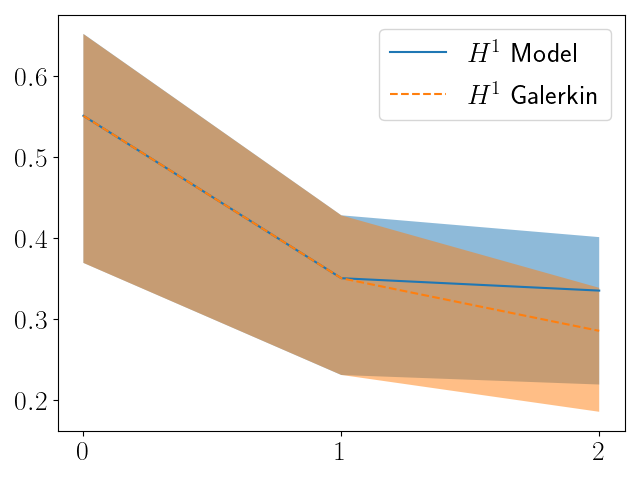}};
        \node at (0,-3.8) {\includegraphics[width=0.3\linewidth]{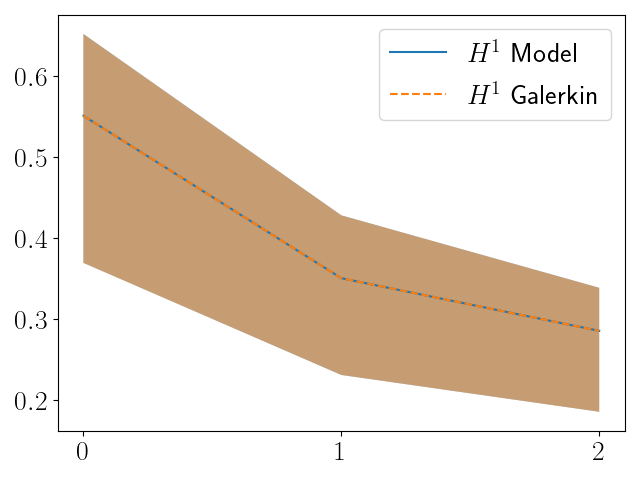}};
        
        \node at (7,0) {\includegraphics[width=0.3\linewidth]{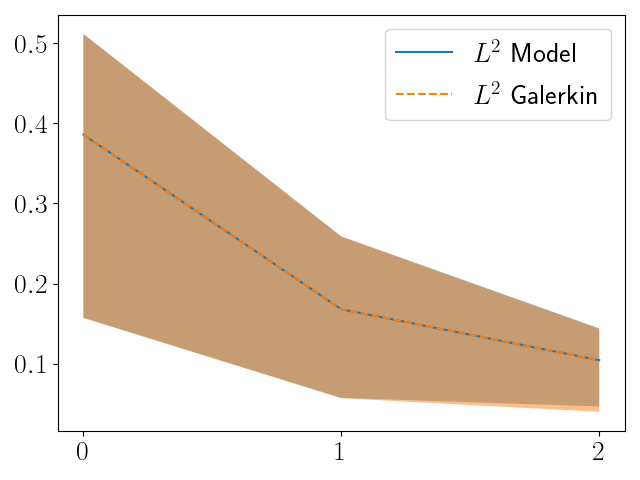}};
        \node at (7,-3.8) {\includegraphics[width=0.3\linewidth]{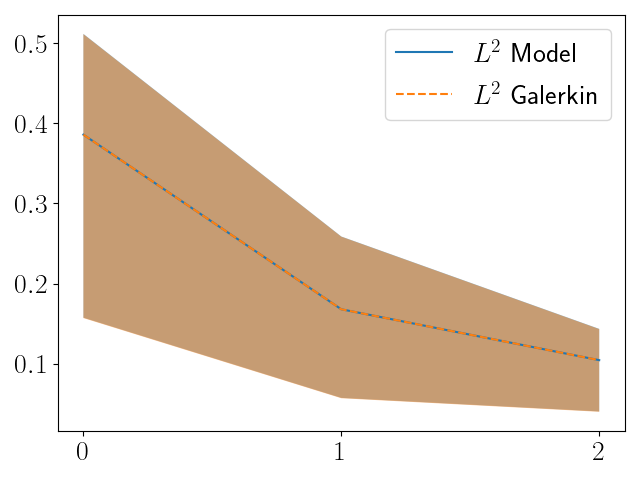}};

    \end{tikzpicture}
    \caption{The average relative $H^1$ and $L^2$ errors are plotted against the number of steps of the $\mathrm{AFEM}$ $K=1,2,3$ together with the error range from the minimal to the maximal error in every step.}
    \label{fig: error decay over levels}
\end{figure}

\begin{figure}
    \centering
    \begin{tikzpicture}
        \node at (0,2.2) {$H^1$ error decay over parameters};
        \node at (7,2.2) {$L^2$ error decay over parameters};
        \node[rotate=90] at (-2.7,0.1) {Adaptive neural network};
        \node[rotate=90] at (-2.7,-3.7) {Fixed local refinement};
        
        \node at (0,0) {\includegraphics[width=0.3\linewidth]{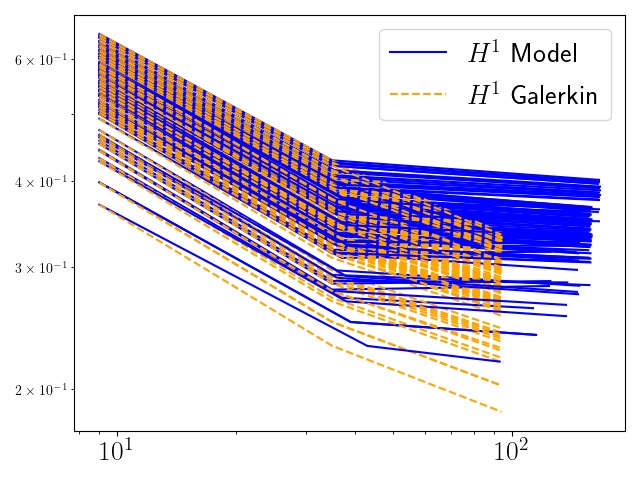}};
        \node at (0,-3.8) {\includegraphics[width=0.3\linewidth]{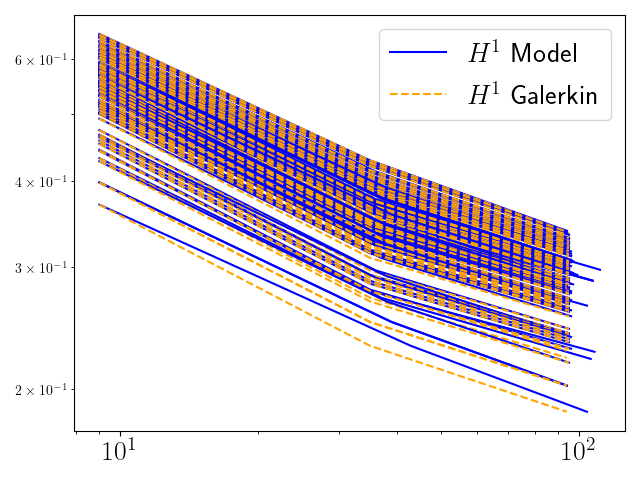}};
        
        \node at (7,0) {\includegraphics[width=0.3\linewidth]{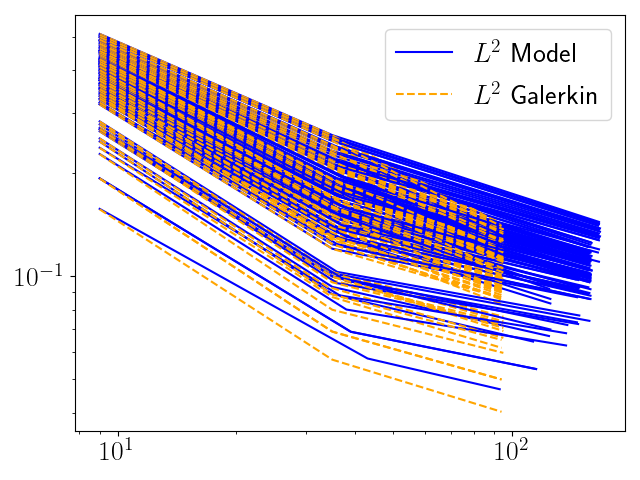}};
        \node at (7,-3.8) {\includegraphics[width=0.3\linewidth]{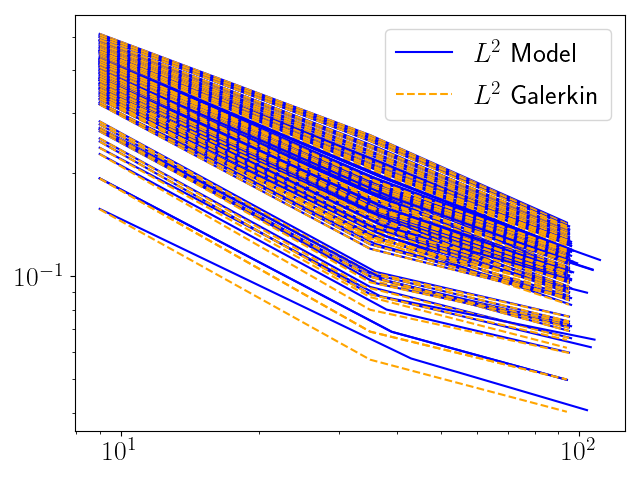}};
    \end{tikzpicture}
    \caption{The average relative $H^1$ and $L^2$ errors are plotted against the number of parameters used by the $\mathrm{AFEM}$ and the neural network.}
    \label{fig: error decay over parameters}
\end{figure}

\section{Outlook}\label{section: outlook}

In the paper, we derived an algorithm $\mathrm{LLMG}$, which approximates the parametric diffusion PDE on a fixed grid based on a multigrid decomposition of the solution and a successive subspace correction algorithm.
We showed that the derived algorithm can be approximated efficiently in the number of parameters by a derived CNN architecture.
Furthermore, we showed that an efficient and reliable finite element error estimator can be approximated by a specific CNN construction.
These results were combined to show upper bounds for the number of parameters of CNNs approximating a complete adaptive finite element scheme.

It is now interesting to put this architecture to use and explore the efficiency of the networks numerically, mainly with respect to two aspects. First, the number of calculations on each level should be reduced comparing to fully refined meshes~\cite{cosi} due to the submanifold sparse convolutions on sparse tensors. Note that the sparsity of the tensors stems from the used multigrid decomposition of the data.
Second, the efficiency with respect to the number of samples needed for training should be explored, considering that in each step corrections with decreasing influence on the whole solutions need to be learned. This should lead to fewer training samples on fine grids and hence a more efficient training and data generation process.

In addition, one might be interested in deriving convergence results for the proposed adaptive scheme with the presented refinement and for different marking strategies. The direction of showing the CNN approximation results for other meshes without hanging nodes might also be of interest.

\newpage
\bibliographystyle{abbrv}
\bibliography{lib}

\appendix
\newpage
\section{Error Estimator Derivation}\label{section:error estimator derivation}
We consider the residual in variational form for error $e := u - u_h$, where $u_h$ is the Bubnov-Galerkin approximation of $u$ on $V_h$ and $\mathcal{T}$ is an (exact) triangulation of domain $D$.
The residual based error estimator is common knowledge in the FEM literature, cf.~\cite{braess2007finite,carstensen2012review,verfurth}.
For the sake of a self-contained presentation, we provide the derivation in what follows since it may help the comprehension of the CNN approximation in this paper. It holds that
\begin{align*}
    a_{\bfy,h}(e,v) &= a_{\bfy,h}(u,v) - a_{\bfy,h}(u_h,v) = f(v) - a_{\bfy,h}(u_h,v) = \int_D fv -  \kappa_h(\cdot,\bfy)\scpr{ \nabla u_h, \nabla v } \dx\\
    &=\sum_{T\in\mathcal{T}} \int_T fv -  \kappa_h(\cdot,\bfy)\scpr{ \nabla u_h, \nabla v } \dx.
\end{align*}
Furthermore, let $n_T$ be the unit outward normal vector to $\partial T$ for $T\in\mathcal{T}$. Then, it holds that
\begin{align*}
    a_{\bfy,h}(e,v)&=\sum_{T\in\mathcal{T}} \int_T fv\dx  + \int_T v \nabla \cdot (\kappa_h(\cdot,\bfy)\nabla u_h)\dx - \int_{\partial T} v\kappa_h(\cdot,\bfy) \frac{\partial u_h}{\partial n_T} \mathrm{d}s\\
    &=\sum_{T\in\mathcal{T}} \int_T(f+\nabla\cdot(\kappa_h(\cdot,\bfy)\nabla u_h))v\dx + \sum_{\gamma\in\partial \mathcal{T}}\int_\gamma v\kappa_h(\cdot,\bfy) \bra{\scpr{\nabla u_h^{(1)}, n_\gamma^{(1)} } + \scpr{ \nabla u_h^{(2)}, n_\gamma^{(2)} }} \mathrm{d}s.
\end{align*}
Here, $n_\gamma^{(1)}$ and $n_\gamma^{(2)}$ are the unit outward normal vectors of the elements of the mesh containing $\gamma$ and $\nabla u_h^{(1)},\nabla u_h^{(2)}$ are the gradients of $u_h$ on the elements.
With the Galerkin projection $v_h$ of $v$ on $V_h$ and the definition of the jump \eqref{def: jump}, with some $\tilde C>0$ one gets the estimate
\begin{align*}
    a_{\bfy,h}(e,v) &=\sum_{T\in\mathcal{T}} \int_T(f+\nabla\cdot(\kappa_h(\cdot,\bfy)\nabla u_h))(v-v_h)\dx + \sum_{\gamma\in\partial \mathcal{T}}\int_\gamma \jump{\kappa(\cdot, \bfy)\nabla u_h \cdot \hat{n}} (v-v_h) \mathrm{d}s\\
    &\leq \sum_{T\in\mathcal{T}} \norm{f+\nabla\cdot(\kappa_h( \cdot, \bfy)\nabla u_h}_{L_2(T)}\norm{v - v_h}_{L_2(T)} + \sum_{\gamma\in\partial \mathcal{T}}\norm{\jump{\kappa_h(\cdot, \bfy)\nabla u_h}}_{L_2(\gamma)}\norm{v - v_h}_{L_2(\gamma)}\\
    &\leq \tilde{C}\norm{v}_{H^1(\Omega)} \left( \sum_{T\in\mathcal{T}} h_T^2\norm{f+\nabla\cdot(\kappa_h(\cdot, \bfy)\nabla u_h)}_{L_2(T)}^2 + \sum_{\gamma\in\partial\mathcal{T}}h_E \norm{\jump{\kappa_h(\cdot, \bfy)\nabla u_h}}_{L_2(\gamma)}^2 \right)^{1/2}.
\end{align*}
Setting $v=e$ and with $\norm{v}_{H^1(\Omega)}\leq C\anorm{v}$ we arrive at
\begin{align*}
    \anorm{e}^2 &= \frac{(\anorm{e}^2)^2}{\anorm{e}^2} = \frac{(a_{\bfy,h}(e,e))^2}{\anorm{e}^2}\\
    &\leq \frac{1}{\anorm{e}^2} \tilde{C}^2 C^2\anorm{e}^2 \left( \sum_{T\in\mathcal{T}} h_T^2\norm{f+\nabla\cdot(\kappa_h(\cdot, \bfy)\nabla u_h)}_{L_2(T)}^2 + \sum_{\gamma\in\partial\mathcal{T}}h_T \norm{\jump{\kappa_h( \cdot, \bfy)\nabla u_h}}_{L_2(\gamma)}^2 \right)\\
    &\leq \hat{C}\sum_{T\in\mathcal{T}} h_T^2\norm{f+\nabla\cdot(\kappa_h(\cdot, \bfy)\nabla u_h)}_{L_2(T)}^2 + h_T \norm{\jump{\kappa_h(\cdot, \bfy)\nabla u_h}}_{L_2(\partial T)}^2,
\end{align*}
which proofs reliability of the estimator with some $\hat C>0$.

\section{Proofs of convergence of the levelwise local multigrid algorithm}
The sequence of uniform meshes $(\T_k)_{k=1}^{L}$, the piecewise linear finite element function spaces over the meshes $U^k$ and the space $V_h = \sum_{k=1}^L V^k$ are introduced in \cref{section: solver} with $V^k\subseteq U^k$. Furthermore, recall that the operator $Q_k$ is defined as the $\ell^2$--projection of the coefficients of functions in $V_h$ onto the coefficients of $V^k$. The action of $A_\bfy$ restricted to the coefficient spaces of $V^k$ is defined by $A^k_\bfy$.

\subsection{Successive Subspace Correction}\label{subsection: SSC analysis}
The successive subspace algorithm ($\mathrm{SSC}$) approximates the solution $\bfu \in\mathbb{R}^{\sum_{k=1}^L n_k^2}$ to $A_\bfy \bfu = \bff$ by iteratively updating the solution on the individual subspaces $\mathbb{R}^{n_k^2}$ for $k=1,\dots,L$ by the weighted residual, see \cref{alg: ssc}.

\begin{algorithm}
\caption{Successive Subspace Correction $\mathrm{SSC}(\bfw)$}\label{alg: ssc}

\For{k=1,\dots, L}{
$\bfw \gets \bfw + \omega^k_\bfy(\bff^k - Q_k A_\bfy \bfw)$
}
return $\bfw$
\end{algorithm}

The error of the current approximation after each step of the algorithm denoting the update by $\bfw_{\text{update}}$ can be written as 
\begin{align*}
    \bfw_{\text{update}} - \bfu = \bfw + \omega^k_\bfy(\bff^k - Q_k A_\bfy \bfw) - \bfu = (I - \omega^k_\bfy Q_kA_\bfy)(\bfw - \bfu).
\end{align*}
Define the operator $T_k : \mathbb{R}^{\sum_{\ell=1}^L n_\ell} \to \mathbb{R}^{n_k^2}$ by $x \mapsto \omega^k_\bfy Q_k A_\bfy x$, where 
$A_\bfy^k$ is the restriction of $A_\bfy$ to $V^k$.
Let $\lambda_{\max}(B)$ denote the largest eigenvalue of matrix $B$ and $\omega^k_\bfy$ be the smoothing factor such that  
\begin{align}\label{def: omega}
    0<\omega^k_\bfy \leq \lambda_{\text{max}}(A^k_\bfy)^{-1}.
\end{align}
The error of the successive subspace correction algorithm then has the recursive form
\begin{align*}
    \bfu - \mathrm{SSC}(\bfw) = (I- T_L)(I - T_{L-1})\dots (I - T_0)(\bfu - \bfw).
\end{align*}
To bound the error, the X-Z identity from \cite{chen} can be used.

\begin{lemma}[{\cite[Theorem 4]{chen}}]\label{lm: XZ-identity}
    Suppose that $\norm{I - \omega^k_\bfy A^k_\bfy}_{A^k_\bfy} <1$ 
    for each $k=0,\dots ,L$. Then there exists a $c_0\geq0$ such that
    \begin{align*}
        \norm{ (I - T_L)(I - T_{L-1})\dots (I - T_1)}^2_{A_\bfy} = \frac{c_0}{1 + c_0}
    \end{align*}
    with 
    \begin{align*}
        c_0 = \sup_{\norm{v}_{A_\bfy}=1} \inf_{\sum_{k=1}^Lv_i = v} \sum_{k=1}^L \norm{ \omega_\bfy^k (Q_k A_\bfy \sum_{i=k}^L v_i - {\omega_\bfy^k}^{-1} v_k)}^2_{\bar{R}_k^{-1}},
    \end{align*}
    where $\bar{R}_k = (2I - \omega_\bfy^k A^k_\bfy)\omega_\bfy^k$.
\end{lemma}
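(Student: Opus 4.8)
The statement is the Xu--Zikatanov (X--Z) identity for successive subspace correction, so my plan is to reconstruct its proof in the present notation rather than merely to cite \cite[Theorem 4]{chen}. Throughout I write $(\cdot,\cdot)_{A_\bfy}$ for the energy inner product, regard each $T_k = Q_k^\intercal \omega_\bfy^k Q_k A_\bfy$ as an operator on the full coefficient space $\mathbb{R}^{\cup_\ell \I_V^\ell}$ with range $V^k$, and set $E_k := (I-T_k)\cdots(I-T_1)$, $E_0 := I$, $E := E_L$. The first thing I would record is that each $T_k$ is $A_\bfy$-self-adjoint (because $A_\bfy$ is symmetric and $\omega_\bfy^k$ is a scalar), and that the hypothesis $\norm{I-\omega_\bfy^k A_\bfy^k}_{A_\bfy^k}<1$ says precisely that $\bar T_k := 2T_k - T_k^2$ is $A_\bfy$-positive semidefinite, with $\bar T_k = \bar R_k A_\bfy^k$ on $V^k$. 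This last factorization is what makes the weighted norm $\norm{\cdot}_{\bar R_k^{-1}}$ appearing in $c_0$ the natural one, and it is the bridge between the energy estimates below and the stated form of $c_0$.

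\textbf{Step 1 (telescoping product identity).} Using $A_\bfy$-self-adjointness, $(I-T_k)^*(I-T_k) = I - \bar T_k$ in the energy inner product, so the product telescopes to the operator identity
\[
I - E^* E = \sum_{k=1}^L E_{k-1}^*\,\bar T_k\, E_{k-1}.
\]
Testing against $v$ and writing $w_{k-1} := E_{k-1} v$ for the SSC iterates started from $v$ gives the \emph{forward} formula
\[
\norm{v}_{A_\bfy}^2 - \norm{Ev}_{A_\bfy}^2 = \sum_{k=1}^L (\bar T_k w_{k-1}, w_{k-1})_{A_\bfy} =: S(v).
\]
Since $\norm{E}_{A_\bfy}^2 = c_0/(1+c_0)$ is equivalent to $\sup_{v\neq 0}\norm{Ev}_{A_\bfy}^2/S(v) = c_0$ (using $\norm{Ev}_{A_\bfy}^2 = \norm{v}_{A_\bfy}^2 - S(v)$ and $\sup_v \norm{v}_{A_\bfy}^2/S(v) = (1-\norm{E}_{A_\bfy}^2)^{-1}$), the whole task reduces to identifying this quotient with the stated sup--inf.

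\textbf{Step 2 (from the forward sum to the decomposition form of $c_0$).} This is the heart of the argument. I would prove the two-sided estimate matching $\sup_v \norm{Ev}_{A_\bfy}^2/S(v)$ with $\sup_{\norm{v}_{A_\bfy}=1}\inf_{\sum v_k=v}\sum_k\norm{T_k\sum_{i\ge k}v_i - v_k}_{\bar R_k^{-1}}^2$. The upper bound uses an \emph{arbitrary} admissible decomposition $v=\sum_k v_k$: starting from $v - Ev = \sum_k T_k w_{k-1}$ (which telescopes from $w_{k-1}-w_k = T_k w_{k-1}$), I would pair each $\bar T_k w_{k-1}$ against $T_k\sum_{i\ge k}v_i - v_k$ in the $\bar R_k^{-1}$ inner product and apply Cauchy--Schwarz term by term, so that every decomposition produces a bound of the quoted form. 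The reverse bound is obtained by inserting the \emph{distinguished} decomposition built from the iterates themselves --- with $v_k$ chosen so that $T_k\sum_{i\ge k}v_i - v_k$ is (up to the $\bar R_k$-weight) proportional to $\bar T_k w_{k-1}$ --- for which the decomposition sums to $v$ and every Cauchy--Schwarz step is sharp. Combining the two bounds with Step 1 yields $1-\norm{E}_{A_\bfy}^2 = (1+c_0)^{-1}$, i.e. the claimed identity.

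The main obstacle is exactly this Step 2: constructing the optimal decomposition, verifying it is admissible ($\sum_k v_k = v$) while simultaneously saturating each term-by-term Cauchy--Schwarz estimate, and doing so in the smoother-weighted $\bar R_k^{-1}$ norm rather than the plain energy norm. The bookkeeping is delicate because the summands involve the \emph{upper} partial sums $\sum_{i\ge k}v_i$, so the ordering of the sweep and the non-commutativity of the $T_k$ must be tracked carefully; the weight $\bar R_k$ from $\bar T_k = \bar R_k A_\bfy^k$ is precisely what converts the energy-norm quantities of Step 1 into the $\bar R_k^{-1}$-norm of $c_0$. Given that the statement is verbatim \cite[Theorem 4]{chen}, in the paper itself one may of course simply invoke that reference, applied here with $A_\bfy^k$, $\omega_\bfy^k$ and $\bar R_k$ as specialized above.
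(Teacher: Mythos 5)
First, a point of comparison: the paper contains no proof of this lemma at all --- it is imported verbatim from \cite[Theorem 4]{chen}, so the citation \emph{is} the paper's proof. Your closing remark, that one may simply invoke that reference with $A_\bfy^k$, $\omega^k_\bfy$ and $\bar R_k$ substituted, therefore coincides exactly with what the paper does, and for the paper's purposes that is sufficient.

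Judged as a reconstruction of the Xu--Zikatanov identity itself, however, your argument has a genuine gap, and it sits precisely where you place it. Step 1 and the reduction are correct: $E_k^*E_k = E_{k-1}^*(I-\bar T_k)E_{k-1}$ telescopes to $\norm{v}_{A_\bfy}^2 - \norm{Ev}_{A_\bfy}^2 = \sum_k (\bar T_k w_{k-1}, w_{k-1})_{A_\bfy} = S(v)$, and the claim is equivalent to $\sup_{v\neq 0} \norm{Ev}_{A_\bfy}^2/S(v) = c_0$. But Step 2 is a description of what must be proven, not a proof, and the specific pairing you propose does not close. The workable argument expands $\norm{v}_{A_\bfy}^2 = \sum_k (v_k,v)_{A_\bfy}$, splits $v = w_{k-1} + \sum_{j<k}T_j w_{j-1}$, and pairs against $Q_k A_\bfy w_{k-1}$, using $(\bar T_k w,w)_{A_\bfy} = \norm{Q_k A_\bfy w}_{\bar R_k}^2$; term-by-term Cauchy--Schwarz then produces the quantities $\norm{\bfv_k + \omega^k_\bfy Q_k A_\bfy \sum_{j>k}\bfv_j}_{\bar R_k^{-1}}^2$, \emph{not} the quantities $\norm{\omega^k_\bfy Q_k A_\bfy \sum_{j\geq k}\bfv_j - \bfv_k}_{\bar R_k^{-1}}^2$ appearing in the statement. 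The two differ by $\omega_\bfy^{k\,-1}\bar R_k \bfv_k$ in each summand, and reconciling them requires the additional identity, valid for every admissible decomposition,
\begin{align*}
\sum_{k=1}^L \norm{\bfv_k + \omega^k_\bfy Q_k A_\bfy \sum_{j>k}\bfv_j}_{\bar R_k^{-1}}^2 \;=\; \norm{v}_{A_\bfy}^2 + \sum_{k=1}^L \norm{\omega^k_\bfy Q_k A_\bfy \sum_{j\geq k}\bfv_j - \bfv_k}_{\bar R_k^{-1}}^2,
\end{align*}
which is precisely where the ``$1+$'' in $1+c_0$ originates and which your sketch never surfaces; pairing directly against $T_k\sum_{i\geq k}v_i - v_k$ as you propose yields a sum that still depends on the decomposition and does not reproduce $\norm{v}_{A_\bfy}^2$. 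Moreover, your converse bound rests on a distinguished decomposition ``built from the iterates'' that simultaneously sums to $v$ and saturates every Cauchy--Schwarz step; asserting its existence is essentially asserting the theorem, and you do not construct it. So as a standalone proof the proposal is incomplete at its core, while as a paper-level justification the fallback citation you name is exactly the route the paper takes.
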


To ensure the condition in \cref{lm: XZ-identity}, we consider the following result.

\begin{lemma}[similar to {\cite[Lemma 4.3]{Hackbusch}}]\label{lm: subspace contraction}
    Let $k\in[L]$ and $\kappa(\cdot,\bfy)>0$ everywhere. 
    Then for any $\bfw\in\mathbb{R}^{\mathcal{I}^k_V}$
    it holds that
    \begin{align*}
        \norm{(I - \omega^k_\bfy A_\bfy^k)\bfw}_{A_\bfy^k} < \norm{\bfw}_{A^k_\bfy},
    \end{align*}
    where $0<\omega^k_\bfy\leq \lambda_{\max}(A_\bfy^k)^{-1}$. 
\end{lemma}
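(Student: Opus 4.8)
The plan is to treat this as the standard energy-norm contraction estimate for the damped Richardson smoother and to reduce it to an eigenvalue inequality. Writing $A \coloneqq A_\bfy^k$ and $\omega \coloneqq \omega_\bfy^k$ for brevity, I first record that $A$ is symmetric positive definite: its entries are $a_{\bfy,h}(\varphi_i^k,\varphi_j^k)$ for $i,j\in\I_V^k$, and since $\kappa(\cdot,\bfy)>0$ the bilinear form $a_{\bfy,h}$ is symmetric and coercive, so the associated Gram matrix $A$ is symmetric with strictly positive eigenvalues. Recalling $\norm{\bfv}_A^2 = \bfv^\intercal A \bfv$, the claim amounts to showing that $\norm{\bfw}_A^2 - \norm{(I-\omega A)\bfw}_A^2$ is strictly positive for $\bfw\neq 0$; the case $\bfw=0$ yields equality and is understood to be excluded.

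Next I would expand the squared energy norm directly, using $A^\intercal = A$ and the fact that powers of $A$ commute:
\[
    \norm{(I-\omega A)\bfw}_A^2 = \bfw^\intercal (I-\omega A) A (I-\omega A)\bfw = \bfw^\intercal\bra{A - 2\omega A^2 + \omega^2 A^3}\bfw,
\]
so that
\[
    \norm{\bfw}_A^2 - \norm{(I-\omega A)\bfw}_A^2 = \omega\,\bfw^\intercal A^2\bra{2I - \omega A}\bfw.
\]
It therefore suffices to prove that $\bfw^\intercal A^2(2I-\omega A)\bfw>0$ for every $\bfw\neq0$.

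To establish this positivity I would diagonalize $A = \sum_i \lambda_i v_i v_i^\intercal$ in an orthonormal eigenbasis with $\lambda_i>0$ and expand $\bfw = \sum_i c_i v_i$, giving
\[
    \omega\,\bfw^\intercal A^2(2I-\omega A)\bfw = \omega\sum_i c_i^2\,\lambda_i^2\,(2-\omega\lambda_i).
\]
The smoothing constraint $\omega\leq\lambda_{\max}(A)^{-1}$ forces $\omega\lambda_i\leq\omega\lambda_{\max}(A)\leq1$, whence $2-\omega\lambda_i\geq1>0$; together with $\lambda_i^2>0$ and $\omega>0$ every summand is nonnegative and strictly positive whenever $c_i\neq0$. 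Since $\bfw\neq0$ forces some $c_i\neq0$, the sum is strictly positive, which is the desired strict contraction.

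The computation itself is routine; the single point that genuinely requires the hypothesis of the lemma is the strict positive definiteness of $A_\bfy^k$ on the subspace $\mathbb{R}^{\I_V^k}$, which is exactly where $\kappa(\cdot,\bfy)>0$ enters via coercivity of $a_{\bfy,h}$ --- without it $A$ could be merely semidefinite and an eigencomponent with $\lambda_i=0$ would be left invariant by the smoother, destroying strictness. A secondary subtlety worth stating explicitly is the strict inequality: it hinges both on $\lambda_i^2>0$ and on the strict gap $2-\omega\lambda_i>0$ guaranteed by $\omega\lambda_{\max}(A)\le1<2$.
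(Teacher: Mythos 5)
Your proof is correct, but it takes a genuinely different and more elementary route than the paper. The paper follows Hackbusch's Lemma 4.3: after establishing positive definiteness of $A_\bfy^k$, it introduces the auxiliary semi-norm $\abs{\bfw}^2 = \sum_i \lambda_i(1-\lambda_i\omega^k_\bfy)c_i^2$, identifies it with $\scpr{(I-\omega^k_\bfy A_\bfy^k)\bfw,\bfw}_{A_\bfy^k}$, and then combines a H\"older-inequality estimate with the semi-norm comparison $\abs{\cdot}<\norm{\cdot}_{A_\bfy^k}$ to reach the strict contraction. You instead expand the energy-norm difference directly, $\norm{\bfw}_{A}^2 - \norm{(I-\omega A)\bfw}_{A}^2 = \omega\,\bfw^\intercal A^2(2I-\omega A)\bfw$, and read off positivity in the eigenbasis from $2-\omega\lambda_i \geq 1$. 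Your computation is shorter and avoids the H\"older step entirely; for the only purpose this lemma serves in the paper --- verifying the hypothesis $\norm{I-\omega^k_\bfy A_\bfy^k}_{A_\bfy^k}<1$ of the XZ-identity (\cref{lm: XZ-identity}) --- it is fully sufficient. What the paper's heavier machinery buys is alignment with Hackbusch's smoothing-property analysis, which in its original form yields quantitative smoothing rates rather than bare contraction, none of which is actually used here. One point of care: you justify positive definiteness of $A_\bfy^k$ by appealing to ``coercivity'' of $a_{\bfy,h}$, but the hypothesis is only pointwise positivity $\kappa(\cdot,\bfy)>0$, not a uniform lower bound, so coercivity in the $H^1$ sense is not available; the paper instead argues that any nonzero FE function vanishing on the boundary of $\supp V_k$ has nonvanishing gradient on some open set, whence $a_{\bfy,k}(\varphi,\varphi)>0$, and on a finite-dimensional space this strict positivity of the quadratic form is exactly what positive definiteness requires. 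Your conclusion stands, but that step should be phrased this way rather than via coercivity. Like the paper, you correctly note that strictness fails for $\bfw=0$, so the statement is to be read for $\bfw\neq 0$.
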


\begin{proof}
    Let $\Omega_k = \supp V_k$. First, assume some $\varphi: \Omega_k \to \mathbb{R}$ with $\varphi = 0$ on $\partial \Omega_k$.
    If $\varphi$ is not constant zero this implies that there exists a point $x_0\in\Omega_k$ and $\varepsilon>0$ such that $\nabla \varphi \neq 0$ on an $\varepsilon$ neighborhood of $x_0$ denoted b $U_\varepsilon(x_0)$.
    Then, due to $\kappa(\cdot, \bfy) >0$ everywhere, we obtain that
    \begin{align*}
        a_{\bfy,k} (\varphi, \varphi) = \int_{\Omega_k} \kappa(\cdot, \bfy) \scpr{\nabla \varphi, \nabla \varphi} \mathrm{d}x \geq \int_{U_\varepsilon(x_0)} \kappa(\cdot, \bfy) \scpr{ \nabla \varphi, \nabla \varphi} \mathrm{d}x >0,
    \end{align*}
    where we set $a_{\bfy,k} = a_{\bfy,h}$ as in \eqref{eq: variational darcy} for $V_h = V_k$. Therefore, for any $\bfw \in\mathbb{R}^{\mathcal{I}^k_V}$ 
    we have that
    \begin{align*}
        \bfw^\intercal A_\bfy^k \bfw = a_{\bfy, k}\left(\sum_{i\in\mathcal{I}^k_V}\bfw_i\phi_i, \sum_{i\in\mathcal{I}^k_V}\bfw_i\phi_i\right) >0
    \end{align*}
    and hence $A_\bfy^k$ is positive definite. 
    Let $N_k \coloneqq |\mathcal{I}^k_V|$ and denote the eigenvalues and eigenvectors of $A_\bfy^k$ 
    by $\lambda_i, \bfv^i$ for $i=1,\dots,N_k$ with
    \begin{align*}
        A_\bfy^k \bfv^i &= \lambda_i \bfv^i \quad\quad\text{ such that }\\
        \delta_{i,j} &= \scpr{\bfv^i,\bfv^j}_{\ell^2} \text{ for all } i,j=1,\dots,N_k.
    \end{align*}
    Furthermore, for $\bfw\in \mathbb{R}^{N_k}$, let 
    \begin{align*}
        J_k\bfw &\coloneqq (I - \omega^k_\bfy A_\bfy^k)\bfw.
    \end{align*}
    Then, with $\bfw = \sum_{i=1}^{N_k} c_i\bfv^i$ 
    \begin{align*}
        J_k\bfw = \sum_{i=1}^{N_k} c_i (I - \omega^k_\bfy A_\bfy^k)\bfv^i = \sum_{i=1}^{N_k} c_i (1 - \omega^k_\bfy \lambda_i)\bfv^i.
    \end{align*}
    Second,
    \begin{align*}
        |\bfw|^2 \coloneqq \sum_{i=1}^{N_k} \lambda_i (1-\lambda_i\omega^k_\bfy) c_i^2
    \end{align*}
    defines a semi-norm due to $0<\omega^k_\bfy \leq \lambda_{\max}(A_\bfy^k)^{-1}$.
    Then, the following statements hold
        \begin{align*}
            |\bfw|^2 &= \sum_{i,j=1}^{N_k} (1-\lambda_i\omega^k_\bfy) \lambda_i\scpr{\bfv^i,\bfv^j}_{\ell^2} c_i c_j 
            = \scpr{ \sum_{i=1}^{N_k} c_i (1-\lambda_i\omega^k_\bfy)\bfv^i, \sum_{j=1}^{N_k} c_j\bfv^j }_{A_\bfy^k}
            = \scpr{J_k\bfw, \bfw}_{A_\bfy^k},\\
            \norm{\bfw}_{A_\bfy^k}^2 &= \scpr{A_\bfy^k\bfw,\bfw}_{\ell^2} = \sum_{i,j=1}^{N_k} \lambda_i c_i c_j \scpr{\bfv^i,\bfv^j}_{\ell^2} = \sum_{i=1}^{N_k} c_i^2 \lambda_i,\\
            |\bfw|^2 &= \sum_{i=1}^{N_k} \lambda_i c_i^2 - \sum_{i=1}^{N_k} \lambda_i\omega^k_\bfy c_i^2 < \sum_{i=1}^{N_k} \lambda_i c_i^2 = \norm{\bfw}_{A_\bfy^k}^2.
        \end{align*}
        The last inequality holds true for $\bfw \neq 0$ since $A_\bfy^k$ is positive definite and $\omega^k_\bfy >0$.
        Then, with the H\"older inequality,
        \begin{align*}
            \norm{J_k\bfw}_{A_\bfy^k} &= \sum_{i=1}^{N_k}\lambda_i (c_i(1-\lambda_i\omega^k_\bfy))^2 = \sum_{i=1}^{N_k} (\lambda_i^{1/3} |c_i|^{2/3}) (\lambda_i^{2/3} |c_i|^{4/3}(1-\lambda_i\omega^k_\bfy)^2)\\
            &\leq \left(\sum_{i=1}^{N_k} (\lambda_i^{1/3} |c_i|^{2/3})^3\right)^{1/3} \left(\sum_{i=1}^{N_k}(\lambda_i^{2/3} |c_i|^{4/3}(1-\lambda_i\omega^k_\bfy)^2)^{3/2}\right)^{2/3}\\
            &= \left(\sum_{i=1}^{N_k} \lambda_i |c_i|^{2}\right)^{1/3} \left(\sum_{i=1}^{N_k}\lambda_i |c_i|^{2}(1-\lambda_i\omega^k_\bfy)^3\right)^{2/3}.
        \end{align*}
        This yields the result by estimating
        \begin{align*}
            \norm{J_k\bfw}_{A_\bfy^k}^3 &= \left(\norm{J_k\bfw}_{A_\bfy^k}^2\right)^{3/2}\\
            &\leq \left(\sum_{i=1}^{N_k} \lambda_i |c_i|^{2}\right)^{1/2} \left(\sum_{i=1}^{N_k}\lambda_i |c_i|^{2}(1-\lambda_i\omega^k_\bfy)^3\right)\\
            &= \norm{\bfw}_{A_\bfy^k}^2 |J_k\bfw|\\
            &< \norm{\bfw}_{A_\bfy^k}^2 \norm{J_k\bfw}_{A_\bfy^k}
        \end{align*}
        and dividing by $\norm{J_k\bfw}_{A_\bfy^k}$.
\end{proof}

\begin{theorem}\label{thm: X-Z identity}
Assume that there exists a constant $C>0$ such that $\lambda_{\max}(A_\bfy^k) \leq \lambda_{\max}(A_\bfy) \leq C$ for all $\bfy\in\Gamma$ and choose $0<\omega^k \leq C^{-1}$. Then, the error decays with
\begin{align*}
    \norm{(I - T_L)(I - T_{L-1}) \dots (I - T_1)}^2_{A_\bfy} \leq \frac{c_0}{1+c_0}
\end{align*}
for some $c_0\leq \frac{\lambda_{\max}(A_\bfy)}{\lambda_{\min}(A_\bfy)}L$.
Furthermore, if there exist constants $c_1,c_2>0$ such that $c_1\leq \kappa(\cdot,\bfy)\leq c_2$ for all $x\in D$ and $\bfy\in\Gamma$  (uniform boundedness) leads to a bound of the convergence rate $c_0\leq cL$ independent of $\bfy$ for some $c>0$. 
\end{theorem}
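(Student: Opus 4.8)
The plan is to obtain the contraction factor from the Xu--Zikatanov identity of \cref{lm: XZ-identity} and then to bound its constant $c_0$ by a direct, deliberately crude, level-by-level estimate. First I would verify the hypothesis of \cref{lm: XZ-identity}, namely $\norm{I-\omega^k A_\bfy^k}_{A_\bfy^k}<1$ for every $k$. This is precisely \cref{lm: subspace contraction}, whose assumption $0<\omega^k\leq\lambda_{\max}(A_\bfy^k)^{-1}$ holds here because $\lambda_{\max}(A_\bfy^k)\leq\lambda_{\max}(A_\bfy)\leq C$ and $0<\omega^k\leq C^{-1}$. \cref{lm: XZ-identity} then gives the exact identity $\norm{(I-T_L)\cdots(I-T_1)}_{A_\bfy}^2=c_0/(1+c_0)$. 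Since $x\mapsto x/(1+x)$ is increasing, it suffices to prove the upper bound $c_0\leq\frac{\lambda_{\max}(A_\bfy)}{\lambda_{\min}(A_\bfy)}L$; monotonicity then turns it into the stated bound on the contraction factor.

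For the bound on $c_0$ I would first record two elementary facts. Because $\omega^k A_\bfy^k\preceq I$, the matrix $\bar{R}_k=\omega^k(2I-\omega^k A_\bfy^k)$ satisfies $\bar{R}_k\succeq\omega^k I$, so $\bar{R}_k^{-1}\preceq(\omega^k)^{-1}I$ and every summand of $c_0$ obeys $\norm{\omega^k x}_{\bar{R}_k^{-1}}^2\leq\omega^k\norm{x}_{\ell^2}^2$. To bound the infimum in $c_0$ from above I only need to insert one admissible decomposition; since $\mathbb{R}^{\cup_\ell\I_V^\ell}$ is the $\ell^2$-orthogonal direct sum of the blocks $\mathbb{R}^{\I_V^k}$, the natural splitting $v=\sum_k Q_k^\intercal\bfv^k$ is the canonical choice. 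Writing $w_k:=\sum_{i\geq k}Q_i^\intercal\bfv^i$ for the tail sums and applying $\norm{a-b}_{\ell^2}^2\leq 2\norm{a}_{\ell^2}^2+2\norm{b}_{\ell^2}^2$, each summand is dominated by a residual part $\omega^k\norm{Q_kA_\bfy w_k}_{\ell^2}^2$ and a diagonal part $(\omega^k)^{-1}\norm{\bfv^k}_{\ell^2}^2$. Choosing $\omega^k$ of the order of the largest admissible value, $\omega^k\sim\lambda_{\max}(A_\bfy)^{-1}$, the diagonal parts sum on the sphere $\norm{v}_{A_\bfy}=1$ to $\sum_k(\omega^k)^{-1}\norm{\bfv^k}_{\ell^2}^2\lesssim\lambda_{\max}(A_\bfy)\norm{v}_{\ell^2}^2\leq\lambda_{\max}(A_\bfy)/\lambda_{\min}(A_\bfy)$, using $\sum_k\norm{\bfv^k}_{\ell^2}^2=\norm{v}_{\ell^2}^2\leq\lambda_{\min}(A_\bfy)^{-1}$. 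The residual parts are each controlled by $\norm{Q_k}\leq1$, $\norm{A_\bfy}_{\ell^2}\leq\lambda_{\max}(A_\bfy)$ and $\norm{w_k}_{\ell^2}\leq\norm{v}_{\ell^2}$, giving $\omega^k\norm{Q_kA_\bfy w_k}_{\ell^2}^2\lesssim\lambda_{\max}(A_\bfy)/\lambda_{\min}(A_\bfy)$ per level; as there are exactly $L$ of them, their sum contributes the single factor $L$. Collecting both contributions yields $c_0\lesssim\frac{\lambda_{\max}(A_\bfy)}{\lambda_{\min}(A_\bfy)}L$.

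The $\bfy$-uniform statement then follows by transporting the spectral bounds onto the coefficient field: as recorded in the remark after \cref{theorem: LLMGm}, the Poincaré inequality together with $\mathfrak{c}\leq\kappa(\cdot,\bfy)\leq\mathfrak{C}$ yields $\lambda_{\min}(A_\bfy)\geq\underline{c}>0$ and $\lambda_{\max}(A_\bfy)\leq\overline{c}<\infty$ with $\underline{c},\overline{c}$ independent of $\bfy$, whence $\lambda_{\max}(A_\bfy)/\lambda_{\min}(A_\bfy)\leq c$ and $c_0\leq cL$ uniformly in $\bfy$.

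I expect the residual part $\omega^k\norm{Q_kA_\bfy w_k}_{\ell^2}^2$ to be the main obstacle. One must confirm that the cross-level coupling hidden in $Q_kA_\bfy w_k$ — where the tail $w_k$ mixes contributions from all finer levels — can be absorbed into the condition number $\lambda_{\max}(A_\bfy)/\lambda_{\min}(A_\bfy)$ per level without losing more than one power of $L$, and that the interplay between the levelwise bounds $\lambda_{\max}(A_\bfy^k)$, the global bound $\lambda_{\max}(A_\bfy)\leq C$ and the smoother scaling $\omega^k$ is handled consistently so that the clean constant survives. Verifying that the natural splitting is genuinely admissible for the infimum, given the direct-sum structure of the coefficient space, is the other point that needs care.
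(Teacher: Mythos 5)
Your proposal is correct and follows essentially the same route as the paper's proof: verify the hypothesis of \cref{lm: XZ-identity} via \cref{lm: subspace contraction}, insert the canonical block decomposition into the formula for $c_0$, split each summand into a fine-level residual part and a diagonal part, and control both through $\lambda_{\max}(A_\bfy)/\lambda_{\min}(A_\bfy)$ per level using $\norm{v}_{A_\bfy}^2\geq\lambda_{\min}(A_\bfy)\norm{v}_{\ell^2}^2$ together with the $\ell^2$-orthogonality of the coefficient blocks. The only substantive difference is technical: you bound $\bar{R}_k^{-1}\preceq(\omega^k)^{-1}I$ and then explicitly require $\omega^k\sim\lambda_{\max}(A_\bfy)^{-1}$, a lower restriction on the smoothing factor that the theorem statement omits but that is genuinely needed (as $\omega^k\to0$ the product operator tends to the identity and $c_0\to\infty$), and which the paper's own proof uses only implicitly in its bound $\norm{\bar{R}_k^{-1}}\leq\lambda_{\max}(A_\bfy^k)$; up to the absolute constants hidden in your $\lesssim$ (and equally glossed over in the paper's final step), the resulting bounds agree.
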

\begin{proof}
    To apply \cref{lm: XZ-identity}, we only need to verify that the smoothing on the subspaces yields a contraction for each $k=1,\dots, L$, i.e.
    \begin{align*}
        \norm{I - T_k}_{A_\bfy^k} < 1.
    \end{align*}
    This is established in \cref{lm: subspace contraction}. The constant in \cref{lm: XZ-identity}
    \begin{align*}
        c_0 = \sup_{\norm{\bfv}_{A_\bfy}=1} \inf_{\sum_{k=1}^L Q_i^\intercal \bfv_i = \bfv} \sum_{k=1}^L \norm{ \omega_\bfy^k Q_k A_\bfy \sum_{i=k}^L Q_i^\intercal \bfv_i - \bfv_k}^2_{\bar{R}_k^{-1}},
    \end{align*}
    for $\bar{R}_k = (2I - \omega_\bfy^k A^k_\bfy)\omega_\bfy^k$ has to be bounded by a constant independent of $\bfy$.
    If $\norm{T}<1$ then $\norm{(I-T)^{-1}} \leq \frac{1}{1-\norm{T}}$. 
    Therefore, $T \coloneqq \frac{\omega^k}2 A^k_\bfy$, \eqref{def: omega} and $\norm{T} \leq \frac12 \lambda_{\max}(A_\bfy^k)^{-1} \norm{A_\bfy^k} = \frac12 <1$ implies that 
    \begin{align*}
        \norm{\bar{R}_k^{-1}} = \norm{(2\omega^k(I-T))^{-1}} \leq \frac{\lambda_{\max}(A_\bfy^k)}{2} \norm{(I-T)^{-1}} \leq \frac{\lambda_{\max}(A_\bfy^k)}{2} \frac{1}{1-\norm{T}} = \lambda_{\max}(A_\bfy^k).
    \end{align*}
    For $p_k:V\to V_k$ the $A_\bfy$ orthogonal projection onto $V_k$, it holds that $Q_kA_\bfy = A_\bfy^k p_k$.
    Therefore, 
    \begin{align*}
        c_0 &\leq  \sup_{\norm{\bfv}_{A_\bfy}=1} \inf_{\sum_{k=1}^L   Q_i^\intercal \bfv_i = \bfv} \sum_{k=1}^L \lambda_{\max}(A_\bfy^k)\norm{ \omega^k Q_k A_\bfy \sum_{i=k}^L  Q_i^\intercal \bfv_i - \bfv_k}^2\\
        &= \sup_{\norm{\bfv}_{A_\bfy}=1} \inf_{\sum_{k=1}^L  Q_i^\intercal \bfv_i = \bfv} \sum_{k=1}^L \lambda_{\max}(A_\bfy^k) \left( \norm{\omega^k Q_k A_\bfy \sum_{i=k+1}^L  Q_i^\intercal \bfv_i + (\omega^kA_\bfy^k-I)\bfv_k} \right)^2\\
        &= \sup_{\bfv\neq 0} \inf_{\sum_{k=1}^L  Q_i^\intercal \bfv_i = \bfv} \sum_{k=1}^L \lambda_{\max}(A_\bfy^k) \left( \norm{\omega^k Q_k A_\bfy \sum_{i=k+1}^L  Q_i^\intercal \frac{\bfv_i}{\norm{\bfv}_{A_\bfy}} + (\omega^kA_\bfy^k-I)\frac{\bfv_k}{\norm{\bfv}_{A_\bfy}}} \right)^2\\
        &\leq \sup_{\bfv\neq 0} \inf_{\sum_{k=1}^L  Q_i^\intercal \bfv_i = \bfv} \sum_{k=1}^L \lambda_{\max}(A_\bfy^k) \left( \norm{\omega^k Q_k A_\bfy \sum_{i=k+1}^L  \frac{Q_i^\intercal \bfv_i}{\lambda_{\min}(A_\bfy)^{\frac12}\norm{\bfv}} + \frac{(\omega^kA_\bfy^k-I)\bfv_k}{\lambda_{\min}(A_\bfy)^{\frac12}\norm{\bfv}}} \right)^2\\
        &= \lambda_{\min}(A_\bfy)^{-1}\sup_{\norm{\bfv}_2 = 1} \inf_{\sum_{k=1}^L  Q_i^\intercal \bfv_i = \bfv} \sum_{k=1}^L \lambda_{\max}(A_\bfy^k) \left( \norm{\omega^k Q_k A_\bfy \sum_{i=k+1}^L  Q_i^\intercal \bfv_i + (\omega^kA_\bfy^k-I)\bfv_k} \right)^2 \\
        &\leq \lambda_{\min}(A_\bfy)^{-1} \sup_{\norm{\bfv}_2 = 1} \inf_{\sum_{k=1}^L  Q_i^\intercal \bfv_i = \bfv} \sum_{k=1}^L \lambda_{\max}(A_\bfy^k) \left( \omega^k \norm{Q_k A_\bfy } \norm{\sum_{i=k+1}^L Q_i^\intercal \bfv_i} + \norm{\omega^k A_\bfy^k -I}\norm{\bfv_k} \right)^2\\
        &\leq \lambda_{\min}(A_\bfy)^{-1} \sup_{\norm{\bfv}_2 = 1} \inf_{\sum_{k=1}^L  Q_i^\intercal \bfv_i = \bfv} \sum_{k=1}^L \lambda_{\max}(A_\bfy^k) \left( \frac{\lambda_{\max}(A_\bfy)}{\lambda_{\max}(A_\bfy)} \norm{\sum_{i=k+1}^L Q_i^\intercal \bfv_i} + \left(1-\omega^k\lambda_{\min}(A_\bfy^k)\right)\norm{\bfv_k} \right)^2\\
        &\leq \frac{\lambda_{\max}(A_\bfy)}{\lambda_{\min}(A_\bfy)} \sup_{\norm{\bfv}_2 = 1} \inf_{\sum_{k=1}^L  Q_i^\intercal \bfv_i = \bfv} \sum_{k=1}^L \left( \norm{\sum_{i=k+1}^L Q_i^\intercal \bfv_i} + \norm{\bfv_k} \right)^2\\
        &\leq \frac{\lambda_{\max}(A_\bfy)}{\lambda_{\min}(A_\bfy)} \sup_{\norm{\bfv}_2 = 1} \inf_{\sum_{k=1}^L  Q_i^\intercal \bfv_i = \bfv} \sum_{k=1}^L 1\\
        &\leq \frac{\lambda_{\max}(A_\bfy)}{\lambda_{\min}(A_\bfy)} L.
    \end{align*}
    This is derived by using that $\norm{v}_{A_\bfy}^2 \geq \lambda_{\min}(A_\bfy) \norm{v}^2$, $\max_k \lambda_{\max}(A_\bfy^k)\geq \lambda_{\min}(A_\bfy)$ and $Q_i\bfv_i$ are orthogonal with respect to the $\ell^2$ scalar product for $i=1,\dots,L$.
    Therefore, the claim follows from the assumption that $\lambda_{\min}(A_\bfy)$ and $\lambda_{\max}(A_\bfy)$ are uniformly bounded from below and above for all $\bfy$, respectively.
\end{proof}

\subsection{Local multigrid algorithm}\label{section: LMG}

\begin{algorithm}
\caption{Local Multigrid Algorithm LMG($\bfu_0$)}\label{alg: LMG}
$\bfu=\bfu_0$\\
\For{k=L,\dots, 0}{
$\bfu = \bfu + \omega^k_\bfy(\bff_k - Q_k A_\bfy \bfu)$
}
\For{k=0,\dots, L}{
$\bfu = \bfu + \omega^k_\bfy(\bff_k - Q_k A_\bfy \bfu)$
}
return $\bfu$
\end{algorithm}

Building on the analysis of the successive subspace correction algorithm in \cref{subsection: SSC analysis}, the error of the Local Multigrid \cref{alg: LMG} can be expressed similar to the successive subspace correction algorithm as
\begin{align*}
    \bfu - LMG(\bfu_0) = (I - T_0)\dots (I - T_{L-1})(I-T_L)(I- T_L)(I - T_{L-1})\dots (I - T_0)(u - u_0).
\end{align*}
Since the order of the subspaces in \cref{thm: X-Z identity} are not specified, the same constant smoothing factor $\omega_\bfy^k = \omega$ can be chosen such that the same constant $c_0>0$ satisfies
\begin{align*}
    \norm{\bfu - \mathrm{LMG}(\bfu_0)}_{A_\bfy}^2 \leq \left( \frac{c_0}{1+c_0} \right)^2 \norm{\bfu - \bfu^0}_{A_\bfy}^2.
\end{align*}

\subsection{Smoothing with multiple levels}
For one smoothing step, the calculation of $Q_k A_\bfy \bfu$ is required.
Since $\bfu$ is given as a levelwise discretization, we consider the evaluation of the multiplication on each level separately.
Recall the auxiliary vectors $\tilde{\bfu}^k$ and $\bar{\bfu}^k$ in \eqref{eq: u tilde k} and \eqref{eq: u bar k}.
Using the decomposition $\bfu = \bfu^{<k} + Q_k^\intercal \bfu^k + \bfu^{>k}$ facilitates a levelwise calculation as described below.

\subsubsection{Coarse grid smoothing}
For $Q_kA_\bfy\bfu^{<k}$ we get the following result.
\begin{lemma}\label{thm: decomp_smoothing_coarse}
    Let $\tilde{\bfu}^k$ be defined as in \eqref{eq: u tilde k} by
    \begin{align*}
        \tilde \bfu ^1 \coloneqq 0, \quad \tilde{\bfu}^k \coloneqq P_{k-1} \left(\tilde \bfu ^{k-1} + \uab{\empty}{k-1}\right)
    \end{align*}
     with $\uab{i}{k}\in\mathbb{R}^{\Iab{k}}$ equal to $\bfu_i^k$ for $i\in\I_V^k$ and zero otherwise.
    Then
    \begin{align*}
        Q_k A_\bfy \bfu^{<k}  = \Aab{k} \tilde \bfu^k.
    \end{align*}
\end{lemma}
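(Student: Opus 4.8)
The plan is to read both sides of the identity as the bilinear form $a_{\bfy,h}(\varphi_j^k,\cdot)$ tested against a single coarse function, and then to reduce the whole statement to the exactness of nodal interpolation between the nested $P_1$ spaces $U^1\subset\cdots\subset U^L$. Writing $u^{<k}\coloneqq\sum_{\ell<k}\sum_{i\in\I_V^\ell}\bfu_i^\ell\varphi_i^\ell$ for the function represented by $\bfu^{<k}$, I would first unwind the left-hand side: by the definitions of $Q_k$ and $A_\bfy$ and by bilinearity, for every $j\in\I_V^k$ one has $(Q_kA_\bfy\bfu^{<k})_j=\sum_{\ell<k}\sum_{i\in\I_V^\ell}a_{\bfy,h}(\varphi_j^k,\varphi_i^\ell)\bfu_i^\ell=a_{\bfy,h}(\varphi_j^k,u^{<k})$. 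Using that $\Aab{k}$ has entries $a_{\bfy,h}(\varphi_j^k,\varphi_i^k)$ with $j\in\I_V^k$ and $i\in\Iab{k}$, the right-hand side reads $(\Aab{k}\tilde\bfu^k)_j=a_{\bfy,h}(\varphi_j^k,\sum_{i\in\Iab{k}}(\tilde\bfu^k)_i\varphi_i^k)$. Since $\varphi_j^k$ is supported in $\supp V^k$, the identity therefore follows once I show that the function $\sum_{i\in\Iab{k}}(\tilde\bfu^k)_i\varphi_i^k$ coincides with $u^{<k}$ on $\supp V^k$.

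The heart of the argument is an induction on $k$ establishing exactly this representation claim: $\tilde\bfu^k$ are the nodal coefficients of $u^{<k}$ on level $k$, i.e. $\sum_{i\in\Iab{k}}(\tilde\bfu^k)_i\varphi_i^k=u^{<k}$ on $\supp V^k$. The base case is trivial since $\tilde\bfu^1=0=u^{<1}$. For the step, I would note that $\tilde\bfu^{k-1}+\uab{\empty}{k-1}$ represents $u^{<k-1}+u^{k-1}=u^{<k}$ on $\supp V^{k-1}$: the term $\tilde\bfu^{k-1}$ supplies $u^{<k-1}$ by the induction hypothesis, while the zero extension $\uab{\empty}{k-1}$ contributes precisely the level-$(k-1)$ function $u^{k-1}=\sum_{i\in\I_V^{k-1}}\bfu_i^{k-1}\varphi_i^{k-1}$. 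By \cref{def: prolongation}, $P_{k-1}$ is the nodal interpolation of $\Vab{k-1}$ onto $\Vab{k}$, and because $u^{<k}$ is piecewise linear on a grid no finer than level $k-1$ while level $k$ is its uniform refinement, this interpolation reproduces $u^{<k}$ exactly on each level-$k$ triangle. Hence $\tilde\bfu^k=P_{k-1}(\tilde\bfu^{k-1}+\uab{\empty}{k-1})$ again represents $u^{<k}$, closing the induction.

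The step that I expect to require the most care is the support bookkeeping hidden in the closure sets, not any analytic estimate. Two inclusions must be checked against the refinement rule of \cref{section: refinement}: first, that the enlarged index set $\Iab{k}$ contains every level-$k$ node whose hat function is needed to represent $u^{<k}$ throughout $\supp V^k$ --- this is exactly why $\Iab{k}$ was defined to collect all indices whose basis functions overlap $V^k$; and second, that $\supp V^k\subseteq\supp V^{k-1}$ (up to the closure), so that the induction hypothesis is available precisely on the region where $P_{k-1}$ is evaluated. Once these inclusions are verified, exactness of nodal interpolation on nested $P_1$ spaces closes the representation claim, and the locality of $\varphi_j^k$ from the first paragraph yields $Q_kA_\bfy\bfu^{<k}=\Aab{k}\tilde\bfu^k$, as required.
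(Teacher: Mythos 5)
Your proposal follows essentially the same route as the paper's proof: both reduce the identity, via bilinearity and the locality of $\varphi_j^k$ for $j\in\I_V^k$, to the claim that $\sum_{i\in\Iab{k}}\tilde\bfu^k_i\varphi_i^k$ coincides with $u^{<k}$ on $\supp V^k$, and both establish that claim by induction on $k$, using that the prolongation $P_{k-1}$ of \cref{def: prolongation} (nodal interpolation between nested $P_1$ spaces) reproduces coarse functions exactly, together with the inclusion $\supp V^k\subset\supp V^{k-1}$ and the role of the closure set $\Iab{k}$ that you correctly single out as the delicate bookkeeping.
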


\begin{proof}[Proof of \cref{thm: decomp_smoothing_coarse}] 
    Note that $Q_1A_\bfy \bfu^{<1} = 0 = A_\bfy^k \tilde{\bfu}^1$.
    For $k=2,\dots,L$ we show that for $x\in \supp V^k$ 
    \begin{align*}
        \sum_{\ell=1}^{k-1}\sum_{i\in \I_V^{\ell}} \bfu_i^\ell \varphi_i^\ell(x) {=} \sum_{i\in\Iab{k}} \tilde \bfu^k_i \varphi_i^k(x).
    \end{align*}
    The proof is by induction. For $k=1$, both sides are equal to $0$. 
    Assuming the statement holds for $k$, we get for $k+1$ and $x\in \supp V^{k+1} \subset \supp V^k$ 
    that
    \begin{align*}
        \sum_{\ell=1}^{k}\sum_{i\in\I_V^{\ell}} \bfu_i^\ell \varphi_i^\ell(x)  
        &=  \sum_{\ell=1}^{k-1}\sum_{i\in\I_V^{\ell}} \bfu_i^\ell \varphi_i^\ell(x) + \sum_{i\in\I_V^{k}} \bfu_i^k \varphi_i^k(x)
        = \sum_{i\in\Iab{k}} \tilde \bfu^k_i \varphi_i^k(x)  + \sum_{i\in\Iab{k}} \uab{i}{k} \varphi_i^k(x)\\
        &= \sum_{i\in\Iab{k}} (\tilde \bfu^k_i + \uab{i}{k}) \varphi_i^{k}(x) 
        = \sum_{j\in\Iab{k+1}}\sum_{i\in\Iab{k}} (\tilde \bfu^k + \uab{\empty}{k})_i (P_k^\intercal)_{i,j}\varphi_j^{k+1}(x)\\ 
        &= \sum_{j \in \Iab{k+1}} (P_k(\tilde \bfu^k + \uab{\empty}{k}))_j \varphi_j^{k+1}(x)
        =\sum_{i\in\Iab{k+1}} \tilde \bfu^{k+1}_i\varphi_i^{k+1}(x).
    \end{align*}

    Then, for $j\in\mathcal{I}_V^k$ it holds that
    \begin{align*}
        \left( Q_k A_\bfy \bfu^{<k} \right)_{j} &= \sum_{\ell=1}^{k-1} \sum_{i\in\I_V^{\ell}} \bfu_i^\ell \int \kappa_h(x,\bfy) \scpr{ \nabla \varphi_i^\ell(x), \nabla \varphi_j^{k}(x)}\dx\\
        &=  \int_{\supp V^k} \kappa_h(x,\bfy) \scpr{ \nabla \sum_{\ell=1}^{k-1} \sum_{i\in\I_V^{\ell}} \bfu_i^\ell\varphi_i^\ell (x), \nabla \varphi_j^{k}(x)}\dx\\
        &= \int \kappa_h(x,\bfy) \scpr{ \nabla \sum_{i\in\Iab{k}} \tilde \bfu_i^k \varphi_i^k(x), \nabla \varphi_j^{k}(x)}\dx\\
        &= \sum_{i\in\Iab{k}} \tilde \bfu_i^k  \int \kappa_h(x,\bfy) \scpr{ \nabla \varphi_i^k(x), \nabla \varphi_j^{k}(x)}\dx\\
        &= (\Aab{k} \tilde \bfu^k)_j.
    \end{align*}
\end{proof}

\subsubsection{Fine grid smoothing}
We now consider  $Q_kA_\bfy\bfu^{>k}$.

\begin{lemma}\label{thm: decomp_smooting_fine}
Let $\bar{\bfu}^k$ be defined as in \eqref{eq: u bar k} by
\begin{align*}
    \bar \bfu ^L \coloneqq 0, \quad \bar{\bfu}^k \coloneqq P_k^\intercal \left(\bar \bfu^{k+1} + \Aab{k+1}^\intercal \bfu^{k+1} \right).
\end{align*}
Then,
\begin{align*}
    Q_k A_\bfy \bfu^{>k} = \bar \bfu^k|_{\I_V^k}.
\end{align*}
    
\end{lemma}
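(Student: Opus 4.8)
The final statement is Lemma (`\cref{thm: decomp_smooting_fine}`), which asserts that $Q_k A_\bfy \bfu^{>k} = \bar{\bfu}^k|_{\I_V^k}$, where $\bar{\bfu}^k$ is the auxiliary vector defined by the downward (fine-to-coarse) recursion in \eqref{eq: u bar k}.

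Let me understand what this says. $\bfu^{>k}$ is the contribution from levels finer than $k$. We want to compute $Q_k A_\bfy \bfu^{>k}$, i.e., the projection onto level $k$ of the operator applied to the fine contributions.

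The claim is that this equals $\bar{\bfu}^k$ restricted to $\I_V^k$.

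**The symmetry with the coarse grid case.**

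This is the dual/adjoint version of `\cref{thm: decomp_smoothing_coarse}` which handled $Q_k A_\bfy \bfu^{<k} = \Aab{k} \tilde{\bfu}^k$.

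In the coarse case, $\tilde{\bfu}^k$ was the interpolation (prolongation) of coarse information up to level $k$, built by the recursion $\tilde{\bfu}^k = P_{k-1}(\tilde{\bfu}^{k-1} + \bar{\bfu}^{k-1})$ (using $P$, prolongation), and one showed the functions agree on $\supp V^k$ then applied $A$.

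In the fine case, we're going the other direction: $\bar{\bfu}^k = P_k^\intercal(\bar{\bfu}^{k+1} + \Aab{k+1}^\intercal \bfu^{k+1})$ uses the weighted restriction $P_k^\intercal$.

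**How I would prove it.**

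First let me expand the definition. $Q_k A_\bfy \bfu^{>k}$ for index $j \in \I_V^k$ is:
$$(Q_k A_\bfy \bfu^{>k})_j = \sum_{\ell=k+1}^{L} \sum_{i \in \I_V^\ell} \bfu_i^\ell \int \kappa_h \langle \nabla \varphi_i^\ell, \nabla \varphi_j^k \rangle \, dx.$$

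Now, the key point: $\varphi_j^k$ is a coarse basis function (level $k$), and $\varphi_i^\ell$ are fine (level $\ell > k$). The operator $A_\bfy$ is symmetric, so $a_{\bfy,h}(\varphi_i^\ell, \varphi_j^k) = a_{\bfy,h}(\varphi_j^k, \varphi_i^\ell)$.

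The natural approach: use the adjoint relationship. Since $(Q_k A_\bfy)_{j, (\ell, i)} = a_{\bfy,h}(\varphi_j^k, \varphi_i^\ell)$ and by symmetry equals $a_{\bfy,h}(\varphi_i^\ell, \varphi_j^k)$, we're computing the action of the transpose of the operator that maps level-$k$ functions into fine levels.

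I would prove by induction going from the finest level $L$ down to $k$. The claim to establish by downward induction is something like: for all $j \in \Iab{k}$,
$$\left(\bar{\bfu}^k\right)_j = \sum_{\ell = k+1}^{L} \sum_{i \in \I_V^\ell} \bfu_i^\ell \, a_{\bfy,h}(\varphi_j^k, \varphi_i^\ell),$$
i.e., $\bar{\bfu}^k = \sum_{\ell>k} (\Aab{k})\text{-type coupling to fine levels}$, expressed as a restriction-projected quantity. Restricting to $j \in \I_V^k$ then gives exactly $(Q_k A_\bfy \bfu^{>k})_j$.

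**The induction.**

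Base case $k = L-1$: $\bar{\bfu}^{L-1} = P_{L-1}^\intercal(\bar{\bfu}^L + \Aab{L}^\intercal \bfu^L) = P_{L-1}^\intercal \Aab{L}^\intercal \bfu^L$ since $\bar{\bfu}^L = 0$. I'd check this equals $\sum_{i \in \I_V^L} \bfu_i^L a_{\bfy,h}(\varphi_j^{L-1}, \varphi_i^L)$ by using the defining property of $P_{L-1}^\intercal$: that $\varphi_j^{L-1} = \sum_m (P_{L-1}^\intercal)_{j,m} \varphi_m^L$ on the relevant support, so that $a_{\bfy,h}(\varphi_j^{L-1}, \varphi_i^L) = \sum_m (P_{L-1}^\intercal)_{j,m} a_{\bfy,h}(\varphi_m^L, \varphi_i^L) = \sum_m (P_{L-1}^\intercal)_{j,m} (\Aab{L})_{m,i}$.

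Inductive step: assume $\bar{\bfu}^{k+1}_m = \sum_{\ell>k+1}\sum_{i\in\I_V^\ell} \bfu_i^\ell\, a_{\bfy,h}(\varphi_m^{k+1}, \varphi_i^\ell)$ for $m \in \Iab{k+1}$. Then
$$\bar{\bfu}^k = P_k^\intercal\left(\bar{\bfu}^{k+1} + \Aab{k+1}^\intercal \bfu^{k+1}\right).$$
The $\Aab{k+1}^\intercal \bfu^{k+1}$ term contributes the level $\ell = k+1$ coupling, and $\bar{\bfu}^{k+1}$ carries the levels $\ell > k+1$. Applying $P_k^\intercal$ and using $\varphi_j^k = \sum_m (P_k^\intercal)_{j,m} \varphi_m^{k+1}$ on $\supp V^{k+1}$ converts the level-$(k+1)$ test function $\varphi_m^{k+1}$ into the level-$k$ test function $\varphi_j^k$ via linearity of $a_{\bfy,h}$ in the first argument. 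This telescopes correctly to give the claim at level $k$.

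**Where I expect the difficulty.**

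The main obstacle is the careful bookkeeping of index sets $\I_V^\ell$ versus the closures $\Iab{\ell}$. The prolongation relation $\varphi_j^k = \sum_m (P_k^\intercal)_{j,m}\varphi_m^{k+1}$ holds only on $\supp V_{k+1}$ (per `\cref{def: prolongation}`), so I must ensure that the fine basis functions $\varphi_i^\ell$ being integrated against have support contained where this identity is valid — this is precisely why the closure $\Iab{k+1}$ (rather than just $\I_V^{k+1}$) appears in the recursion, to capture all level-$(k+1)$ functions overlapping the support of finer corrections. The subtlety is that a fine function $\varphi_i^\ell$ ($\ell > k+1$) may couple (have nonzero $a_{\bfy,h}$) with level-$(k+1)$ functions whose index lies in $\Iab{k+1} \setminus \I_V^{k+1}$, and these must be tracked through $\Aab{k+1}^\intercal$ and not dropped. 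I would verify that $\Aab{k+1}^\intercal \bfu^{k+1}$, defined on rows $\Iab{k+1}$, correctly accumulates into $\bar{\bfu}^k$ without losing the boundary-of-support contributions. The symmetry of $A_\bfy$ (so that $\Aab{k+1}^\intercal$ has entries $a_{\bfy,h}(\varphi_m^{k+1}, \varphi_i^{k+1})$ for $m \in \Iab{k+1}, i \in \I_V^{k+1}$) is what makes the adjoint recursion mirror the forward one in `\cref{thm: decomp_smoothing_coarse}`, and I would lean on that symmetry throughout rather than re-deriving each coupling from scratch.
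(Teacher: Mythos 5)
Your proposal is correct and follows essentially the same route as the paper's proof: a downward induction from the finest level, using the weighted-restriction identity $\varphi_j^k = \sum_{m\in\Iab{k+1}} (P_k^\intercal)_{jm}\varphi_m^{k+1}$ on $\supp V_{k+1}$ to split the fine-level sum into the $\Aab{k+1}^\intercal\bfu^{k+1}$ term and the $\bar\bfu^{k+1}$ term supplied by the induction hypothesis. Your one refinement --- stating the inductive claim for all $j\in\Iab{k}$ rather than only $j\in\I_V^k$ --- is in fact what the paper's own proof implicitly requires when it evaluates $\bar\bfu^{k+1}_m$ at indices $m\in\Iab{k+1}$, so your bookkeeping of the closure sets is slightly more careful than, but equivalent to, the published argument.
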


\begin{proof}
    We prove the statement again by induction. Note that for $k=L$, it holds that $Q_kA_\bfy \bfu^{>L} = 0 = \bar\bfu^L$.
    Assuming that the statement holds for $k+1$, i.e. for $j\in\mathcal{I}^{k+1}_V$
    \begin{align*}
        (Q_{k+1} A_\bfy \bfu^{>k+1})_j = \sum_{\ell=k+2}^L\sum_{i\in\I_V^{\ell}} \bfu_i^\ell & \int \kappa_h(\cdot, \bfy) \scpr{\nabla \varphi_i^\ell, \nabla \varphi_j^{k+1}} \dx = \bar \bfu^{k+1}_j,
    \end{align*}
    we show that the statement also is true for $k$.
    In fact, for $j\in\mathcal{I}^k_V$ we deduce that 
    \begin{align*}
        (Q_k A_\bfy \bfu^{>k})_j &= \sum_{\ell=k+1}^L\sum_{i\in\I_V^\ell} \bfu_i^\ell \int \kappa_h(\cdot, \bfy) \scpr{\nabla \varphi_i^\ell, \nabla\varphi_j^k} \dx\\
        &= \sum_{m\in\Iab{k+1}} (P_k^\intercal)_{jm}\sum_{\ell=k+1}^L \sum_{i\in\I_V^\ell} \bfu_i^\ell \int \kappa_h(\cdot, \bfy) \scpr{\nabla \varphi_i^\ell, \nabla \varphi_m^{k+1}} \dx\\
        &= \sum_{m\in\Iab{k+1}} (P_k^\intercal)_{jm}\left( \sum_{i\in \mathcal{I}^{k+1}_V} \bfu_i^{k+1} a_{\bfy,h}(\varphi^{k+1}_i, \varphi^{k+1}_m) 
        + \sum_{\ell=k+2}^L\sum_{i\in\mathcal{I}_V^{\ell}} \bfu_i^\ell a_{\bfy,h}(\varphi^{\ell}_i, \varphi^{k+1}_m) 
        \right)\\
        &= \sum_{m\in\Iab{k+1}} (P_k^\intercal)_{jm} \left( \left(\Aab{k+1}^\intercal \bfu^{k+1}\right)_m + \bar \bfu_m^{k+1}) \right)\\
        &= \left(P_k^\intercal \left(\Aab{k+1}^\intercal \bfu^{k+1} + \bar \bfu^{k+1}\right)\right)_j
        = \bar\bfu^{k}_j.
    \end{align*}
\end{proof}

\section{CNNs}
\Cref{fig: convolutions} illustrates the different convolutions used in our architecture in~\Cref{section: CNN}.

\begin{figure}
\begin{tikzpicture}[
    layer/.style={draw, rectangle, align=center, minimum width=.75cm, minimum height=.75cm},
    kernel/.style={draw, rectangle, align=center, minimum width=.75cm, minimum height=.75cm},
    connection/.style={-{Latex[length=2mm,width=3mm]}, shorten >=1pt, shorten <=1pt},
    myarrow/.style={-{Latex[length=2mm,width=3mm]}, shorten >=1pt, shorten <=1pt},
  ]

  \matrix (inputMatrix) at (0, 0) {
    \node[layer, fill=gray!50] (input00) {$a_{11}$}; 
    & \node[layer, fill=gray!50] (input01) {$a_{12}$}; 
    & \node[layer, fill=gray!50] (input02) {$a_{13}$}; 
    & \node[layer] (input03) {$a_{14}$}; \\
    \node[layer, fill=gray!50] (input10) {$a_{21}$}; 
    & \node[layer, fill=gray!80] (input11) {$a_{22}$}; 
    & \node[layer, fill=gray!50] (input12) {$a_{23}$}; 
    & \node[layer] (input13) {$a_{24}$}; \\
    \node[layer, fill=gray!50] (input20) {$a_{31}$}; 
    & \node[layer, fill=gray!50] (input21) {$a_{32}$}; 
    & \node[layer, fill=gray!50] (input22) {$a_{33}$}; 
    & \node[layer] (input23) {$a_{34}$}; \\
    \node[layer] (input30) {$a_{41}$}; 
    & \node[layer] (input31) {$a_{42}$}; 
    & \node[layer] (input32) {$a_{43}$}; 
    & \node[layer] (input33) {$a_{44}$}; \\
  };
  \node (label) [above=.2cm of inputMatrix] {Vanilla convolution: Sweep };
  \node[right=-.1cm of label] { the center of the kernel over the image multiplying with the image and adding the products.};
    \node (conv) [right=of inputMatrix, matrix anchor=west, xshift=-1cm] {$\ast$};
  \matrix (kernelMatrix) [right=of inputMatrix, matrix anchor=west, xshift=-.5cm] {
    \node[kernel, fill=gray!50] (kernel00) {$k_{11}$}; 
    & \node[kernel, fill=gray!50] (kernel01) {$k_{12}$}; 
    & \node[kernel, fill=gray!50] (kernel02) {$k_{13}$}; \\
    \node[kernel, fill=gray!50] (kernel10) {$k_{21}$}; 
    & \node[kernel, fill=gray!80] (kernel11) {$k_{22}$}; 
    & \node[kernel, fill=gray!50] (kernel12) {$k_{23}$}; \\
    \node[kernel, fill=gray!50] (kernel20) {$k_{31}$}; 
    & \node[kernel, fill=gray!50] (kernel21) {$k_{32}$}; 
    & \node[kernel, fill=gray!50] (kernel22) {$k_{33}$}; \\
  };
    \node (equal) [right=of kernelMatrix, matrix anchor=west, xshift=-1cm] {$=$};
  \matrix (outputMatrix) [right=of kernelMatrix, matrix anchor=west, xshift=-.5cm] {
    \node[layer, fill=gray!80] (input00) {$\sum_{j\in U_1} a_{22+j}k_{22+j}$}; 
    & \node[layer] (input01) {$\sum_{j\in U_1} a_{23+j}k_{22+j}$};  \\
    \node[layer] (input10) {$\sum_{j\in U_1} a_{32+j}k_{22+j}$}; 
    & \node[layer] (input11) {$\sum_{j\in U_1} a_{33+j}k_{22+j}$};  \\
  };

  \node (label) [below=.2cm of inputMatrix] {Transpose $2$ strided convol};
  \node[right=-.25cm of label] {ution: Dilate the image with the kernel.};

  \matrix (inputMatrixst) [below left =1cm and -2cm of label] {
    \node[layer] (input00) {$a_{11}$}; 
    & \node[layer] (input01) {$a_{12}$}; \\
    \node[layer] (input10) {$a_{21}$}; 
    & \node[layer] (input11) {$a_{22}$}; \\
  };
    \node (conv) [right=of inputMatrixst, matrix anchor=west, xshift=-1.1cm] {$\ast^{2\text{ts}}$};
  \matrix (kernelMatrixst) [right=of inputMatrixst, matrix anchor=west, xshift=-.5cm] {
    \node[kernel] (kernel00) {$k_{11}$}; 
    & \node[kernel] (kernel01) {$k_{12}$}; 
    & \node[kernel] (kernel02) {$k_{13}$}; \\
    \node[kernel] (kernel10) {$k_{21}$}; 
    & \node[kernel] (kernel11) {$k_{22}$}; 
    & \node[kernel] (kernel12) {$k_{23}$}; \\
    \node[kernel] (kernel20) {$k_{31}$}; 
    & \node[kernel] (kernel21) {$k_{32}$}; 
    & \node[kernel] (kernel22) {$k_{33}$}; \\
  };
    \node (equal) [right=of kernelMatrixst, matrix anchor=west, xshift=-1cm] {$=$};
  \matrix (inputMatrixstst) [right=0cm of equal] {
    \node[layer] (input00) {$a_{11}k_{11}$}; 
    & \node[layer] (input00) {$a_{11}k_{12}$}; 
    & \node[layer] (input01) {$a_{11}k_{13}$}; 
    & \node[layer] (input00) {$0$}; 
    & \node[layer] (input02) {$0$};  \\
    \node[layer] (input00) {$a_{11}k_{21}$}; 
    & \node[layer] (input00) {$a_{11}k_{22}$}; 
    & \node[layer] (input01) {$a_{11}k_{23} $}; 
    & \node[layer] (input00) {$0$}; 
    & \node[layer] (input02) {$0$};  \\
        \node[layer] (input00) {$a_{11}k_{21}$}; 
    & \node[layer] (input00) {$a_{11}k_{22}$}; 
    & \node[layer] (input01) {$a_{11}k_{23} $}; 
    & \node[layer] (input00) {$0$}; 
    & \node[layer] (input02) {$0$};  \\
        \node[layer] (input00) {$0$}; 
    & \node[layer] (input00) {$0$}; 
    & \node[layer] (input01) {$0$}; 
    & \node[layer] (input00) {$0$}; 
    & \node[layer] (input02) {$0$};  \\
        \node[layer] (input00) {$0$}; 
    & \node[layer] (input00) {$0$}; 
    & \node[layer] (input01) {$0$}; 
    & \node[layer] (input00) {$0$}; 
    & \node[layer] (input02) {$0$};  \\
  };

  \node (astst) [right=0cm of inputMatrixstst] {$+$};
  
  \matrix (inputMatrixstst) [right=0cm of astst] {
    \node[layer] (input00) {$0$}; 
    & \node[layer] (input02) {$0$};
    & \node[layer] (input00) {$a_{12}k_{11}$}; 
    & \node[layer] (input00) {$a_{12}k_{12}$}; 
    & \node[layer] (input01) {$a_{12}k_{13}$};   \\
    \node[layer] (input00) {$0$}; 
    & \node[layer] (input02) {$0$}; 
    & \node[layer] (input00) {$a_{12}k_{21}$}; 
    & \node[layer] (input00) {$a_{12}k_{22}$}; 
    & \node[layer] (input01) {$a_{12}k_{23} $};  \\
    \node[layer] (input00) {$0$}; 
    & \node[layer] (input02) {$0$};  
    & \node[layer] (input00) {$a_{12}k_{21}$}; 
    & \node[layer] (input00) {$a_{12}k_{22}$}; 
    & \node[layer] (input01) {$a_{12}k_{23} $}; \\
        \node[layer] (input00) {$0$}; 
    & \node[layer] (input00) {$0$}; 
    & \node[layer] (input01) {$0$}; 
    & \node[layer] (input00) {$0$}; 
    & \node[layer] (input02) {$0$};  \\
        \node[layer] (input00) {$0$}; 
    & \node[layer] (input00) {$0$}; 
    & \node[layer] (input01) {$0$}; 
    & \node[layer] (input00) {$0$}; 
    & \node[layer] (input02) {$0$};  \\
  };
  
  \node (astst) [right=0cm of inputMatrixstst] {$+ \dots$};

  \matrix (inputMatrix) [below left=1.8cm and -4cm of inputMatrixst] {
    \node[layer, fill=gray!50] (input00) {$a_{11}$}; 
    & \node[layer, fill=gray!50] (input01) {$a_{12}$}; 
    & \node[layer, fill=gray!50] (input02) {$a_{13}$}; 
    & \node[layer] (input03) {$a_{14}$};
    & \node[layer] (input03) {$a_{15}$}; \\
    \node[layer, fill=gray!50] (input10) {$a_{21}$}; 
    & \node[layer, fill=gray!80] (input11) {\textcolor{blue}{$a_{22}$}}; 
    & \node[layer, fill=gray!50] (input12) {$a_{23}$}; 
    & \node[layer] (input13) {\textcolor{blue}{$a_{24}$}}; 
    & \node[layer] (input03) {$a_{25}$};\\
    \node[layer, fill=gray!50] (input20) {$a_{31}$}; 
    & \node[layer, fill=gray!50] (input21) {$a_{32}$}; 
    & \node[layer, fill=gray!50] (input22) {$a_{33}$}; 
    & \node[layer] (input23) {$a_{34}$}; 
    & \node[layer] (input03) {$a_{35}$};\\
    \node[layer] (input30) {$a_{41}$}; 
    & \node[layer] (input31) {\textcolor{blue}{$a_{42}$}}; 
    & \node[layer] (input32) {$a_{43}$}; 
    & \node[layer] (input33) {\textcolor{blue}{$a_{44}$}}; 
    & \node[layer] (input03) {$a_{45}$};\\
    \node[layer] (input30) {$a_{51}$}; 
    & \node[layer] (input31) {$a_{52}$}; 
    & \node[layer] (input32) {$a_{53}$}; 
    & \node[layer] (input33) {$a_{54}$}; 
    & \node[layer] (input03) {$a_{55}$};\\
  };
  \node (label) [above=0cm of inputMatrix] {$2$ strided convolution: Apply };
  \node[right=-.25cm of label] {\vspace{1ex} the kernel to every other input pixel.};
    \node (conv) [right=of inputMatrix, matrix anchor=west, xshift=-1cm] {$\ast^{2\text{s}}$};
  \matrix (kernelMatrix) [right=of inputMatrix, matrix anchor=west, xshift=-.4cm] {
    \node[kernel, fill=gray!50] (kernel00) {$k_{11}$}; 
    & \node[kernel, fill=gray!50] (kernel01) {$k_{12}$}; 
    & \node[kernel, fill=gray!50] (kernel02) {$k_{13}$}; \\
    \node[kernel, fill=gray!50] (kernel10) {$k_{21}$}; 
    & \node[kernel, fill=gray!80] (kernel11) {$k_{22}$}; 
    & \node[kernel, fill=gray!50] (kernel12) {$k_{23}$}; \\
    \node[kernel, fill=gray!50] (kernel20) {$k_{31}$}; 
    & \node[kernel, fill=gray!50] (kernel21) {$k_{32}$}; 
    & \node[kernel, fill=gray!50] (kernel22) {$k_{33}$}; \\
  };
    \node (equal) [right=of kernelMatrix, matrix anchor=west, xshift=-1cm] {$=$};
  \matrix (outputMatrix) [right=of kernelMatrix, matrix anchor=west, xshift=-.5cm] {
    \node[layer, fill=gray!80] (input00) {$\sum_{j\in U_1} \textcolor{blue}{a}_{\textcolor{blue}{22}+j}k_{22+j}$}; 
    & \node[layer] (input01) {$\sum_{j\in U_1} \textcolor{blue}{a}_{\textcolor{blue}{24}+j}k_{22+j}$};  \\
    \node[layer] (input10) {$\sum_{j\in U_1} \textcolor{blue}{a}_{\textcolor{blue}{42}+j}k_{22+j}$}; 
    & \node[layer] (input11) {$\sum_{j\in U_1} \textcolor{blue}{a}_{\textcolor{blue}{44}+j}k_{22+j}$};  \\
  };

  \matrix (inputMatrix) [below=1cm of inputMatrix] {
    \node[layer, fill=gray!50] (input00) {$a_{11}$}; 
    & \node[layer, fill=gray!50] (input01) {$0$}; 
    & \node[layer, fill=gray!50] (input02) {$0$}; 
    & \node[layer] (input03) {$0$};
    & \node[layer] (input03) {$0$}; \\
    \node[layer, fill=gray!50] (input10) {$0$}; 
    & \node[layer, fill=gray!80] (input11) {$\textcolor{blue}{a_{22}}$}; 
    & \node[layer, fill=gray!50] (input12) {$0$}; 
    & \node[layer] (input13) {$0$}; 
    & \node[layer] (input03) {$0$};\\
    \node[layer, fill=gray!50] (input20) {$0$}; 
    & \node[layer, fill=gray!50] (input21) {\textcolor{blue}{$a_{32}$}}; 
    & \node[layer, fill=gray!50] (input22) {\textcolor{blue}{$a_{33}$}}; 
    & \node[layer] (input23) {$0$}; 
    & \node[layer] (input03) {$0$};\\
    \node[layer] (input30) {$0$}; 
    & \node[layer] (input31) {$0$}; 
    & \node[layer] (input32) {\textcolor{blue}{$a_{43}$}}; 
    & \node[layer] (input33) {$0$}; 
    & \node[layer] (input03) {$0$};\\
    \node[layer] (input30) {$0$}; 
    & \node[layer] (input31) {$0$}; 
    & \node[layer] (input32) {$0$}; 
    & \node[layer] (input33) {$0$}; 
    & \node[layer] (input03) {$0$};\\
  };
  \node (label) [above=0cm of inputMatrix] {Submanifold sparse convol};
    \node (conv) [right=of inputMatrix, matrix anchor=west, xshift=-1cm] {$\ast^{\text{sp}}$};
  \node[right=-.25cm of label] {ution: Apply the kernel only to nonzero elements keeping the shape of the vanilla convolution.};
    \node (conv) [right=of inputMatrix, matrix anchor=west, xshift=-1cm] {$\ast^{\text{sp}}$};
  \matrix (kernelMatrix) [right=of inputMatrix, matrix anchor=west, xshift=-.4cm] {
    \node[kernel, fill=gray!50] (kernel00) {$k_{11}$}; 
    & \node[kernel, fill=gray!50] (kernel01) {$k_{12}$}; 
    & \node[kernel, fill=gray!50] (kernel02) {$k_{13}$}; \\
    \node[kernel, fill=gray!50] (kernel10) {$k_{21}$}; 
    & \node[kernel, fill=gray!80] (kernel11) {$k_{22}$}; 
    & \node[kernel, fill=gray!50] (kernel12) {$k_{23}$}; \\
    \node[kernel, fill=gray!50] (kernel20) {$k_{31}$}; 
    & \node[kernel, fill=gray!50] (kernel21) {$k_{32}$}; 
    & \node[kernel, fill=gray!50] (kernel22) {$k_{33}$}; \\
  };
    \node (equal) [right=of kernelMatrix, matrix anchor=west, xshift=-1cm] {$=$};
  \matrix (outputMatrix) [right=of kernelMatrix, matrix anchor=west, xshift=-.5cm] {
    \node[layer, fill=gray!80] (input00) {$\sum_{j\in U_1} \textcolor{blue}{a}_{\textcolor{blue}{22}+j}k_{22+j}$}; 
    & \node[layer,minimum height=.75cm,minimum width=2.99cm] (input01) {$ 0 $};
    & \node[layer,minimum height=.75cm,minimum width=2cm] (input01) {$0$};  \\
    \node[layer,minimum height=.75cm,minimum width=2.99cm] (input10) {$\sum_{j\in U_1} \textcolor{blue}{a}_{\textcolor{blue}{32}+j}k_{22+j}$}; 
    & \node[layer,minimum height=.75cm,minimum width=2.99cm] (input11) {$\sum_{j\in U_1} \textcolor{blue}{a}_{\textcolor{blue}{33}+j}k_{22+j}$};
    & \node[layer,minimum height=.75cm,minimum width=2cm] (input01) {$0$};  \\
    \node[layer,minimum height=.75cm,minimum width=2.99cm] (input10) {$0$}; 
    & \node[layer] (input11) {$\sum_{j\in U_1} \textcolor{blue}{a}_{\textcolor{blue}{43}+j}k_{22+j}$};
    & \node[layer,minimum height=.75cm,minimum width=2cm] (input01) {$0$};  \\
  };
\end{tikzpicture}
    \caption{Visualization of the vanilla, the $2$ strided, the $2$ transpose strided and the submanifold sparse convolution, where $U \coloneqq \{(i,j): \abs{i}\leq \nicefrac{W-1}2,\abs{j}\leq \nicefrac{H-1}2\}$, where $W$ and $H$ denote the uneven width and height of the kernel respectively.}
    \label{fig: convolutions}
\end{figure}

\section{Proofs of CNN approximation theorems}

\begin{corollary}[Multiplication approximation {\cite[Corollary 13]{cosi}}]
    Let $\sigma$ satisfy \cref{ass: activation function}. Let $W\in\mathbb{N}$ be the input image size and $B>0$ the range of the input values and $\varepsilon\in (0,\nicefrac12)$. Then there exists a CNN $\Psi$ with activation function $\sigma$, two-channel input and one channel output, spatial dimension of the kernels $1$, $2$ layers and number of parameters at most $9$ such that
    \begin{align*}
        \norm{\Psi(\bfx,\bfy) - \bfx\odot\bfy}_{L^\infty([-B,B]^{2\times W\times W})}\leq \varepsilon.
    \end{align*}
\end{corollary}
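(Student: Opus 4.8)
The plan is to reduce the image-level statement to a single scalar approximation and then realize that scalar map by a second-order divided difference of $\sigma$. Since the kernels have spatial dimension $1$, the claimed CNN acts pixelwise: at each of the $W\times W$ positions it reads the two input channels $(\bfx_i,\bfy_i)$ and produces one output value, using the same weights at every pixel. Hence if I can build a two-layer scalar network $g_\sigma:\mathbb{R}^2\to\mathbb{R}$ with $\sup_{(s,t)\in[-B,B]^2}\abs{g_\sigma(s,t)-st}\le\varepsilon$, applying it channelwise produces a CNN $\Psi$ satisfying $\norm{\Psi(\bfx,\bfy)-\bfx\odot\bfy}_{L^\infty}=\sup_{(s,t)\in[-B,B]^2}\abs{g_\sigma(s,t)-st}\le\varepsilon$, and the architecture (two layers, $1\times1$ kernels, fixed number of channels) is inherited directly from the scalar construction. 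So everything rests on building $g_\sigma$.

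For the scalar construction I would use the mixed second difference of $\sigma$ about the point $x_0$ supplied by \cref{ass: activation function}. Writing $\delta_1=hs$, $\delta_2=ht$ for a small internal scale $h>0$ to be fixed later, a third-order Taylor expansion of $\sigma$ around $x_0$ gives
\begin{align*}
\sigma(x_0+\delta_1+\delta_2)-\sigma(x_0+\delta_1)-\sigma(x_0+\delta_2)+\sigma(x_0)=\sigma''(x_0)\,\delta_1\delta_2+R(h,s,t),
\end{align*}
where the zeroth- and first-order contributions cancel identically and the remainder obeys $\abs{R(h,s,t)}\le C_\sigma\,h^3(\abs{s}+\abs{t})^3$ with $C_\sigma$ controlled by $\sup\abs{\sigma'''}$ on a fixed neighbourhood of $x_0$. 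Dividing by $h^2\sigma''(x_0)\neq0$ yields
\begin{align*}
g_\sigma(s,t):=\frac{1}{h^2\sigma''(x_0)}\Big(\sigma(x_0+hs+ht)-\sigma(x_0+hs)-\sigma(x_0+ht)+\sigma(x_0)\Big)=st+O(h)
\end{align*}
uniformly on $[-B,B]^2$, so choosing $h$ small enough (depending only on $B,\varepsilon,\sigma$) drives the error below $\varepsilon$.

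This expression is exactly a two-layer network. The hidden layer has three neurons computing $\sigma(x_0+hs+ht)$, $\sigma(x_0+hs)$ and $\sigma(x_0+ht)$; each is $\sigma$ applied to an affine function of the two input channels, i.e. a $1\times1$ convolution with shift $x_0$. The output layer takes the linear combination with weights $c,-c,-c$ where $c=1/(h^2\sigma''(x_0))$ and folds the constant $c\,\sigma(x_0)$ into the output bias, so no further nonlinearity is needed. Counting the nonzero kernel entries and biases (two plus one plus one nonzero first-layer weights, the shared argument shift $x_0$, three output weights and one output bias) gives the stated bound of at most $9$ parameters. The decisive point is that this count is independent of $\varepsilon$: accuracy is purchased solely by shrinking the internal scale $h$ and rescaling the output weight $c$, never by enlarging the architecture.

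The step I expect to require the most care, rather than any deep obstacle, is domain control together with the parameter bookkeeping. One must ensure the arguments $x_0+hs+ht$, $x_0+hs$, $x_0+ht$ remain inside the neighbourhood of $x_0$ on which $\sigma\in C^3$ (guaranteed once $h\le h_0/(2B)$ for the neighbourhood radius $h_0$), so that the third-order Taylor estimate is legitimate, and one must verify that the remainder constant $C_\sigma$ is finite and $\bfy$-independent, which holds because it depends only on $\sigma$ and the fixed radius. The passage from the scalar bound to the $L^\infty$ image bound is then immediate from the pixelwise action, requiring no additional spatial analysis.
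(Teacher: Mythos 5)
Your construction is correct and is essentially the same argument that underlies this statement: the paper does not prove the corollary itself but imports it from \cite{cosi} (Corollary 13), whose proof rests on exactly the mixed second-order difference $\sigma(x_0+hs+ht)-\sigma(x_0+hs)-\sigma(x_0+ht)+\sigma(x_0)=\sigma''(x_0)h^2st+O\bigl(h^3(\abs{s}+\abs{t})^3\bigr)$ for a $C^3$ activation with $\sigma''(x_0)\neq 0$ (cf.\ \cref{ass: activation function} and the cited Corollary C.3 of \cite{mones2021}), realized pixelwise by $1\times 1$ convolutions with accuracy purchased by shrinking the internal scale $h$ and rescaling the output weight, never by enlarging the architecture — precisely your reduction, including the domain-control condition $h\leq h_0/(2B)$. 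The only point to watch is bookkeeping: your count of at most $9$ parameters counts the first-layer shift $x_0$ once as a shared bias (counting a separate bias per hidden channel would give $11$), which is consistent with the parameter-counting convention of the cited result but worth stating explicitly.
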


\begin{lemma}[Concatenation approximation {\cite[Lemma 20]{cosi}}]\label{lem: CNN concatenation approximation}
    Let $n, d_1,\dots,d_{n+1}\in\mathbb{N}$, $i\in [n]$ and $f_i:\mathbb{R}^{d_i}\to\mathbb{R}^{d_{i+1}}$ be continuous and let $F:\mathbb{R}^{d_1}\to\mathbb{R}^{d_{n+1}}$ be the concatenation $F\coloneqq f_n \circ \dots \circ f_1$. Let $M,\varepsilon>0$. Then there exists $\tilde{M},\tilde{\varepsilon}>0$ such that $\norm{f_i - \tilde{f}_i}_{L^\infty([-\tilde{M},\tilde{M}]^{d_i})} \leq \tilde{\varepsilon}$ for each $i\in [n]$ and some $\tilde{f}_i:\mathbb{R}^{d_i}\to\mathbb{R}^{d_{i+1}}$ implies
    \begin{align*}
        \norm{ F - \tilde{f}_n\circ \dots \circ \tilde{f}_1}_{L^\infty([-M,M]^{d_1})}\leq\varepsilon.
    \end{align*}
\end{lemma}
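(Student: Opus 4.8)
The plan is to track how the error propagates through the composition and to resolve the joint choice of $\tilde M$ and $\tilde\varepsilon$ by first pinning down the relevant domains from the \emph{exact} maps and only afterwards shrinking $\tilde\varepsilon$. Write $F_0 = \tilde F_0 = \mathrm{id}$ and, for $i\in[n]$, set $F_i \coloneqq f_i\circ\dots\circ f_1$ and $\tilde F_i \coloneqq \tilde f_i\circ\dots\circ\tilde f_1$, so that $F = F_n$ and the map to be estimated is $\tilde F_n$; throughout, $\norm{\cdot}$ denotes the sup-norm and all suprema run over $x\in[-M,M]^{d_1}$. Since $[-M,M]^{d_1}$ is compact and each $f_i$ is continuous, every image $K_i\coloneqq F_i([-M,M]^{d_1})\subset\mathbb{R}^{d_{i+1}}$ is compact; I would choose $R>0$ with $K_i\subset[-(R-1),R-1]^{d_{i+1}}$ for all $i$ and set $\tilde M\coloneqq R$. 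This choice depends only on the exact maps, not on the still-unchosen $\tilde\varepsilon$, which is what breaks the apparent circularity between the size of the legal domain and the size of the perturbed trajectory.

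Next I would establish the one-step error estimate. For $i\in[n]$ put $e_i\coloneqq\sup_x\norm{F_i(x)-\tilde F_i(x)}$ and $e_0\coloneqq 0$. Whenever $\tilde F_{i-1}(x)\in[-\tilde M,\tilde M]^{d_i}$, a split and the triangle inequality give
\begin{align*}
    \norm{F_i(x)-\tilde F_i(x)}
    &\le \norm{f_i(F_{i-1}(x))-f_i(\tilde F_{i-1}(x))} + \norm{f_i(\tilde F_{i-1}(x))-\tilde f_i(\tilde F_{i-1}(x))}\\
    &\le \rho_i(e_{i-1}) + \tilde\varepsilon,
\end{align*}
where $\rho_i$ is a nondecreasing modulus of continuity of $f_i$ on the compact set $[-\tilde M,\tilde M]^{d_i}$ (which exists because $f_i$ is uniformly continuous there, and applies since both $F_{i-1}(x)\in K_{i-1}$ and $\tilde F_{i-1}(x)$ lie in that set), while the second term is controlled by the hypothesis $\norm{f_i-\tilde f_i}_{L^\infty([-\tilde M,\tilde M]^{d_i})}\le\tilde\varepsilon$. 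Thus $e_i\le\rho_i(e_{i-1})+\tilde\varepsilon$, valid as long as the perturbed trajectory remains inside the legal domain.

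Finally I would fix $\tilde\varepsilon$ by a backward allocation of the error budget. Set $\delta_n\coloneqq\min\{\varepsilon,1\}$ and, descending through $i=n,\dots,1$, use uniform continuity of $f_i$ to select $\delta_{i-1}\in(0,1]$ so small that $\rho_i(\delta_{i-1})\le\delta_i/2$; then put $\tilde\varepsilon\coloneqq\tfrac12\min_{0\le i\le n}\delta_i$, so in particular $\tilde\varepsilon\le\delta_i/2$ for every $i$ and $\tilde\varepsilon\le\tfrac12$. An upward induction on $i$ then delivers the domain condition and the error bound at once: assuming $e_{i-1}\le\delta_{i-1}\le 1$, the estimate $\norm{\tilde F_{i-1}(x)-F_{i-1}(x)}\le e_{i-1}\le 1$ combined with $F_{i-1}(x)\in[-(R-1),R-1]^{d_i}$ places $\tilde F_{i-1}(x)$ in $[-R,R]^{d_i}=[-\tilde M,\tilde M]^{d_i}$, so the one-step estimate applies and $e_i\le\rho_i(\delta_{i-1})+\tilde\varepsilon\le\delta_i/2+\delta_i/2=\delta_i$. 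At $i=n$ this yields $e_n\le\delta_n\le\varepsilon$, which is exactly the desired bound $\norm{F-\tilde F_n}_{L^\infty([-M,M]^{d_1})}\le\varepsilon$.

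The main obstacle is precisely the circular-looking interdependence just described: the hypothesis on $\norm{f_i-\tilde f_i}$ is assumed only on $[-\tilde M,\tilde M]^{d_i}$, so to invoke it at the point $\tilde F_{i-1}(x)$ one must already know that point is legal, yet its position depends on the accumulated error and hence on $\tilde\varepsilon$. Bounding the exact trajectory first with a fixed unit buffer, fixing $\tilde M$ from that, and only then shrinking $\tilde\varepsilon$ to keep the perturbation within the buffer resolves it. A secondary point worth flagging is that no Lipschitz constants are available—only continuity—so the backward allocation through the moduli $\rho_i$ (rather than through a product of Lipschitz factors) is essential, and this is why the statement asserts mere existence of $\tilde M,\tilde\varepsilon$ without a quantitative rate.
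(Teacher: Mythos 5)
Your proof is correct: the paper itself does not prove this lemma but imports it verbatim from \cite[Lemma 20]{cosi}, and your argument (fix $\tilde M$ from the compact images of the \emph{exact} partial compositions with a unit buffer, then shrink $\tilde\varepsilon$ via a backward allocation through moduli of continuity, closing the loop by induction) is exactly the standard proof one expects behind that citation. The only point to tighten is that your choice of $R$ must cover $i=0$ as well, i.e.\ $K_0=[-M,M]^{d_1}\subset[-(R-1),R-1]^{d_1}$, since your induction base at $i=1$ invokes $F_0(x)\in[-(R-1),R-1]^{d_1}$ to place $\tilde F_0(x)=x$ in the legal domain of $\tilde f_1$ — a one-word fix, not a gap.
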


\subsection{Solver approximation}\label{section: Solver approx}

Using the notation from \cite[Definition 14]{cosi}, for $\kappa\in H_0^1(D)$ $k=1,\dots,L$, $\ell=1,\dots,6$ and $i\in\I_U^k$ define
\begin{align*}
    \Upsilon (\kappa, \mathcal{T}_k, \ell, i) \coloneqq \int_{T_i^\ell}\kappa \dx, \quad \Upsilon(\kappa,\T_k,\ell) \coloneqq (\Upsilon (\kappa, \mathcal{T}_k, \ell, i))_{i\in\I_U^k} \quad \text{and} \quad \Upsilon(\kappa,\T_k) \coloneqq (\Upsilon (\kappa, \mathcal{T}_k, \ell))_{\ell=1,\dots,6}.
\end{align*}

In~\cite[Theorem 16]{cosi} it was shown that $\left(A^k_{\bfy} \bfu^k\right)_\text{img}$ can be written as an application of a kernel to $\uimk$ and a multiplication with $\Upsilon(\kappa_h(\cdot,\bfy), \T_k)$, where $A_\bfy^k$ only considers indices in $\I_V^k\times\I_V^k$.
We generalize the previous result to the application of $\Aab{k}$ with indices in $\I_V^k\times \Iab{k}$ by the use of multiple channels of in the following theorem. 

\begin{definition}[Translation]\label{def: translation}
    Let $m\in\mathbb{N}$ be the number of basis functions with overlapping support, i.e., for $i$ the index of an inner node let $m \coloneqq \abs{\{ \varphi_j^k : \sup \varphi_i^k\cap \sup \varphi_j^k \neq \emptyset \}}$.
    For $\vimk\in\mathbb{R}^{\I_U^k}$, let $\tran\vimk\in\mathbb{R}^{m\times \I_U^k}$ be defined by
    \begin{align*}
        \tran\vimk \coloneq \left[ \tran^{(1)}\vimk, \dots, \tran^{(m)}\vimk \right],
    \end{align*}
    where $T^{(1)}, \dots, T^{(m)}$ defines the translation such that for $i\in\I_U^k$
    \begin{align*}
        (\tran\vimk)_i = \left[(\vimk)_{i+p_1},\dots,(\vimk)_{i+p_m}\right],
    \end{align*}
    where $i+p_1, \dots, i+p_m$ denote the indices of the basis functions with overlapping support and $p_1,\dots,p_m$ denote the directions with $p_1=0$.
    Note that the directions $p_1,\dots,p_m$ are constant for all $i$ due to the used uniformly refined meshes as depicted in the first row in \Cref{fig: uniform vs local meshes}.
    Note that for the depicted meshes, $m=7$ holds true and is independent of $k$.
\end{definition}

\begin{figure}
    \centering
    \includesvg[width=\linewidth]{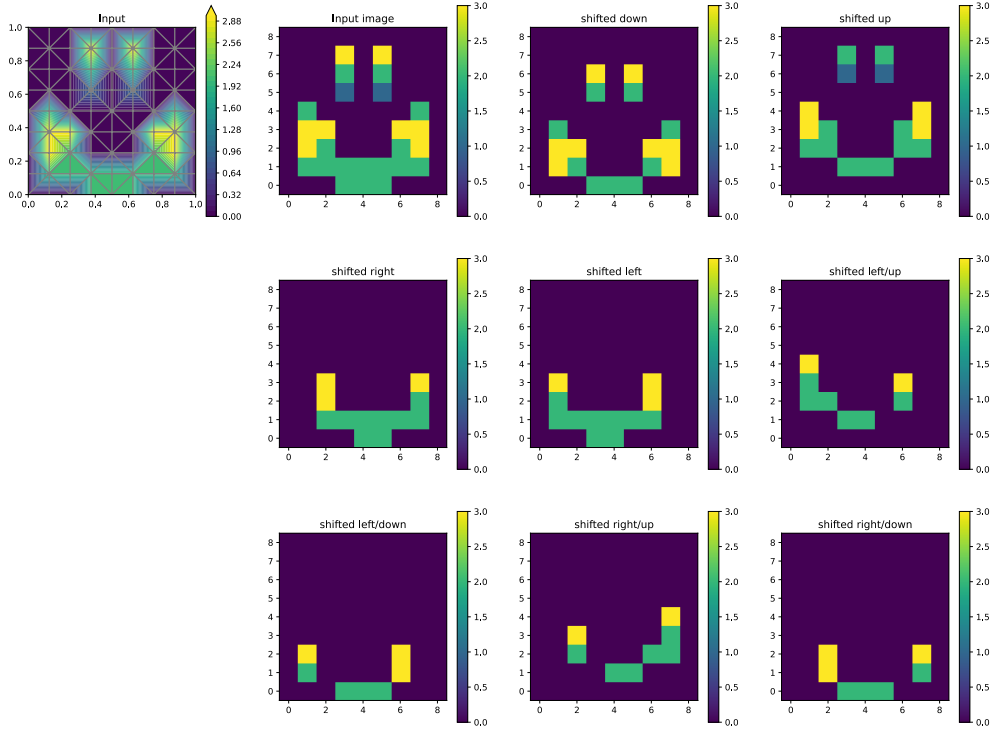}
    \caption{This is a visualization of the translation~\cref{def: translation}. On the left the corresponding function to some input image is plotted. Since the mesh is also plotted, it can be seen that each nodal hat function has an overlapping support with nine other nodal hat functions including itself, i.e. $m=9$. The output of the translation is a stack of the other nine images plotted. Each image shows the original image shifted in the direction of the nodal hat functions with overlapping support, but only on the support of the input image. This can be interpreted as shifting the image and then multiplying with a mask. Fixing one pixel index, the values of the pixel and all surrounding pixels are now saved in the pixel index in different images instead.}
    \label{fig:enter-label}
\end{figure}

Based on the definition of the translated images, the following theorem can be formulated.

\begin{theorem}\label{thm: F}
    Let $m\in\mathbb{N}$ be as in \cref{def: translation}. There exist kernels $K^{(\ell)}\in \mathbb{R}^{1\times m\times 1\times 1}, \ell=1,\dots,6$ such that for
\begin{align*}
    F^k:\mathbb{R}^{(m+7)\times \I_V^k} \to \mathbb{R}^{\I_V^k}, \quad \left(\vimk, \bar\bfk^{(1)}, \dots, \bar\bfk^{(6)}\right) \mapsto \sum_{\ell=1}^6 \bar\bfk^{(\ell)} \odot (\vimk \spco K^{(\ell)}),
\end{align*}
for $\bfv^k\in\mathbb{R}^{\Iab{k}}$, $M^k = 1\in\mathbb{R}^{\I_V^k}$ it holds that
\begin{align*}
    F^k\left(M^k_{\text{img}} \odot T\vimk, \Upsilon (\kappa_h(\cdot,\bfy), \T_k)\right) = \left(\Aab{k} \hspace{.5ex} \vab{\empty}k\right)_\text{img}.
\end{align*}
\end{theorem}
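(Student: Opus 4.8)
The plan is to unfold the product $\Aab{k}\vab{\empty}{k}$ entrywise and to exploit that, on a uniformly refined mesh, every entry of $\Aab{k}$ is the product of a \emph{translation-invariant} gradient inner product and a local $\kappa_h$-mass of one of the six triangles surrounding a node. This separation is exactly what turns the operator application into six channel-mixing convolutions with fixed (pixel-independent) kernels, weighted pointwise by the element integrals $\Upsilon(\kappa_h(\cdot,\bfy),\T_k)$. I would follow the construction of~\cite[Theorem 16]{cosi}, the only new ingredient being the replacement of the symmetric index set $\I_V^k\times\I_V^k$ by $\I_V^k\times\Iab{k}$; this is accommodated by letting the translation $\tran$ of \cref{def: translation} collect, in its $m$ channels, the full stencil of neighbours of each node $j\in\I_V^k$, including those lying in the closure $\Iab{k}\setminus\I_V^k$.

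First I would write, for $j\in\I_V^k$,
\begin{align*}
    (\Aab{k}\vab{\empty}{k})_j = \sum_{i\in\Iab{k}} a_{\bfy,h}(\varphi_i^k,\varphi_j^k)\,\vab{i}{k},
\end{align*}
and note that only the $m$ neighbours $i=j+p_1,\dots,j+p_m$ with $\supp\varphi_i^k\cap\supp\varphi_j^k\neq\emptyset$ contribute. Since each $\nabla\varphi_i^k$ is constant on every triangle, I would split each bilinear form over the six triangles $T_j^\ell$ meeting at node $j$,
\begin{align*}
    a_{\bfy,h}(\varphi_{j+p}^k,\varphi_j^k) = \sum_{\ell=1}^6 g^\ell_p \int_{T_j^\ell}\kappa_h(\cdot,\bfy)\,\dx, \qquad g^\ell_p \coloneqq \scpr{\nabla\varphi_{j+p}^k|_{T_j^\ell},\nabla\varphi_j^k|_{T_j^\ell}},
\end{align*}
where the constants $g^\ell_p$ are independent of $j$ by uniformity of the mesh and vanish for those triangles $T_j^\ell$ not having $j+p$ as a vertex.

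Then I would define the kernels $K^{(\ell)}\in\mathbb{R}^{1\times m\times1\times1}$ by $(K^{(\ell)})_{1,r,1,1}\coloneqq g^\ell_{p_r}$ for $r=1,\dots,m$. Applying the $1\times1$-spatial, $m$-to-$1$ channel submanifold sparse convolution $\spco K^{(\ell)}$ to $M^k_{\text{img}}\odot\tran\vimk$ then produces, at each active pixel $j$, the gradient-weighted stencil sum $\sum_{r=1}^m g^\ell_{p_r}(\vimk)_{j+p_r}$. Taking the Hadamard product with $\bar\bfk^{(\ell)}=\Upsilon(\kappa_h(\cdot,\bfy),\T_k,\ell)$, whose $j$-th entry is $\int_{T_j^\ell}\kappa_h(\cdot,\bfy)\,\dx$, inserts the missing $\kappa_h$-weight of triangle $\ell$, and summing over $\ell=1,\dots,6$ reassembles $(\Aab{k}\vab{\empty}{k})_j$ by the identity above. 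The mask $M^k_{\text{img}}$ together with the submanifold sparse convolution confines all arithmetic to the active pixels, which here, with $M^k=1$, is precisely the patch $\I_V^k$.

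The main obstacle I anticipate is the boundary bookkeeping between $\I_V^k$, its closure $\Iab{k}$, and the ambient uniform index set $\I_U^k$: one must check that, for every output node $j\in\I_V^k$, all $m$ stencil neighbours needed to evaluate $(\Aab{k}\vab{\empty}{k})_j$ already belong to $\Iab{k}$---which holds exactly by the definition of $\Iab{k}$ as the set of indices whose basis functions overlap $V^k$---and that $\tran$ reads these coefficients from the image on $\I_U^k$ without introducing spurious contributions from outside $\Iab{k}$. Here the pixel-independence of the directions $p_1,\dots,p_m$ (\cref{def: translation}) and the vanishing of $g^\ell_p$ on triangles outside the relevant stars make the single fixed kernel $K^{(\ell)}$ legitimate on the whole patch, so that no separate treatment of interior and boundary nodes is required.
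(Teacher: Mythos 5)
Your proposal is correct and follows essentially the same route as the paper's own proof: unfold $\Aab{k}\,\vab{\empty}{k}$ entrywise, split each entry over the six triangles at a node so that the translation-invariant gradient constants separate from the elementwise integrals $\Upsilon(\kappa_h(\cdot,\bfy),\T_k,\ell)$, and realize the stencil sum through the translation channels with $1\times 1$ kernels $K^{(\ell)}$, the mask and submanifold sparse convolution confining everything to $\I_V^k$. The only cosmetic differences are that you use the pointwise gradient products $g^\ell_p$ where the paper uses the integrated constants $C_{ij\ell}$ (yours is arguably the cleaner normalization), and that you fold the paper's separate first case (both sides vanish on $\I_U^k\setminus\I_V^k$) into the observation that the sparse convolution leaves inactive pixels at zero.
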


Note that since $\left(\Aab{k} \hspace{.5ex} \vab{\empty}k\right)_\text{img}$ is zero at every index $i\in \I_U^k\backslash\I_V^k$, in contrast to the definition in \cite{cosi} the convolution here only needs to be applied to nodes $i\in\I_V^k$ since otherwise zero nodes would be assigned nonzero values after one convolution. This submanifold sparse convolution is denoted by $\spco$ here.

\begin{proof}
    First, let $i\in\I_U^k\backslash\I_V^k$.
    Since $\Aab{k}:\mathbb{R}^{\Iab{k}}\to\mathbb{R}^{\I_V^k}$ only maps to indices in $\I_V^k$, it holds that ${(\Aab{k} \hspace{.5ex} \vab{\empty}{k})_\text{img}}_{i} = 0$.
    Furthermore, for $\bfw \coloneqq M^k_{\text{img}} \odot T\vimk$ we have that $\bfw_i = 0$ and $\spco$ only acts on the indices $\I_V^k$, leaving everything else at $0$. Therefore,
    \begin{align*}
        {(\Aab{k} \hspace{.5ex}\vab{\empty}{k})_\text{img}}_{i} = 0 = \sum_{\ell=1}^6 \bar\bfk^{(\ell)}_i \cdot 0 = \sum_{\ell=1}^6 \bar\bfk^{(\ell)}_i (\bfw \spco K^{(\ell)})_i.
    \end{align*}
    Second, for $i\in\I_V^k$ and $j\in\Iab{k}$ as in \cite[proof of Theorem 16]{cosi} with $C_{ijk} \coloneqq \int_{T_i^k} \langle \nabla\varphi_i, \nabla \varphi_j \rangle dx$ it holds that
    \begin{align*}
        (\Aab{k})_{ij} = \int_D \kappa_h(\cdot,\bfy) \langle \nabla\varphi_i, \nabla \varphi_j \rangle dx = \sum_{\ell=1}^6 \int_{T_i^\ell}\kappa_h(\cdot,\bfy) \langle \nabla\varphi_i, \nabla \varphi_j \rangle dx = \sum_{\ell=1}^6 \Upsilon(\kappa_h(\cdot,\bfy),\T_k,\ell,i) C_{ij\ell}.
    \end{align*}
    Since $C_{ijk}=0$ if $\varphi_i^k\cap\varphi_j^k \neq \emptyset$
    , for the corrections $p_1,\dots,p_m$ defined in \cref{def: translation} we get that
    \begin{align*}
        {(\Aab{k} \bfv^k)_\text{img}}_{i} &= \sum_{j\in\Iab{k}} \bfv^k_j  \sum_{\ell=1}^6 \Upsilon(\kappa_h(\cdot,\bfy),\T_k,\ell,i) C_{ij\ell}
        = \sum_{\ell=1}^6 \Upsilon(\kappa_h(\cdot,\bfy),\T_k,\ell,i)\sum_{j\in\Iab{k}} \bfv^k_j   C_{ij\ell}\\
        &= \sum_{\ell=1}^6 \Upsilon(\kappa_h(\cdot,\bfy),\T_k,\ell,i)\sum_{t=1}^m \bfv^k_{i+p_t}   C_{i,i+p_t,\ell}
        = \sum_{\ell=1}^6 \Upsilon(\kappa_h(\cdot,\bfy),\T_k,\ell,i)\sum_{t=1}^m \bfw_{t,i}   C_{i,i+p_t,\ell}.
    \end{align*}
    Since $C_{ijk}$ only depends on the difference $i-j$, for each $\ell$ the inner sum can be expressed with the same constant for every $i\in\I_V^k$ by the convolution with a $m\times 1\times 1$ kernel $K^{(\ell)}$, i.e.,
    \begin{align*}
        (\Aab{k} \bfv^k)_i &= \sum_{\ell=1}^6 \Upsilon(\kappa_h(\cdot,\bfy),\T_k,\ell,i)\sum_{t=1}^m \bfw_{t,i}   C_{1,1+p_t,\ell}\\
        &=  \sum_{\ell=1}^6 \Upsilon(\kappa_h(\cdot,\bfy),\T_k,\ell,i) (\bfw \spco K^{(\ell)})_i = F^k(\bfw, \Upsilon (\kappa_h(\cdot,\bfy), \T_k)).
    \end{align*}
\end{proof}
Consequently, for $\kappa^k_h\in V^k$ we have that
\begin{align*}
    \Upsilon (\kappa^k_h, \mathcal{T}_k, \ell, i)= \sum_{\{j:\supp \varphi_j^k \cap T_i^\ell \not= \emptyset\}} \bfk^k_j \frac{h^2}{3}.
\end{align*}

\begin{theorem}\label{thm: F approx}
    For every $\varepsilon, M>0$ there exists a CNN $\Psi: \mathbb{R}^{7\times \I_U^k 
    } \to \mathbb{R}^{\I_U^k}
    $ of constant size consisting of submanifold sparse convolutions such that
    \begin{align*}
        \norm{\Psi - F}_{L^\infty\left([-M,M]^{7 \times \I_U^k 
        }\right)} \leq \varepsilon,
    \end{align*}
    where $F(M^k_{\text{img}} \odot T\vimk, \Upsilon(\kappa ,\mathcal{T})) = (\Aab{k} \hspace{.5ex} \vab{\empty}k)_{\text{img}}$.
\end{theorem}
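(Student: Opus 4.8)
The plan is to use that $F$ is a composition of linear maps interleaved with finitely many Hadamard products, so that only the products are nonlinear while everything else can be reproduced exactly. Recalling from \cref{thm: F} that
\[
F\bigl(M^k_{\text{img}}\odot\tran\vimk,\Upsilon(\kappa_h(\cdot,\bfy),\T_k)\bigr)=\sum_{\ell=1}^6 \bar\bfk^{(\ell)}\odot\bigl(\vimk\spco K^{(\ell)}\bigr),
\]
I would first isolate the four types of operation involved: the masked translation $\tran$ of \cref{def: translation}, the channel-mixing convolutions with the fixed kernels $K^{(\ell)}\in\mathbb{R}^{1\times m\times1\times1}$, the six products $\bar\bfk^{(\ell)}\odot(\cdot)$, and the summation over $\ell$. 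The first, second and fourth are linear, and each can be realized \emph{exactly} by a single submanifold sparse convolution with a fixed kernel independent of $\varepsilon$: the translation is a masked shift, the action of $K^{(\ell)}$ is a $1\times1$ convolution across channels, and the sum is a $1\times1$ convolution collapsing the six channels. These layers introduce no error.

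It therefore remains to approximate the six products. For this I would invoke the multiplication network of \cite[Corollary 13]{cosi}, which for any box $[-B,B]$ and tolerance approximates $\bfx\odot\bfy$ using at most nine parameters and two layers, with a count independent of the image size and of the accuracy. Since the sparse convolution $\vimk\spco K^{(\ell)}$ leaves its output zero on $\I_U^k\setminus\I_V^k$ and multiplication by $\bar\bfk^{(\ell)}$ preserves these zeros, the submanifold sparse structure is retained through every layer, so the assembled network is indeed built from sparse convolutions and its exact analogue $F$ vanishes off $\I_V^k$.

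To bound the total error I would view $\Psi$ as the concatenation of the exact linear layers with the six multiplication subnetworks and apply \cref{lem: CNN concatenation approximation}. Given the prescribed $M$ and $\varepsilon$, the lemma furnishes $\tilde M,\tilde\varepsilon>0$ such that approximating each factor map to within $\tilde\varepsilon$ on $[-\tilde M,\tilde M]$ forces the overall deviation below $\varepsilon$ on $[-M,M]^{7\times\I_U^k}$; I would then instantiate each multiplication network with $B=\tilde M$ and tolerance $\tilde\varepsilon$, the linear layers being exact on any range. The parameter count of $\Psi$ is the sum of the fixed costs of the linear convolutions and of the six multiplication networks (at most $6\cdot9$ parameters), hence a constant independent of $\varepsilon$ and of the resolution $\abs{\I_U^k}$, which is the asserted constant-size bound.

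I expect the principal obstacle to be the bookkeeping that keeps every intermediate activation inside the fixed box $[-\tilde M,\tilde M]$ on which the multiplication network is guaranteed to be accurate, together with the verification that the support-on-$\I_V^k$ sparsity pattern is preserved by each layer. The latter is what ensures that the approximation error of the multiplication subnetworks cannot leak onto the complement $\I_U^k\setminus\I_V^k$, where $F$ must be identically zero; controlling the interaction of masking, translation and sparse convolution is precisely the technical complication emphasized in the introduction as absent from the non-local treatment in \cite{cosi}.
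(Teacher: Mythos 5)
Your proposal is correct and follows essentially the same route as the paper's proof: the paper likewise splits $F$ into an exact one-layer sparse convolution applying the fixed kernels $K^{(\ell)}$ (with the $\bar\bfk^{(\ell)}$ passed through), the six pointwise multiplications approximated by the constant-size multiplication network of \cite[Corollary 13]{cosi}, and an exact $1\times 1$ channel-summing layer, all glued together with \cref{lem: CNN concatenation approximation}. The only cosmetic difference is that you treat the masked translation as a layer of $\Psi$, whereas in the theorem the translated stack $M^k_{\text{img}}\odot T\vimk$ is already the input to $F$, so no layer is needed for it; otherwise your error bookkeeping and sparsity-preservation remarks match the paper's (more terse) argument, which defers these details to \cite[Theorem 18]{cosi}.
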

\begin{proof}
    Exchanging the convolution with the submanifold sparse convolution $\spco$, the proof works similarly to the proof of \cite[Theorem 18]{cosi} in three steps with the difference that the kernels have width $1$ but $m$ channels opposed to one channel and width $3$.
    This is due to the fact that the input images are one image translated in space in different directions $m$ times to account for surrounding information in each node $i\in\I_V^k$ in multiple channels instead of in the surrounding nodes $j\in\Iab{k}$ close to $i$.
    In this way, the sparse convolution $\spco$ can act on $i\in\I_V^k$ without losing information.
    In the first step there exists a one-layer CNN realizing the mapping
    \begin{align*}
        \left(\bfw, \bar\bfk^{(1)}, \dots, \bar\bfk^{(6)}\right) \mapsto \left( \bfw \spco K^{(\ell)}, \bfk^{(\ell)} \right)_{\ell=1}^6.
    \end{align*}
    The pointwise multiplication in the second step can be approximated by a CNN of constant size arbitrarily well.
    Moreover, the addition of the channels in the third step can be realized by a one-layer CNN with a $1\times 1$ kernel as described in the proof of \cite[Theorem 18]{cosi}.
    Concatenating these CNNs yields the claim.
\end{proof}

In a similar way the following theorem can be proven.
\begin{theorem}\label{thm: FT}
    Let $m\in\mathbb{N}$ be as in \cref{def: translation}. There exist kernels $K^{(\ell)}\in \mathbb{R}^{1\times m\times 1\times 1}, \ell=1,\dots,6$ such that for
\begin{align*}
    \FkT:\mathbb{R}^{(m+7)\times \I_V^k} \to \mathbb{R}^{\I_V^k}, \quad \left(\vimk, \bar\bfk^{(1)}, \dots, \bar\bfk^{(6)}\right) \mapsto \sum_{\ell=1}^6 \bar\bfk^{(\ell)} \odot (\vimk \spco K^{(\ell)})
\end{align*}
for $\bfv^k\in\mathbb{R}^{\Iab{k}}$, $M^k = 1\in\mathbb{R}^{\I_V^k}$, it holds that
\begin{align*}
    \FkT\left(M^k_{\text{img}} \odot T\vimk, \Upsilon (\kappa_h(\cdot,\bfy), \T_k)\right) = M^k \odot T \left(\Aab{k}^\intercal  \vab{\empty}k\right)_\text{img}.
\end{align*}
Furthermore, $\FkT$ can be approximated arbitrarily well by a CNN with submanifold sparse convolutions of constant size.
\end{theorem}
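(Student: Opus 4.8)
The plan is to mirror the proof of \cref{thm: F} almost verbatim, since $\FkT$ carries exactly the same convolve--multiply--sum architecture: the only freedom is the choice of the local kernels $K^{(\ell)}$, so the whole task is to exhibit kernels that realize the \emph{transposed} local stencil and then to verify the claimed identity index by index. The algebraic input is the decomposition already established in the proof of \cref{thm: F}, namely $(\Aab{k})_{i,j} = \int_D \kappa_h(\cdot,\bfy)\langle\nabla\varphi_i^k,\nabla\varphi_j^k\rangle\,\dx = \sum_{\ell=1}^6 \Upsilon(\kappa_h(\cdot,\bfy),\T_k,\ell,i)\,C_{ij\ell}$ with $C_{ij\ell}=\int_{T_i^\ell}\langle\nabla\varphi_i,\nabla\varphi_j\rangle\,\dx$, where by translation invariance of the uniformly refined mesh $C_{ij\ell}$ depends only on the offset $i-j$, so that $C_{i,i+p_t,\ell}=C_{1,1+p_t,\ell}$ for the directions $p_1,\dots,p_m$ of \cref{def: translation}.

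Applying the transpose to $\vab{\empty}{k}$ (which is supported on $\I_V^k$) and reusing this decomposition for $(\Aab{k})_{i,j}$ gives, for $j\in\Iab{k}$,
\begin{align*}
(\Aab{k}^\intercal\, \vab{\empty}{k})_j
&= \sum_{i\in\I_V^k} \vab{i}{k}\sum_{\ell=1}^6 \Upsilon(\kappa_h(\cdot,\bfy),\T_k,\ell,i)\,C_{ij\ell}\\
&= \sum_{\ell=1}^6 \sum_{t=1}^m \vab{j-p_t}{k}\,\Upsilon(\kappa_h(\cdot,\bfy),\T_k,\ell,j-p_t)\,C_{1,1+p_t,\ell}.
\end{align*}
Two features distinguish this from the forward case. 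First, the weight $\Upsilon$ is now evaluated at the \emph{source} index $j-p_t$ rather than at the output index, so I would not multiply by $\Upsilon$ after contracting the channels (as in $F^k$) but fold the coefficient into the translated input: since channel $t$ of $M^k_{\text{img}}\odot T\vimk$ already carries $\vab{\empty}{k}_{i+p_t}$ on $\I_V^k$, pairing it with the correspondingly translated coefficient image yields $\vab{\empty}{k}_{i+p_t}\,\Upsilon(\cdots,i+p_t)$, after which a $1\times1$ kernel $K^{(\ell)}$ whose entries are the $C_{1,1+p_t,\ell}$ (with the offsets relabelled $p_t\mapsto -p_t$, which is a mere permutation because the neighbour stencil is symmetric) performs the contraction. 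Second, $\Aab{k}^\intercal$ maps into $\mathbb{R}^{\Iab{k}}$, whose support strictly contains $\I_V^k$; because a submanifold sparse convolution can only hold values on $\I_V^k$, the result must be stored in the translated, masked form $M^k\odot T(\Aab{k}^\intercal \vab{\empty}{k})_{\text{img}}$, in which channel $s$ at node $i\in\I_V^k$ records the output at the neighbour $i+p_s\in\Iab{k}$, so that no value on $\Iab{k}\setminus\I_V^k$ is dropped.

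I would then split the verification into the two index regimes exactly as in \cref{thm: F}: on indices where the masking $M^k$ and the action of $\spco$ force zeros the identity is trivial, and on the active indices it follows from the display above together with the kernel choice. The constant-size CNN approximation is then obtained by copying the three-step argument of \cref{thm: F approx} (equivalently \cite[Theorem 18]{cosi}): one sparse-convolution layer realizes the translation and the channel contraction, the pointwise products with the translated coefficient images are replaced by the constant-size multiplication network of \cite[Corollary 13]{cosi}, and the pieces are concatenated with controlled error via \cref{lem: CNN concatenation approximation}.

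The main obstacle will be the index bookkeeping at the boundary of the local subset $\I_V^k$. One must check that the shifted coefficients $\Upsilon(\cdots,j-p_t)$ are supplied on the correct channel and node, and—crucially—that the translated masked representation of the $\Iab{k}$-supported output loses no information and is exactly the format consumed downstream by the auxiliary update $\bar\bfu^{k-1}=P_{k-1}^\intercal(\bar\bfu^{k}+\Aab{k}^\intercal\bfu^{k})$ in \cref{alg: LLMG}. This closure/boundary accounting is precisely the technical ingredient that is absent from the non-adaptive setting of \cite{cosi}.
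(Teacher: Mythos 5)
The paper contains no written proof of \cref{thm: FT} --- it is introduced only by the sentence ``In a similar way the following theorem can be proven'' --- so your attempt must be judged as the intended adaptation of \cref{thm: F} and \cref{thm: F approx}, which is exactly the route you follow. Your algebraic core is correct: for $j\in\Iab{k}$,
\begin{align*}
    \left(\Aab{k}^\intercal\,\vab{\empty}{k}\right)_j
    = \sum_{\ell=1}^6\sum_{t=1}^m \vab{j-p_t}{k}\;\Upsilon\!\left(\kappa_h(\cdot,\bfy),\T_k,\ell,j-p_t\right) C_{1,1+p_t,\ell},
\end{align*}
and your two structural observations are both right and both necessary: the coefficient $\Upsilon$ attaches to the \emph{source} index, so it must be folded in before the channel contraction (this silently corrects the printed definition of $\FkT$, in which $\bar\bfk^{(\ell)}$ multiplies \emph{after} the convolution, exactly as in $F^k$); and the output lives on $\Iab{k}$, so it must be returned in the masked translated form $M^k\odot T(\Aab{k}^\intercal\vab{\empty}{k})_\text{img}$.

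The gap is in the kernel format you then keep: a $1\times 1$ spatial kernel acting on the distance-one translated data cannot produce that right-hand side. The target is an $m$-channel object whose channel $s$ at an active node $i\in\I_V^k$ must hold the value at the neighbour $i+p_s$, i.e.\ with $w^{(\ell)}\coloneqq \Upsilon(\kappa_h(\cdot,\bfy),\T_k,\ell)\odot\vimk$ (supported on $\I_V^k$),
\begin{align*}
    \left(M^k\odot T(\Aab{k}^\intercal\vab{\empty}{k})_\text{img}\right)_{s,i}
    = \sum_{\ell=1}^6\sum_{t=1}^m C_{1,1+p_t,\ell}\; w^{(\ell)}_{\,i+p_s-p_t},
\end{align*}
a stencil sum centered at $i+p_s$, not at $i$. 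Your construction (translated products, then $1\times1$ contraction) evaluates at node $i$ only linear combinations of $\{w^{(\ell)}_{i+p_t}\}_{t,\ell}$, which is precisely the central channel $p_s=0$ and nothing more. For $p_s\neq 0$ the sum reaches offsets $p_s-p_t$ that are not stencil directions: for an axis direction $p_s$ and $p_t=-p_s$ the offset is $2p_s$, its coefficient $C_{1,1-p_s,\ell}$ does not vanish on the triangles shared by the two nodes, and $w^{(\ell)}_{i+2p_s}$ is nonzero whenever $i+2p_s\in\I_V^k$; since the $w$-values at distinct nodes are independent, no $1\times1$ kernel (of any channel count) can reproduce this. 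Nor may the non-central channels be dropped: the downstream update $\bar\bfu^{k-1}=P_{k-1}^\intercal(\bar\bfu^{k}+\Aab{k}^\intercal\bfu^{k})$ in \cref{alg: LLMG} consumes the values on all of $\Iab{k}$, which is the very reason you (correctly) insisted on the translated output format. Note that the printed theorem shares this defect --- with $K^{(\ell)}\in\mathbb{R}^{1\times m\times1\times1}$ the stated $\FkT$ is single-channel while its claimed value is $m$-channel, and the use in the proof of \cref{thm: LLMG_Approx}, where $\FkT(\bfv,\bar\bfk)$ is added to an $m$-channel image, confirms the $m$-channel reading is intended. The repair is cheap and preserves the conclusion: build the six product images $w^{(\ell)}$ with the constant-size multiplication networks, then apply a single submanifold sparse convolution with active set $\I_V^k$, six input channels, $m$ output channels and spatial kernels of size $5\times5$ (equivalently $3\times3$ kernels on the translated products); because $w^{(\ell)}$ vanishes off $\I_V^k$, the ordinary convolution sum at active sites reproduces the display above exactly. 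The network remains of constant size, so the theorem stands, but the enlarged receptive field is a necessary correction to the construction you describe.
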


\begin{remark}[CNN for prolongation and weighted restriction]\label{rem: CNN prolongation and weighted restriction}
    As noted in \cite[Remark 19]{cosi}, the prolongation and weighted restriction can be represented by the application of a $2$ strided convolution to the whole image.
    We now argue that the kernel can also be applied to nonzero entries as in $\spco$ and still be able to represent the operators.
    \begin{enumerate}
        \item Weighted restriction: When applying the kernel only to every other entry, which is nonzero in level $k$, i.e. $\I_V^k$ instead of every other entry in $\I^k_U$, output values can be set to zero, which would include values of entries between the entries used for the convolution. 
        To account for this error, the translation is applied to the image $T\vimk$ and the operations are applied to the original image on all nonzero indices of the translation.

        \item Prolongation: The prolongation can be represented as in \cite[Remark 19]{cosi}, only acting on nonzero entries of the input images and multiplied with a mask, which is $1$ for entries in $\Iab{k}$ and $0$ otherwise.
    \end{enumerate}
    
\end{remark}

\begin{proof}[Proof of \cref{thm: LLMG_Approx}] 
Let $\bfk_\bfy$ be the coefficient image of the interpolation of $\kappa_h(\cdot,\bfy)$ in $U^L$.
The proof works similarly to the proof of \cite[Theorem 6]{cosi}. We write the levelwise local multigrid \cref{alg: LLMG} $\mathrm{LLMG}: \bigtimes_{k=1}^L \mathbb{R}^{10\times \I_U^k} \to \bigtimes_{k=1}^L \mathbb{R}^{\I_U^k}$  as the concatenation of functions, which can be represented or arbitrarily approximated by CNNs.
To simplify notation, denote the masked translations as in \cref{def: translation} by $\tbfw \coloneqq M^k_{\text{img}}\odot T\utim^k$ and $\bbfw \coloneqq M^k_{\text{img}}\odot T\ubim^k$, where the mask $M^k \coloneqq 1\in\mathbb{R}^{\I_V^k}$ is applied to every channel.
\begin{enumerate}[label=(\roman*)]
    \item \textbf{Integrating the diffusion coefficient.}
    Let $K\in \mathbb{R}^{1\times 6 \times 3\times 3}$ be defined as in \cite[Lemma 15(i)]{cosi} and define
    \begin{align*}
        f_{in}: \mathbb{R}^{2\times \I_U^L} \to \mathbb{R}^{7\times \I_U^L}, \quad \begin{pmatrix}
            \bfk \\ \bff
        \end{pmatrix} \mapsto \begin{pmatrix}
            \bfk \ast K\\ \bff
        \end{pmatrix}.
    \end{align*}
    Then $f_{in}([\bfk_\bfy,\bff]) = [\Upsilon(\kappa_h(\cdot,\bfy), \T_L),\bff]$. 
    
    \item \textbf{Smoothing iteration (\cref{alg: LLMG: first update}, \cref{alg: LLMG: second update} in \cref{alg: LLMG}).}
    For each level $k=1,\dots,L$, define the smoothing function 
    $$
    f_{\text{sm}}^k: \bigtimes_{\ell=1}^k \mathbb{R}^{(4m+7)\times \I_U^\ell} \to \bigtimes_{\ell=1}^k \mathbb{R}^{(4m+7)\times \I_U^\ell}
    $$ 
    by its action on the level $k$ input images in $\mathbb{R}^{(4m+7)\times \I_U^k}$ with $\bfv,\tbfv,\bbfv \in \mathbb{R}^{m\times \I_U^k},\bff\in\mathbb{R}^{\I_U^k}, \bar\bfk\in\mathbb{R}^{6\times \I_U^k}$
    \begin{align*}
        \begin{pmatrix}
            \bfv\\ \tbfv\\ \bbfv\\0 \\ \bar\bfk\\ \bff
        \end{pmatrix} \mapsto \begin{pmatrix}
            \bfv + \omega (\bff - [F^k(\bfv + \tbfv, \bar\bfk) + \bbfv])\\ \tbfv\\ \bbfv\\0 \\ \bar\bfk\\ \bff
        \end{pmatrix}.
    \end{align*}
    Except for the operation on $\bfv$, the other inputs are directly passed to the output.
    Then, for any $\bfv\in \bigtimes_{\ell=1}^{k-1} \mathbb{R}^{(4m+7) \times \I_U^\ell}$ \cref{thm: F} and since $\tbfw_{\ell i}=0$ for $\varphi_{i+p_\ell}^k\notin V^k$,
    \begin{align*}
        &f_{\text{sm}}^k\left(\begin{bmatrix}
            \tran\uimk, \tbfw, \bbfw, 0, \Upsilon(\kappa_h(\cdot,\bfy),\T_k),  \bff\\
            \bfv
        \end{bmatrix}\right)\\
        &= \begin{bmatrix}
            \tran(\bfu^k + \omega (\bff - [\Aab{k}(\uab{\empty}k+\tbfu^k) + \bbfu^k|_{\I_V^k}]))_{\text{img}}, \tbfw, \bbfw, 0, \Upsilon(\kappa_h(\cdot,\bfy),\T_k),  \bff\\
            \bfv
        \end{bmatrix}.
    \end{align*}
    In \cref{thm: F approx} it is shown that this operation can be approximated arbitrarily well by a CNN using submanifold sparse convolutions $\spco$ on level $k$.
    
    \item \textbf{Update of $\bbfu$ and restriction (\cref{alg: LLMG: update u bar} in \cref{alg: LLMG}).}
    We also define the update $\bar{\bfu}^k$ and the restriction to the coarser level
    $$
    f_{\text{upd}}^k, f_{\text{rest}}: \bigtimes_{\ell=1}^k \mathbb{R}^{ (4m+7)\times \I_U^\ell} \to \bigtimes_{\ell=1}^k \mathbb{R}^{ (4m+7)\times \I_U^\ell}.
    $$
    The update function $f^k_{\text{upd}}$ is defined by its action on the level $k$ input images
    \begin{align*}
        \begin{pmatrix}
            \bfv\\ \tbfv\\ \bbfv \\ 0\\ \bar\bfk\\ \bff
        \end{pmatrix} \mapsto \begin{pmatrix}
            \bfv \\ \tbfv\\ \bbfv\\ \bbfv + \FkT (\bfv, \bar\bfk) \\ \bar\bfk\\ \bff
        \end{pmatrix},
    \end{align*}
    where again the other inputs are passed to the output as they are. Then, with \cref{thm: FT} it holds that
    \begin{align*}
        f_{\text{upd}}^k\left(\begin{bmatrix}
            \uimk, \tbfw,  \bbfw, 0, \bar\bfk,  \bff\\
            \bfv
        \end{bmatrix}\right) = \begin{bmatrix}
            \uimk , \tbfw,\bbfw,  M^k_{\text{img}} \odot T (\bbfu^k + \Aab{k}^\intercal\uab{\empty}k)_{\text{img}},  \bar\bfk,  \bff\\
            \bfv
        \end{bmatrix}.
    \end{align*}
    The operation can be approximated by a CNN due to \cref{thm: FT}.
    Furthermore, we define the restriction $f_{\text{rest}}$ 
    by its action on the inputs on level $k$ and $k-1$ by
    \begin{align*}
        \begin{pmatrix}
            \bfv^k\\ \tbfv^k\\ \bbfv^k \\ \bfz^k\\ \bar\bfk^k\\ \bff^k
        \end{pmatrix} \times \begin{pmatrix}
            \bfv^{k-1}\\ \tbfv^{k-1}\\ \bbfv^{k-1} \\0\\ \bar\bfk^{k-1}\\ \bff^{k-1}
        \end{pmatrix}
        \mapsto 
        \begin{pmatrix}
            \bfv^{k-1} \\ \tbfv^{k-1}\\ P^\intercal_{k-1}\bfz^k\\0 \\ \bar\bfk^{k-1}\\ \bff^{k-1}
        \end{pmatrix},
    \end{align*}
    where the weighted restriction $P^\intercal$ is applied to every image in $\bfv^k$. The first to $(k-2)$th inputs are passed to the output unaltered.
    Then, for any $\bfv,\tbfv,\bbfv\in\mathbb{R}^{m\times \I_U^k},\bff\in\mathbb{R}^{\I_U^{k-1}}, \bar\bfk\in \mathbb{R}^{6\times\I_U^{k-1}}, \bfz\in\mathbb{R}^{\bigtimes_{\ell=1}^{k-2}(4m+7)\times \I_U^\ell}$
    \begin{align*}
        (f_{\text{rest}}^k\circ f_{\text{upd}}^k) \left(\begin{bmatrix}
            T\uimk, \tbfw, \bbfw, 0, \bar\bfk^k, \bff^k\\
            \bfv, \tbfv, \bbfv, 0, \bar\bfk, \bff\\
            \bfz
        \end{bmatrix}\right)
        = \begin{bmatrix}
            \bfv, \tbfv, \ubim^{k-1}, 0, \bar\bfk, \bff\\
            \bfz
        \end{bmatrix}.
    \end{align*}
    Due to \cref{rem: CNN prolongation and weighted restriction}, $f_{\text{rest}}^k$ can be represented by a CNN.
    
    \item \textbf{Update $\tbfu$ with coarse grid solution and prolongation (\cref{alg: LLMG: update u tilde} in \cref{alg: LLMG}).}
    For the last recursion step, define the prolongation to update the auxiliary vector $\bbfu$
    \begin{align*}
        f^k_{\text{prol}}: \bigtimes_{\ell=1}^k \mathbb{R}^{(4m+7)\times \I_U^\ell} \to \bigtimes_{\ell=1}^k \mathbb{R}^{(4m+7)\times \I_U^\ell}
    \end{align*}
    by its action on input functions from level $k$ and $k-1$
    \begin{align*}
        \begin{pmatrix}
            \bfv^k\\ \tbfv^k\\ \bbfv^k \\ \bfz^k\\ \bar\bfk^k\\ \bff^k
        \end{pmatrix} \times \begin{pmatrix}
            \bfv^{k-1}\\ \tbfv^{k-1}\\ \bbfv^{k-1} \\ \bfz^{k-1}\\ \bar\bfk^{k-1}\\ \bff^{k-1}
        \end{pmatrix}
        \mapsto 
        \begin{pmatrix}
            \bfv^{k} \\ P_{k-1}(\tbfv^{k-1} + \bfv^{k-1})\\ \bbfv^k\\ \bfz^k \\ \bar\bfk^{k}\\ \bff^{k}
        \end{pmatrix} \times 
        \begin{pmatrix}
            \bfv^{k-1}\\ \tbfv^{k-1}\\ \bbfv^{k-1} \\ \bfz^{k-1}\\ \bar\bfk^{k-1}\\ \bff^{k-1}
        \end{pmatrix},
    \end{align*}
    where the prolongation is only applied to indices in $\Iab{k-1}$.
    For any $(\bfv^k, \tbfv^{k}, \bbfv^k, \bfz^k, \bar\bfk^k, \bff^k) \in\mathbb{R}^{(4m+7)\times\I_U^k}, \bfv^{k-1},\bfz^{k-1}\in\mathbb{R}^{m\times\I_U^{k-1}}, \bar\bfk^{k-1}\in\mathbb{R}^{6\times\I_U^{k-1}}, \bff^{k-1}\in\mathbb{R}^{\I_U^{k-1}}$ and $z\in\mathbb{R}^{\bigtimes_{\ell=1}^{k-2}(4m+7)\times\I^\ell_U}$ it holds that
    \begin{align*}
        f_{\text{prol}}^{k}\left(\begin{bmatrix}
            \bfv^k, \tbfv^{k}, \bbfv^k, \bfz^k, \bar\bfk^k, \bff^k\\
            \bfv^{k-1}, \tilde\bfw^{k-1}, \bar\bfw^{k-1}, \bfz^{k-1}, \bar\bfk^{k-1}, \bff^{k-1}\\
            \bfz
        \end{bmatrix}\right) = 
        \begin{bmatrix}
            \bfv^k, \tbfu^k, \bbfv^k, \bfz^k, \bar\bfk^k, \bff^k\\
            \bfv^{k-1}, \tilde\bfw^{k-1}, \bar\bfw^{k-1}, \bfz^{k-1}, \bar\bfk, \bff\\
            \bfz
        \end{bmatrix}.
    \end{align*}
    With \cref{rem: CNN prolongation and weighted restriction} it can be seen that this operation can be represented by a CNN.
    
    \item \textbf{Return solution.}
    The last required function is the output function
    $$f_{\text{out}}: \bigtimes_{\ell=1}^k \mathbb{R}^{(4m+7)\times \I_U^\ell} \to \bigtimes_{\ell=1}^k \mathbb{R}^{\I_U^\ell} 
    $$
    defined for each level $k=1,\dots, L$ by
    \begin{align*}
        \begin{pmatrix}
            \bfv^k\\ \tbfv^k\\ \bbfv^k \\ \bfz^k\\ \bfk^k\\ \bff^k
        \end{pmatrix} \mapsto \begin{pmatrix}
            \bfv^k_0
        \end{pmatrix},
    \end{align*}
    where $\bfv_0^k\in\mathbb{R}^{\I_U^k}$ denotes the first image of $\bfv^k\in\mathbb{R}^{m\times \I_U^k}$.
    Then, for $\bfu\in\mathbb{R}^{\bigtimes_{k=1}^L\I_V^k}, \bfv\in\mathbb{R}^{(2m + 8)\times\I_U^k}$ $f_{\text{out}}(\bigtimes_{k=1}^L T\bfu^k_{img} \times \bfv^k) = \bigtimes_{k=1}^L \bfu^k_{\text{img}}$.
\end{enumerate}
Eventually combining the algorithmic components described above, the $\mathrm{LLMG}$ for $k$ levels can be expressed as
\begin{align*}
    \mathrm{LMGV}^k &= f^k_{\text{sm}} \circ f^k_{\text{prol}} \circ (Id_k, LMG^{k-1}) \circ f_{\text{rest}}^k \circ f^k_{\text{upd}} \circ f^k_{\text{sm}}\\
    \mathrm{LMGV}^1 &= f_{\text{sm}}^1,
\end{align*}
where $Id_k$ passes the the inputs on level $k,\dots,L$ to the output and $\mathrm{LLMG}^m = f_{\text{out}} \circ (\bigcirc_{i=1}^m \mathrm{LMGV}^L) \circ f_{\text{in,res}} \circ f_{\text{in}}$.
Here,
\begin{align*}
    f_{in,res}(\Upsilon(\kappa_y(\cdot,\bfy),\T_L), \bff) = \begin{bmatrix}
        0,0,0,0,\bar\kappa^k
    \end{bmatrix} .
\end{align*}
Since every component can be approximated by a CNN with constant size, \cref{lem: CNN concatenation approximation} implies that the whole algorithm can be approximated by a CNN with constant size.
\end{proof}

\subsection{Estimator approximation}
The strong residual images and jump images can be approximated on any level by a CNN.
\begin{theorem}\label{thm: CNN for error estimator one level}
For any $k\in[L]$ and $\varepsilon >0$ there exists a CNN $\Psi$ 
such that for all $\kimk,\fimk,\uimk\in [-M,M]^{\I_U^k}$
\begin{align*}
    \norm{\Psi(\uimk,\kimk_h,\fimk) - \left(\stre^2_{k,T^q},\jure^2_{k,T^q} \right)_{q=1,2}}_\infty \leq \varepsilon,
\end{align*}
where the strong residual images and jump images are defined as in \cref{def: estimator images} with respect to the input coefficients.
For $\Psi$, there exist three fixed bias and kernel sizes with width and height at most $5$.
\end{theorem}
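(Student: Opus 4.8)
The plan is to show that each output pixel value is an exact polynomial of bounded degree in a fixed small patch of the inputs $(\uimk,\kimk_h,\fimk)$, and then to realise that polynomial by composing fixed-size CNN blocks. First I would exploit that on the uniform mesh $\T_k$ every coefficient image represents a piecewise linear function, so $\nabla u_h$ and $\nabla\kappa_h$ are \emph{constant} on each triangle and are obtained from the nodal values by finite differences, i.e. by convolution with fixed small kernels. Since $u_h$ is affine on each triangle, $\Delta u_h=0$ there and hence $\nabla\cdot(\kappa_h\nabla u_h)=\nabla\kappa_h\cdot\nabla u_h=:g$ is a constant on $T^q_{k,i}$ that is \emph{bilinear} in the local $u$- and $\kappa$-values. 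Writing $r:=f+\nabla\cdot(\kappa_h\nabla u_h)$, the residual is affine on $T^q_{k,i}$ with nodal values $f_1+g,f_2+g,f_3+g$, where $f_1,f_2,f_3$ are the three local entries of $\fimk$; using the $P_1$ mass matrix $M_T$ one obtains $\norm{r}_{L^2(T^q_{k,i})}^2=v^\intercal M_T v$ with $v=(f_1+g,f_2+g,f_3+g)^\intercal$. Thus $(\stre^2_{k,T^q})_i=h_{T^q_{k,i}}^2\norm{r}_{L^2(T^q_{k,i})}^2$ is a polynomial of total degree at most four in the local patch, with $i$-independent coefficients by translation invariance of the uniform grid.

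For the jump image I would argue analogously, edge by edge. On an interior edge $\gamma$ the continuity of $\kappa_h$ gives $\jump{\kappa_h\nabla u_h}=\kappa_h\,(\partial_{n}u_h^{(1)}+\partial_{n}u_h^{(2)})$, where the normal-derivative jump $J_\gamma:=\partial_{n}u_h^{(1)}+\partial_{n}u_h^{(2)}$ is a constant that is linear in the local $u$-values (a difference of the two constant triangle gradients dotted with the edge normal), while $\kappa_h|_\gamma$ is affine with endpoint values $\kappa_1,\kappa_2$. Hence the integrand is affine along $\gamma$ with nodal values $\kappa_1J_\gamma,\kappa_2J_\gamma$, so $\norm{\jump{\kappa_h\nabla u_h}}_{L^2(\gamma)}^2$ is the associated one-dimensional edge-mass quadratic form. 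Summing the contributions of the three edges of $T^q_{k,i}$ and multiplying by $h_{T^q_{k,i}}$ yields $(\jure^2_{k,T^q})_i$ as another translation-invariant polynomial of total degree at most four. Collecting the two reference triangles $q=1,2$ of \Cref{fig: estimator triangles} fixes the stencil: every node entering the residual and all three edge jumps of $T^1_{k,i},T^2_{k,i}$ lies within a $5\times5$ neighbourhood of $i$, matching the kernel bound in the statement.

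Next I would turn these exact expressions into CNN components. Every linear step --- the finite-difference gradients, the assembly of the nodal residual values $f_j+g$, the formation of the edge factors $\kappa_mJ_\gamma$, and the summation over the three edges and two reference triangles --- is a convolution with a fixed kernel of width and height at most $5$ together with a bias. Every nonlinear step is a product of two scalars, approximated to arbitrary accuracy by the constant-size multiplication network of \cite[Corollary 13]{cosi}; since a degree-four polynomial is a fixed finite sum of products of at most four linear forms, the whole map is a fixed-depth composition of such linear layers and multiplication blocks whose size is independent of $\varepsilon$, giving the structure of three fixed bias and kernel shapes of width and height at most $5$ asserted in the statement.

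Finally I would assemble the computation as a concatenation $f_n\circ\cdots\circ f_1$ of finitely many maps, each either exactly a convolution or approximable by a constant-size CNN to any prescribed tolerance, and invoke \cref{lem: CNN concatenation approximation}: given $\varepsilon>0$ it supplies tolerances for the individual multiplication blocks so that the resulting single CNN $\Psi$ satisfies $\norm{\Psi(\uimk,\kimk_h,\fimk)-\left(\stre^2_{k,T^q},\jure^2_{k,T^q}\right)_{q=1,2}}_\infty\le\varepsilon$ on $[-M,M]^{\I_U^k}$. I expect the main obstacle to be the geometric bookkeeping of the jump term: one must identify, for each of the two reference triangles at $i$, the three neighbouring triangles whose constant gradients enter the normal-derivative jumps, pair them with the correct continuous $\kappa_h$ endpoint values, and encode this consistently in the $q=1,2$ two-image representation and the fixed $5\times5$ stencil. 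Once the stencil and the affine nodal values are pinned down, the rest reduces to the polynomial-approximation machinery already available from the multiplication corollary and \cref{lem: CNN concatenation approximation}.
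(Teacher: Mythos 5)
Your proposal is correct and follows essentially the same route as the paper's proof: both expand the strong residual and jump contributions as translation-invariant polynomials of degree at most four in the local nodal values of $\bfu$, $\bfk$ and $\bff$ (the paper's coefficients $e_{mn}$, $c_j$, $d_{mn}$ and its edge factorization $\norm{\kappa_h}_{L^2(K)}^2\,\jump{u_h}_{(K)}^2$ are exactly your mass-matrix and $\kappa_m J_\gamma$ expressions), then realize the linear parts as fixed small convolutions and the products via the constant-size multiplication network, concluding with \cref{lem: CNN concatenation approximation}. The only cosmetic difference is that you make the identity $\nabla\cdot(\kappa_h\nabla u_h)=\nabla\kappa_h\cdot\nabla u_h$ (from $\Delta u_h=0$ on each triangle) explicit, which the paper uses implicitly in its expansion.
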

\begin{proof}
We first note that for $q=1,2$ and each triangle $T = T^q_{k,i}$ as illustrated in \Cref{fig: estimator triangles},
\begin{align*}
    (\stre^2_{k,T^q})_{i} = &\norm{f_h + \divg (\kappa_h \nabla u_h)}_{L^2(T)}^2 \\
    = &\norm{f_h}^2_{L^2(T)} + 2\scpr{ f_h, \divg (\kappa_h \nabla u_h) }_{L^2(T)} + \norm{\divg (\kappa_h \nabla u_h)}^2_{L^2(T)}.
\end{align*}
Here,
\begin{align*}
    \norm{f_h}_{L^2(T)}^2&= \sum_{m,n\in \no{T}} \bff_{m}\bff_{n} e_{mn} && \text{for} \quad e_{mn} \coloneqq \int_T \varphi_m\varphi_n \mathrm{d}x,\\
    \scpr{f_h, \divg (\kappa_h \nabla u_h)}_{L^2(T)}&= \sum_{j\in \no{T}} \bff_j c_j \sum_{m,n\in \no{T}} \bfk_m\bfu_n  d_{mn} &&\text{for}\quad c_j \coloneqq \int_T{\varphi_j}\mathrm{d}x, \quad d_{mn} \coloneqq \nabla \varphi_m \cdot \nabla \varphi_n,\\
    \norm{\divg (\kappa_h \nabla u_h)}_{L^2(T^q)}^2 &=  \left( \sum_{m,n\in \no{T}} \bfk_m\bfu_n  d_{mn}\right)^2 \int_T \mathrm{d}x.
\end{align*}
Note that $c_m$, $d_{jm}$ and $\int_T \mathrm{d}x$ are independent of the node $i$, i.e., $d_{mn} = d_{m-i,n-i}$, $c_j = c_{j-i}$, $e_{mn} = e_{m-i,n-i}$.
Furthermore,
\begin{align*}
    (\jure_{k,T^q}^2)_i = \norm{\jump{ \kappa_h \nabla u_h}}_{L^2(\partial T^{(q)})}^2 
    &= \sum_{K\in \ed{T}} \norm{\kappa_h}_{L^2(K)}^2 \jump{u_h}^2_{(K)}.
\end{align*}
With triangle $\tilde{T}_K$ such that $K \in \ed{\tilde{T}_K}$ for $K\in\ed{T}$,
\begin{align*}
    \norm{\kappa_h}_{L^2(K)}^2 &= \frac{\sqrt{2}h}{3}\sum_{j,m\in \no{K}} \bfk_{j}\bfk_m,\\
    (\jump{u_h}_{(K)})^2 &= \left(\frac1h\sum_{j\in \no{\tilde{T}}\cup \no{T}} (-1)^{1- \chi_K (j)} \bfu_j\right)^2.
\end{align*}
Since shifting and addition can be represented by convolutional kernels, multiplication and squaring can be approximated by the concatenation of a convolutional kernel, the application of the activation function and another convolutional kernel, every operation can be represented or approximated by a CNN.
Since the addition and shifting always only includes nodes in the vicinity of the considered node, the kernels have a small bounded width and height.
\end{proof}

\subsection{Adaptive FEM approximation}\label{section: afem approx}
The solver and the estimator are combined with a marking strategy and a refinement implemented with masks to arrive at a CNN approximation of the whole AFEM \cref{alg: AFEM}.

\begin{proof}[Proof of \cref{thm: AFEM approximation}]
    To show that \cref{alg: AFEM} can be approximated entirely, the required steps are considered separately.
    Starting with $V = U^1$ and $\bfu = 0$, the algorithm consist of the following steps.
    \begin{enumerate}[label=(\roman*)]
        \item \textbf{Update $\bfu \gets \bfu + \bfv$.} 
        
        By \cref{cor: sol approx}, the solution to $A_\bfy\bfv = \bff - A_\bfy \bfu$ can be approximated up to any $\varepsilon_\text{sol}$ by a CNN $\Psi_{\text{sol}}$ with input images ${\bfk_\bfy}_{\text{img}},(\bff-A_\bfy\bfu)_\img$ and number of parameters bounded by $M(\Psi_{\text{sol}}) \lesssim L\log(\varepsilon_{\text{sol}}^{-1})/\log(c_L^{-1})$.
        Using \cref{lem: CNN concatenation approximation}, for any $\varepsilon_{\text{cor}}>0$ there exists a CNN $\Psi_{\text{cor}}$ with parameters bounded by $M(\Psi_{\text{cor}}) \lesssim L\log(\varepsilon_{\text{cor}}^{-1})/\log(c_L^{-1})$ such that
        \begin{align*}
            \norm{\Psi_{\text{cor}}\left(\bigtimes_{k=1}^L(\uimk,\utimk,\ubimk,\Upsilon(\kappa_h,\T_k), \bff^k)\right) - (\bfu + \bfv)}_{\infty} \leq \varepsilon_{\text{cor}}.
        \end{align*}
        
        \item \textbf{Local error estimator $\eta^2_T$.}
        
        The error estimator for the updated solution $\bfu$ can be approximated with \cref{thm: estimator approx}.
        For any $\varepsilon_{\text{eta}}>0$ there exists a CNN $\Psi_{\text{est}}$ with number of parameters bounded by $M(\Psi_{\text{est}})\lesssim L$ such that
        \begin{align}\label{eqation: estimator approximation in AFEM approximation proof}
            \norm{\Psi_{\text{est}}\left(\bigtimes_{k=1}^L (\uimk,\kimk, \bff^k_{\text{img}})\right)[\ell]-  \eta_\ell^2}_\infty \leq \varepsilon_{\text{eta}}.
        \end{align}
        
        \item \textbf{Marking.}
        
        Different marking strategies can be considered. Here, a threshold marking strategy as in \cref{definition: threshold marking} is used.
        The operator mapping estimator images to marker images inside a CNN is defined by the mapping $\eta_k^2 \in \mathbb{R}^{2\times\I_U^k}\mapsto M^k \in \{0,1\}^{2\times \I_U^k}$ on each level $k=1,\dots, L$, where for $q=1,2$ $M^k[q]_{i} = 1$ if $\eta_k^2[q]_i > \delta_k$ and $M^k[q]_i=0$ otherwise for $i\in\I_U^k$.
        Let the CNN $\Psi_{\text{mark}}$ be composed of
        a convolutional layer subtracting the threshold and approximation error of the error estimator $\delta_k - \varepsilon_{\text{eta}}$ followed by the application of a heavyside activation function $h_{0,1}$.
        It then maps the approximated error estimator $\tilde\eta_k^2$ to marker images $\Psi_{\text{mark}}(\tilde{\eta}_k^2)\in \{ 0,1 \}^{2\times \I_U^k}$, such that the inequality
        $$M^k\leq \Psi_{\text{mark}}(\tilde\eta_k^2)$$
        holds entrywise.
        This can be derived by considering that $M^k[q]_i = 1$ implies $\eta_k^2[q]_i>\delta_k$ and $\tilde\eta_k^2[q]_i \geq \eta_k^2[q]_i - \varepsilon_{\text{eta}}$ holds with \eqref{eqation: estimator approximation in AFEM approximation proof}.
        This yields $\tilde\eta_k^2[q]_i - (\delta_k - \varepsilon_{\text{eta}}) >0$ and therefore $\Psi_{\text{mark}}(\tilde\eta_k^2)[q]_i = h_{0,1}(\tilde\eta_k^2[q] - (\delta_k-\varepsilon_{\text{eta}})) = 1$ for $q=1,2$ and $i\in\I_U^k$ for $k=1,\dots, L$.
        Therefore, all triangles marked with the threshold marking strategy with the true estimator of the true solution are also marked by the CNN.
        As discussed in \cref{remark: other marking strategies in AFEM approx}, other global marking strategies such as D\"orfler marking could be implemented outside of the network.
        In each step the chosen strategy has to incorporate the error in the estimator approximation. For instance, in addition to the markings based on the selected strategy and the approximated estimator, one could mark all triangles for which the approximated error estimator is larger than the lowest approximated estimator of the already marked triangles minus twice the estimated approximation error.
        
        \item \textbf{Refinement.} 
        
        The refinement is incorporated in the CNN using the marking masks $M^k\in\{0,1\}^{2\times \I_U^k}$ on each level, corresponding to piecewise constant functions discretized as in \eqref{eq: DG0}. These are mapped to a mask $M_V^{k+1}\in\{0,1\}^{\I_U^{k+1}}$ corresponding to continuous piecewise linear functions as in \eqref{eq: v decomposition}. To mimic the refinement, they should fulfill $(M_V^{k+1})_i = 1$ if $\supp\varphi_i^{k+1}\cap T_{kj}^q \neq \emptyset$ for some $q=1,2, j\in\I_U^k$ with $M^k[q]_j = 1$ as described in \cref{section: refinement}. 
        
        This mapping can be constructed in two steps. First, $M^k$ is mapped to some $\bar{M}^{k+1}\in\mathbb{R}^{\I_U^{k+1}}$, which is $>0$ on on the nodes corresponding to the required $\varphi^{k+1}_i$ and zero otherwise. This can be done by applying a transpose convolution with stride $2$ and a kernel of size $2\times 1 \times 3 \times 3$. Secondly, the heaviside function $h_{0,1}$ can be applied entrywise to arrive at the desired $0/1$-masks $M_V^{k+1}$.
        
        The derived function spaces satisfy $\tilde V^k\supset V^k$ since the approximate marking covers the exact marking based on the true error estimator of the true Galerkin solution on the current space. 
    \end{enumerate}
    
    We can now combine all above estimations for the components of the AFEM.
    Concatenating these steps into one CNN as in~\cref{lem: CNN concatenation approximation} leads to a CNN $\Psi$ such that
        \begin{align*}
            &\norm{\fem\left(\Psi\left(\bigtimes_{k=1}^L(\uimk,\utimk,\ubimk,\Upsilon(\kappa_h,\T_k), \bff^k)\right)\right) - u}_{H^1_0} \\
            &\leq \norm{\fem\left(\Psi\left(\bigtimes_{k=1}^L(\uimk,\utimk,\ubimk,\Upsilon(\kappa_h,\T_k), \bff^k)\right)\right) - \fem(\bfu_{\tilde{V}})}_{H^1_0} + \norm{\fem(\bfu_{\tilde{V}}) - u}_{H^1_0}\\
            &\leq \varepsilon_{\text{cor}} + \norm{ \fem (\bfu_{V}) - u}_{H_0^1}\\
            & = \varepsilon_{\text{cor}} + \norm{\mathrm{AFEM}(U^1,L) - u}_{H^1_0},
        \end{align*}
        where $\bfu_{W}$ is defined as the coefficients of the Galerkin projection of $u(\cdot,\bfy)$ onto $W$ for some $W\subset H_0^1$, $V$ is the space constructed by the $\mathrm{AFEM}$ after $L\in\mathbb{N}$ steps and $\tilde V$ is the the space constructed by the CNN after the same number of steps.
\end{proof}

\end{document}